\newcommand{\norm}[1]{\left \lVert #1 \right \rVert}
\def\eqref#1{equation~\ref{#1}}
\def\1{\bm{1}}
\DeclareMathAlphabet{\mathsfit}{\encodingdefault}{\sfdefault}{m}{sl}
\SetMathAlphabet{\mathsfit}{bold}{\encodingdefault}{\sfdefault}{bx}{n}
\newcommand{\E}{\mathbb{E}}
\newcommand{\Xscr}{\mathcal{X}}
\newcommand{\Yscr}{\mathcal{Y}}
\newcommand{\Hscr}{\mathcal{H}}
\newcommand{\R}{\mathbb{R}}
\theoremstyle{definition}
\newtheorem{exmp}{Example}[section]
\newtheorem{proposition}{Proposition}[section]
\newtheorem{theorem}{Theorem}[section]
\newtheorem{lemma}{Lemma}[section]
\newtheorem{corollary}{Corollary}[section]
\title{Self-Normalized Resets for Plasticity in Continual Learning\thanks{Code: \url{https://github.com/ajozefiak/SelfNormalizedResets}}}
\author{Vivek F. Farias  \\
Sloan School of Management\\
Massachusetts Institute of Technology \\
\texttt{vivekf@mit.edu} \\
\and
Adam D. Jozefiak \\
Operations Research Center \\
Massachusetts Institute of Technology \\
\texttt{jozefiak@mit.edu} \\}
\date{This version: \today}
\begin{document}
\maketitle

\begin{abstract}
Plasticity Loss is an increasingly important phenomenon that refers to the empirical observation that as a neural network is continually trained on a sequence of changing tasks, its ability to adapt to a new task diminishes over time. We introduce Self-Normalized Resets (SNR), a simple adaptive algorithm that mitigates plasticity loss by resetting a neuron’s weights when evidence suggests its firing rate has effectively dropped to zero. Across a battery of continual learning problems and network architectures, we demonstrate that SNR consistently attains superior performance compared to its competitor algorithms. We also demonstrate that SNR is robust to its sole hyperparameter, its rejection percentile threshold, while competitor algorithms show significant sensitivity. SNR’s threshold-based reset mechanism is motivated by a simple hypothesis test that we derive. Seen through the lens of this hypothesis test, competing reset proposals yield suboptimal error rates in correctly detecting inactive neurons, potentially explaining our experimental observations. We also conduct a theoretical investigation of the optimization landscape for the problem of learning a single ReLU. We show that even when initialized adversarially, an idealized version of SNR learns the target ReLU, while regularization based approaches can fail to learn. 
\end{abstract}

\section{Introduction}

{\em Plasticity Loss} is an increasingly important phenomenon studied broadly under the rubric of continual learning \citep{dohare2024loss}. This phenomenon refers to the empirical observation that as a neural network is continually trained on a sequence of changing tasks, its ability to adapt to a new task diminishes over time. While this is distinct from the problem of catastrophic forgetting (also studied under the rubric of continual learning \citep{goodfellow2013empirical, kirkpatrick2017overcoming}), it is of significant practical importance. In the context of pre-training language models, an approach that continually trains models with newly collected data is preferable to training from scratch \citep{ibrahim2024simple,wu2024continual}. On the other hand, the plasticity loss phenomenon demonstrates that such an approach will likely lead to models that are increasingly unable to adapt to new data. Similarly, in the context of reinforcement learning using algorithms like TD, where the learning tasks are inherently non-stationary, the plasticity loss phenomenon results in actor or critic networks that are increasingly unable to adapt to new data \citep{lyleunderstandingRL}. Figure~\ref{fig:PM_intro} illustrates plasticity loss in the `Permuted MNIST' problem introduced by \citet{goodfellow2013empirical}. 


 One formal definition of plasticity measures the ability of a network initialized at a specific set of parameters to fit a random target function using some pre-specified optimization procedure. In this sense, random parameter initializations (eg. \citet{lyle2024disentangling}) are known to enjoy high plasticity. This has motivated two related classes of algorithms that attempt to mitigate plasticity loss. The first explicitly `resets' neuron's that are deemed to have low `utility' \citep{dohare2023loss,sokar2023dormant}. A reset re-initializes the neurons input weights and bias according to some suitable random initialization rule, and sets the output weights to zero; algorithms vary in how the utility of a neuron is defined and estimated from online data. A second class of algorithms perform this reset procedure implicitly via regularization \citep{ash2020warm,kumar2023maintaining}. These latter algorithms differ in their choice of what to regularize towards, with choices including the original network initialization; a new randomly drawn initialization; or even zero. The aforementioned approaches to mitigating plasticity loss attempt to adjust the training process; other research has studied the role of architectural and optimizer hyperparameter choices. Across all of the approaches to mitigating plasticity loss described above, no single approach is yet to emerge as both robust to hyperparameter choices, and simultaneously performant across benchmark problems.

Several correlates of plasticity loss have been identified such as neuron inactivity, feature or weight–rank collapse, increasing weight norms, and loss of curvature in the loss surface \citep{dohare2021continual, lyle2023understanding, sokar2023dormant, lewandowski2023curvature, kumar2020implicit}. Despite the extensive documentation of the phenomenon, the literature remains almost entirely empirical, offering little theoretical insight into \textit{why} plasticity fades or \textit{how} best to restore it. To address this gap, we analyze the simplest non-trivial continual learning model: a single ReLU-activated neuron trained by gradient descent on adversarially selected targets. We show that gradient descent, even with $L_2$ regularization, incurs non-vanishing average regret, whereas gradient descent augmented with an idealized reset-oracle achieves $\mathcal{O}(d^2\ln{T}/T)$ average regret. Although the problem of learning a single ReLU-activated neuron has been studied extensively, no prior work considers the case where targets are chosen adversarially with knowledge of the initialization. Our paper is the first to analyze this problem under this adversarial regime.

Having thus characterized plasticity loss for the fundamental primitive of a single ReLU-activated neuron, we turn to the practical question of how to detect when a neuron has effectively ``died'' as training data arrives sequentially. To this end, we cast the problem of detecting a dead neuron as an optimal hypothesis‐test problem. Specifically, when minimizing type-1 and type-2 error and a delay penalty of detecting a dead neuron, we show that the optimal reset criterion is to define a reset-threshold that is proportional to the neuron's nominal firing rate $p$. This optimal solution is precisely our self-normalized resets (SNR) algorithm. In contrast to SNR, existing reset-criteria define a single reset-threshold or reset-frequency for a neural network. We make evident the efficacy of SNR in relation to such reset-criteria by showing that for the problem of detecting neuron death of two neurons with nominal firing rates $p_1 < p_2$, the error rate under an (optimal) fixed threshold grows arbitrarily larger than the error rate under SNR as (a) the penalty for delays shrinks to zero or (b) the gap between the nominal firing rates $p_1$ and $p_2$ widens. Concretely, we introduce the SNR algorithm and distill the key intuition behind it as follows:


Given some point process consider the task of distinguishing between the hypotheses that this point process has a positive rate (the null hypothesis), or a rate that is identically zero with a penalty for late rejection or acceptance. An optimal test here takes the following simple form: 
we reject the null hypothesis as soon as the time elapsed without an event exceeds some percentile of the inter-arrival time under the null hypothesis and otherwise accept immediately upon an event. Viewing the firing of a neuron as such a point process, we propose to reset a neuron based on a rejection of the hypothesis that the the neuron is firing at a positive rate. We use the histogram of past inter-firing times as a proxy of the inter-arrival time distribution under the null hypothesis. This exceedingly simple algorithm is specified by a single hyperparameter: the rejection percentile threshold. We refer to this procedure as self-normalized resets (SNR) and argue this is a promising approach to mitigating plasticity loss.



\subsection{Contributions}

With the objective of studying the phenomenon of plasticity loss from a theoretical standpoint and thus motivating an efficient remedy, we make the following contributions.

\begin{enumerate}
    \item \textbf{The Self-Normalized Resets Algorithm.} We introduce the Self-Normalized Resets (SNR) algorithm and argue its efficacy from both a theoretical and empirical standpoint.
    \item \textbf{Characterizing Plasticity Loss and the Efficacy of Neuron-Resets.} We conduct a theoretical investigation of the optimization landscape for the problem of learning a single RELU-activated neuron with gradient descent. While existing literature has solely considered the case of non-adversarial target weights in which the target weights are first fixed and then the network weights are sampled, we introduce the setting of adversarial target weight selection in which the network weights are first sampled and then the target weights are selected adversarially with knowledge of the network weight initialization. Crucially, this setting captures the essence of continual learning, whereby consecutive learning tasks can be unrelated or even adversarially selected. Under this setting, we show in Theorem \ref{cor:NegativeResultFullIntro} that gradient descent, even with $L_2$ regularization, attains non-vanishing average regret. In addition, in Theorem \ref{thm:PositiveRegretGuaranteeIntro} we show that gradient descent with access to an idealized reset-oracle attains $O(d^2\ln{T}/T)$ expected average regret. Our theoretical analysis characterizes plasticity loss, for the simple primitive of a single neuron, as being a consequence of network weights learned for one task yielding a poor local minimum for a subsequent task. Moreover, our analysis establishes neuron resets to be an effective remedy to plasticity loss while $L_2$-regularization is insufficient. 
    \item \textbf{Deriving an Optimal Neuron-Reset Mechanism.} To answer the question of what is an effective reset scheme, we model the problem of detecting neuron death as that of an optimal hypothesis test with the objective of minimizing type-1 and type-2 error and a delay penalty (scaled by $\lambda$). In Proposition \ref{prop:prop1} we show that for a neuron with a nominal firing rate $p > 0$, the optimal reset threshold is the $1-\lambda(p-\lambda)^{-1}$ percentile of a Geometric$(p)$ distribution; this is precisely the SNR reset heuristic. We juxtapose the SNR reset criterion with existing neuron-reset schemes, which define a fixed reset-threshold or reset-frequency $r^*$, by considering the problem of detecting neuron death of two neurons with nominal firing rates $p_1 < p_2$. Specifically, in Proposition \ref{prop:prop2} we show that the total error incurred by a fixed threshold $r^*$ to that under SNR scales like \[
\Omega\left(
\exp
\left(
\log(\alpha(1-\alpha)^{-1})
\left( 
-\frac{1}{2} + \frac{1}{2} \frac{\log (1-p_1)}{\log(1-p_2)}
\right)
\right)
\right)
\] where $\alpha$ relates the delay penalty factors according to $\lambda_1 = \alpha p_1$ and $\lambda_2 = \alpha p_2$.

    \item \textbf{Empirically Validating SNR.} We demonstrate superior performance on four benchmark problems studied in \citep{dohare2023loss,kumar2023maintaining} using MLP, CNN, and ViT architectures. Interestingly, there is no single closest competitor to SNR across these problems. Many competing approaches also show significant sensitivity to the choice of hyperparameters; SNR does not. Additionally we introduce a new problem, Permuted Shakespeare, to elucidate similar plasticity loss phenomena in the context of language models, and show similar relative merits for SNR.
\end{enumerate}

 
 \subsection{Related Literature} 

 
 The phenomenon of plasticity loss was discovered in the context of transfer learning \citep{ash2020warm, zilly2021plasticity, achille2017critical}. \citet{achille2017critical} showed that pre-training a network on blurred CIFAR images reduces its ability to learn on the original images. In a similar vein, \citet{ash2020warm} showed that pre-training a network on 50\% of a training set followed by training on the complete training set reduces accuracy relative to a network that forgoes the pre-training step. More recent literature has focused on problems that induce plasticity loss while training on a sequence of hundreds of changing tasks, such as Permuted MNIST and Continual ImageNet in \citet{dohare2021continual}, capturing the necessity to learn indefinitely. 
 
 
 
 \textbf{Correlates of Plasticity Loss.}  The persistence of plasticity loss across a swathe of benchmark problems has elucidated a search for its cause. Several correlates of plasticity loss have been well observed, namely neuron inactivity, feature or weight rank collapse, increasing weight norms, and loss of curvature in the loss surface \citep{dohare2021continual, lyle2023understanding, sokar2023dormant, lewandowski2023curvature, kumar2020implicit}. The exact cause of plasticity loss remains unclear and \citet{lyle2023understanding} have shown that for any correlate an experiment can be constructed in which its correlation with plasticity loss is negative. Nonetheless, these correlates have inspired a series of algorithms and interventions with varying degrees of success in alleviating the problem. However, none is consistently performant across architectures and benchmark problems. 
 
 
 \textbf{Reset Methods.} Algorithms that periodically reset inactive or low-utility neurons have emerged as a promising approach \citep{dohare2023loss, sokar2023dormant, nikishin2022primacy}. Continual Backprop (CBP) \citep{dohare2023loss} is one such method which tracks a utility for each neuron, and according to some reset frequency $r$, it resets the neuron with minimum utility in each layer. CBP's utility is a discounted average product of a neuron's associated weights and activation, a heuristic inspired by the literature on network pruning. Another algorithm is ReDO \citep{sokar2023dormant}, where on every ${1}/{r}^{\text{th}}$ mini-batch, ReDO computes the average activity of each neuron and resets those neurons whose average activities are small relative to other neurons in the corresponding layer, according to a threshold hyperparameter. Two defining characteristics of CBP and ReDO are a fixed reset rate and that neurons are reset relative to the utility of other neurons in their layer. As we will see, these proposals result in sub-optimal error rates, in a sense we make precise later. 
 
 \textbf{Regularization Methods.} L2 regularization has been shown to reduce plasticity loss, but is insufficient in completely alleviating the phenomenon \citep{dohare2021continual,lyle2023understanding}. While L2 regularization limits weight norm growth during continual learning, it can exacerbate weight rank collapse due to regularization towards the origin. One successful regularization technique is Shrink and Perturb (S\&P) \citep{ash2020warm}, which periodically scales the network's weights by a shrinkage factor $p$ followed by adding random noise to each weight with scale $\sigma$. Another approach is to perform L2 regularization towards the initial weights referred to as L2 Init \citep{kumar2023maintaining}. These methods can be viewed as variants of L2 regularization that regularize towards a random initialization and the original initialization, respectively. These methods limit the growth of weight norms while maintaining weight rank and neuron activity by regularizing towards a high-plasticity parameterization.
 
 \textbf{Architectural and Optimizer Modifications.} Architectural modifications such as layer normalization \citep{ba2016layer} and the use of concatenated ReLU activations have been shown to improve plasticity to varying degrees across network architectures and problem settings \citep{lyle2023understanding, kumar2023maintaining}. Additionally, tuning Adam hyperparameters to improve the rate at which second moment estimates are updated has been explored with some success in \citet{lyle2023understanding}. 

 \textbf{Learning a single ReLU-Activated Neuron with Gradient Descent.} Learning a single ReLU-activated neuron with gradient-based methods is a fundamental problem that has been extensively studied in the theory literature. Early works analyzed the bias-free case, for example, \cite{yehudai2020learning} showed that with a well-spread input distribution (e.g. uniform over a sphere), gradient methods can recover the target weight vector with high probability. \cite{vardi2021learning} extended this line of work to a neuron with bias, finding that the addition of a bias significantly complicates the optimization landscape. Nevertheless, under appropriate assumptions on the data distribution (such as isotropic or sufficiently full support in every direction) and on the initialization, they proved that gradient descent will still converge to the global minimum, with high probability at a linear rate. Beyond single neurons, more general models like max-affine regression (which includes learning a ReLU as a special case) have also been studied – \cite{kim2024max} show that first-order methods can achieve linear convergence for such piecewise-linear regression problems, assuming i.i.d. sub-Gaussian inputs and a suitable initialization. Importantly, all these results assume the target neuron’s parameters are fixed a priori (non-adversarial) and the data are drawn from a benign distribution. To our knowledge, no prior work addresses the setting of adversarially selected target weights – this paper is the first to analyze gradient descent for learning a ReLU neuron in an adversarial target regime, where the ground-truth weight vector may be chosen by an adversary (with knowledge of the initialization).

\subsection{Roadmap of the Paper}

We outline the roadmap for the rest of this paper.

 In Section \ref{sec:algo} we formally define plasticity loss as an empirical phenomenon that corresponds to an increasing average expected regret over a stream of training data. In addition, we formally define the Self-Normalized Resets algorithms and present a simple benchmark problem, Permuted MNIST, that illustrates the phenomenon of plasticity loss and the efficacy of SNR.
 
 In Section \ref{sec:theory} we present our theoretical analysis. In Section \ref{sec:LearningReLUPrelim} we formally introduce the problem of learning a single ReLU-activated neuron. In Section \ref{sec:AdversarialTargetWeights} we formally define the setting of adversarial target weights and present Theorem \ref{cor:NegativeResultFullIntro}. In Section \ref{sec:ResetOracle} we introduce the idealized reset oracle, define the gradient descent algorithm augmented with such a reset oracle, and present Theorem \ref{thm:PositiveRegretGuaranteeIntro}. In Section \ref{sec:hyp_test} we formally introduce the hypothesis test problem, that captures the problem of neuron-death detection, and we present Proposition \ref{prop:prop1} and Proposition \ref{prop:prop2}.

 In Section \ref{sec:experiments} we present our empirical results demonstrating the efficacy of SNR across a series of continual learning benchmarks and architectures. In addition, we discuss pertinent phenomena that we observe in our experiments.

 Finally, in Section \ref{sec:discussion} we conclude with a Discussion and Limitations section, summarizing the our key contributions and noting limitations.
 
\section{Algorithm}\label{sec:algo}

To make ideas precise, consider a sequence of training examples $(X_t,Y_t) \in \Xscr \times \Yscr$, drawn from some distribution $\mu_t$. Denote the network by $f: \Xscr \times \Theta \rightarrow \Yscr$, and let $l: \Yscr \times \Yscr \rightarrow \R$ be our loss function. Denote by $H_t \in \Hscr_t$, the history of network weights and training examples up to time $t$, and assume access to an optimization oracle $O_t: \Hscr_{t-1} \rightarrow \Theta$ that maps the history of weights and training examples to a new set of network weights. As a concrete example, $O_t$ might correspond to stochastic gradient descent. 

Let $\theta^*_t$ minimize 
$\E_{\mu_t}
[ 
l(f(X_t; \theta),Y_t)
]
$, denote $\Theta_t = O_t(H_{t-1})$, and consider average expected regret 
\[
\frac{1}{T} 
\sum_t 
\E_{\mu_t}
[ 
l(f(X_t; \Theta_t),Y_t)
]
 - 
\E_{\mu_t}
[ 
l(f(X_t; \theta^*_t),Y_t)
]  
\] 
{\em Plasticity loss} describes the phenomenon where, for certain continual learning processes $\Theta_t$, such as those corresponding to SGD or Adam, average expected regret increases over time, even for benign choices of $\mu_t$.\footnote{Specifically, one canonical choice of the sequence of measures $\mu_t$ considered in all of the literature on this topic is dividing $T$ into intervals, each of length, say $\Delta$, and having $\mu_t$ be constant and equal to $\mu_i$ over the $i$th such interval. If $\mu_i$ is itself drawn randomly from some distribution of measures and $\Delta$ scales faster than a constant with $T$, we would expect average expected regret to scale like a constant; this is certainly the case if the optimization problem defining $\theta_t^*$ is convex; in which case that constant is zero.}    
To make these ideas concrete, it is worth considering an example of the above phenomenon reported first by \citet{dohare2021continual}.

%

\begin{exmp}[The Permuted MNIST problem]
\label{eg:mnist}
Consider a sequence of `tasks' presented sequentially to SGD, wherein each task consists of 10000 images from the MNIST dataset with the pixels permuted. SGD trains over a single epoch on each task before the subsequent task is presented. Figure~\ref{fig:PM_intro} measures average accuracy on each task; we see that average accuracy decreases over tasks. The figure also shows a potential correlate of this phenomenon: the number of `dead' or inactive neurons\footnote{this notion is formalized in Section~\ref{sec:hyp_test}} in the network increases as training proceeds, diminishing the network's effective capacity. 
\end{exmp}

\begin{figure}[htbp]\label{fig:PM}
        \centering
        \includegraphics[width=0.38\linewidth]{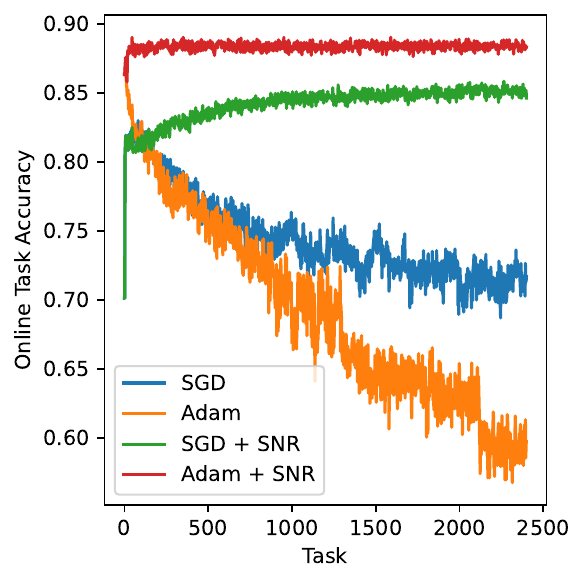}
        \includegraphics[width=0.38\linewidth]{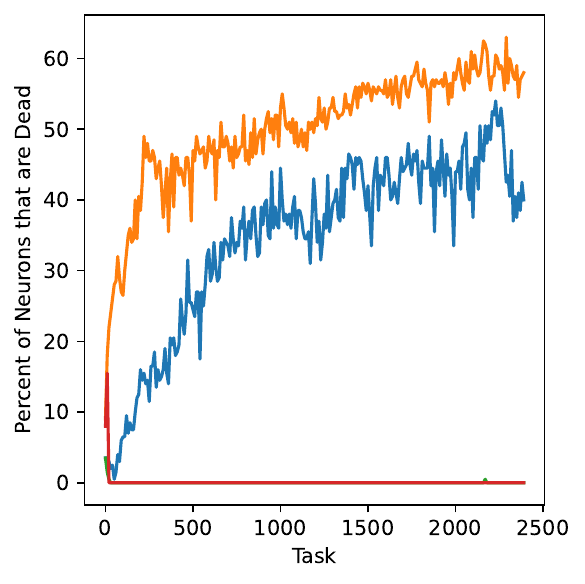}
        \caption{Illustration of plasticity loss and its mitigation by SNR during training of a multilayer perceptron on the Permuted MNIST problem for a single random seed. For this figure, a neuron is declared dead if it has not fired for the last 1000 consecutive training examples.}
        \label{fig:PM_intro}
    \end{figure}

One hypothesis that seeks to explain plasticity loss is that the network weights obtained from minimizing loss over some task yield poor initializations for a subsequent task, leading to the inactive neurons we observe in the above experiment. On the other hand random weight initializations are known to work well \citep{glorot2010understanding}, suggesting a natural class of heuristics: re-initialize inactive neurons. Of course, the crux of any such algorithm is determining whether a neuron is inactive in the first place, and doing so as quickly as possible.

To motivate our algorithm, SNR, consider applying the network $f(\cdot; \theta_t^*)$ to a hypothetical sequence of training examples drawn i.i.d. from $\mu_t$ indexed by $s$. Let $Z^{\mu_t} _{s,i}$ indicate the sequence of activations of neuron $i$, and let $A^{\mu_t}_i$ be a random variable distributed as the random time between any two consecutive activations over this hypothetical sequence of examples. Now turning to the {\em actual} sequence of training examples, let $a^t_i$ count the time since the last firing of neuron $i$ prior to time $t$.  Our (idealized) proposal is then exceedingly simple: reset neuron $i$ at time $t$ iff $\mathbb{P}(A_i^{\mu_t} \geq a^t_i) \leq \eta$ for some suitably small threshold $\eta$. We dub this algorithm {\em Self-Normalized Resets} and present it as Algorithm~\ref{alg:SNR}. The algorithm requires a single hyper parameter, $\eta$. Of course in practice, the distribution of $A_i^{\mu_t}$ is unknown to us, and so an implementable version of Algorithm~\ref{alg:SNR} simply approximates this distribution with the histogram of inter-firing times of neuron $i$ prior to time $t$.\footnote{As opposed to tracking the histogram itself, we simply track the mean inter-firing time, and assume $A_i^{\mu_t}$ is geometrically distributed with that mean. This requires tracking just one parameter per neuron. Our mean estimate is itself computed over a fixed length trailing window motivated by the change-point detection approach of \cite{besbes2011minimax}; the length of this window is a hyper-parameter.}  

\begin{algorithm}[t]
\label{alg:SNR}
\DontPrintSemicolon  
\caption{SNR: Self-Normalized Resets}
\SetKwInput{KwInput}{Input}                
\SetKwInput{KwOutput}{Output}              
\KwInput{Reset percentile threshold $\eta$}
\textbf{Initialize:} Initialize weights $\theta_0$ randomly. Set inter-firing time $a_i = 0$ for each neuron $i$ \;
\For{\( {\rm each \ training \ example} \ x_t \)}{
        \textbf{Forward Pass}: Evaluate $f(x_t;\theta_t)$. Get neuron activations $z_{t,i}$ for each neuron $i$ \;
        \textbf{Update inter-firing times:} For each neuron $i$, $a_i \gets a_i+1$ if $z_{t,i} = 0$. Otherwise, $a_i \gets 0$ \;
        \textbf{Optimize}: $\theta_{t+1} \gets O_t(H_t)$  \;
        \textbf{Resets:} For each neuron $i$, reset if 
        $\mathbb{P}(A_i^{\mu_t} \geq a_i) \leq \eta $. 
}
\end{algorithm}

\section{Theoretical Analysis}\label{sec:theory}

In this section we provide a two-fold theoretical analysis. We begin in Section \ref{sec:ReLUAnalysis} by analyzing the problem of learning a single ReLU-activated neuron with gradient descent under an adversarial selection of the target weights. This is the simplest possible (component of a) neural network and it captures the essence of continual learning, whereby the adversarial selection of the target weights models task transitions. We begin by showing that gradient descent with L2 regularization attains $\Omega(1)$ expected average regret. This analysis characterizes plasticity loss, at the neuron-level, as being a consequence of network weights learned for previous tasks forming poor initializations for subsequent tasks. Secondly, we provide a positive result demonstrating that gradient descent with access to a reset oracle attains $O(d^2\ln{T} / T)$ expected average regret, demonstrating that neuron resets are an effective mechanism for identifying poor initializations and thus remedying plasticity loss.

Having characterized plasticity loss and shown that resets are an effective remedy in Section \ref{sec:ReLUAnalysis}, in Section \ref{sec:hyp_test} we answer the question of what makes an effective reset oracle in practice when data is arriving sequentially. To this end, we frame the problem of detecting an inactive neuron as that of an optimal hypothesis test. We show that an optimal reset scheme must have a reset threshold that is proportional to the neuron's firing rate; this is precisely the SNR mechanism. In contrast, existing competitor reset schemes define a fixed reset threshold or frequency. Our experiments in Section \ref{sec:experiments} show that such reset schemes attain performance inferior to that of SNR. To shed intuition on this observation, we show that under this optimal hypothesis test model, one can incur arbitrarily large error relative to SNR if attempting to detect two dead neurons with distinct firing rates with a single threshold.

\subsection{Characterizing Plasticity Loss in a Single ReLU and the Promise of Resets}\label{sec:ReLUAnalysis}

In this section, we provide a theoretical analysis of learning a single ReLU-activated neuron with gradient descent and with access to a reset oracle, where the target neuron's weights are chosen adversarially after the network's weight initialization is fixed. This adversarial selection serves as an abstraction of task transitions in a continual learning setting, allowing for scenarios where successive tasks may be unrelated—or even deliberately chosen to be adversarial. In contrast, prior work has solely considered settings where the target neuron's weights are fixed before the network's initialization is sampled \citep{vardi2021learning, kim2024max, yehudai2020learning}. While such literature demonstrate the effectiveness of first-order methods for learning a single neuron and the benefits of modern weight initializations, they do not capture the challenges of task transitions, where the initialization for a new task may be influenced by prior learning.

\subsubsection{Preliminaries}\label{sec:LearningReLUPrelim}

To study the learning dynamics of a single ReLU-activated neuron, we consider the expected squared error:  
\begin{equation}\label{eqn:Objective}
    L(w) = \mathbb{E}[(\sigma(w^{\top}x) - \sigma(v^{\top}x))^2],
\end{equation}  
where \( \sigma(z) = \max(0,z) \) is the ReLU activation function, \( v \) is the target neuron's weight vector, and the expectation is taken over \( x \in \mathbb{R}^{d+1} \), sampled from \( \mathcal{N}(0, I_d) \times \{1\} \), where \( I_d \) is the \( d \)-dimensional identity matrix. The gradient of the loss is then given by  
\begin{equation*}
    \nabla L(w) = \mathbb{E}[2(\sigma(w^{\top}x) - \sigma(v^{\top}x)) x \mathbbm{1}_{\{w^{\top}x \geq 0\}}].
\end{equation*} where we define the subgradient of \( \sigma \) as \( \sigma'(z) = \mathbbm{1}_{\{z \geq 0\}} \), which corresponds to a particular choice of subgradient at \( z = 0 \). Next, we introduce the \( L_2 \)-regularized loss function \( \hat{L}(w) \), given by  
\begin{equation}\label{eqn:RegularizedObjective}
    \hat{L}(w) = \mathbb{E}\bigl[(\sigma(w^{\top} x) - \sigma(v^{\top} x))^2\bigr] + \frac{\lambda}{2} \|w\|^2,
\end{equation}
where \( \lambda \geq 0 \) is the regularization parameter. Setting \( \lambda = 0 \) recovers the original loss function \( L(w) \). The additional \( L_2 \) regularization term contributes an additive term \( \lambda w \) to the gradient, yielding  
\begin{equation*}
    \nabla \hat{L}(w) 
    = \nabla L(w) + \lambda w
    = \mathbb{E} \bigl[
        2(\sigma(w^{\top} x) - \sigma(v^{\top} x)) x \mathbbm{1}_{\{w^{\top} x \geq 0\}}
    \bigr] + \lambda w.
\end{equation*}


For an iterate $w_t$, step size $\alpha$, and a general loss function \( \mathcal{L}(w) \), which may represent either the standard loss \( L(w) \) or the \( L_2 \)-regularized loss \( \hat{L}(w) \), the gradient descent update is defined as 
\begin{equation*}
    w_{t+1} = w_t - \alpha \nabla \mathcal{L}(w_t).
\end{equation*}

Finally, we define the \textit{average regret} in learning the target neuron's weights as  
\begin{equation}\label{eqn:regretdefn}
    R_T = \frac{1}{T} \sum_{t=0}^{T-1} \left(L(w_t) - L(w^*)\right) = \frac{1}{T} \sum_{t=0}^{T-1} L(w_t),
\end{equation}  
where the second equality follows from the fact that the optimal weight vector for (\ref{eqn:Objective}) is \( w^* = v \), and thus  
\begin{equation*}
    L(w^*) = L(v) = 0.
\end{equation*} We also define the \textit{expected average regret}, \( \mathbb{E}[R_T] \), which accounts for randomness in the learning process. Specifically, the expectation is taken over any stochasticity introduced by the learning algorithm, including the initialization of \( w_0 \) and any reinitializations of \( w_t \) due to the reset oracle. Thus, the goal of learning a ReLU-activated neuron is to minimize (expected) average regret.

\subsubsection{Non-Adversarial Target-Weights}

Prior work on gradient‐descent dynamics has almost exclusively considered the regime in which the target weights $v$ are fixed before the network weights $w_0$ are drawn from a prescribed initialization distribution \cite{vardi2021learning, yehudai2020learning, kim2024max}. For clarity, Algorithm \ref{alg:GradientDescentNonAdversarial} formalizes gradient descent under this non-adversarial setup.

\begin{algorithm}[H]
\caption{Gradient Descent for Minimizing \(\mathcal{L}(w)\) with Non-Adversarial Target Weights}
\label{alg:GradientDescentNonAdversarial}
\SetKwInOut{Init}{Weight Initialization}
\SetKwInOut{Target}{Target Selection}
\Target{Target weights \(v\) are fixed}
\Init{Initialize \(w_0\) from some distribution}
\SetAlgoLined
\For{$t\leftarrow 0,1,2,\dots$}{
  Compute gradient \(\nabla \mathcal{L}(w_t)\)\;
  Update weights: \(w_{t+1}\leftarrow w_t - \alpha\,\nabla \mathcal{L}(w_t)\)\;
}
\end{algorithm}

\citet{vardi2021learning} show that gradient descent (Algorithm \ref{alg:GradientDescentNonAdversarial}) under mild distributional conditions which capture our setting of $x\sim N(0,I_d)\times\{1\}$, converges \emph{linearly} to the unique global minimizer \(w^*=v\):

\begin{theorem}[High-Probability Convergence, Corollary 5.5 of \citet{vardi2021learning}]\label{thm:VardiConvergenceRandom}
Under the boundedness (Assumption 5.1) and spread (Assumption 5.3) conditions on the first \(d\) coordinates, initialize
\[
  w_0 = (\tilde w_0,\,0),\qquad
  \tilde w_0\;\sim\;\mathrm{Uniform}\bigl(\mathbb S^{d-1}(\rho)\bigr),
  \quad\rho=\frac{M}{c^2}.
\]
Then for any step‐size \(\alpha\le \gamma/c^4\), there exist constants \(M,c,\gamma>0\) (depending on the distribution) such that with probability at least \(1-e^{-\Omega(d)}\),
\[
  \|w_t - v\|^2
  \;\le\;
  \bigl(1 - \gamma\,\alpha\bigr)^t\,\|w_0 - v\|^2.
\]
\end{theorem}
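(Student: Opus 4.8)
The plan is to establish the standard triad for linear convergence of gradient descent on a nonconvex but ``one-point convex'' objective: a lower bound on the gradient's alignment with the error $w_t - v$, an upper bound on the gradient's magnitude, and an invariance argument showing that the iterates never leave the region where these two bounds hold. Since $x \sim \mathcal{N}(0,I_d)\times\{1\}$ is rotationally symmetric in its first $d$ coordinates, I would first compute the population gradient $\nabla L(w)$ from the formula in Section~\ref{sec:LearningReLUPrelim} in closed form. The relevant expectations $\mathbb{E}[\sigma(w^{\top}x)\sigma(v^{\top}x)]$ and $\mathbb{E}[\sigma(w^{\top}x)\,x\,\mathbbm{1}\{w^{\top}x \ge 0\}]$ reduce to one- and two-dimensional Gaussian integrals (arc-cosine kernels), so that $\nabla L(w)$ becomes an explicit function of $\|w\|$, $\|v\|$, and the angle $\theta$ between $w$ and $v$ in the lifted $(d{+}1)$-dimensional space.

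Second, using this closed form I would prove a one-point strong convexity estimate of the form $\langle \nabla L(w), w - v\rangle \ge \gamma_1 \|w - v\|^2$, valid on a region $\mathcal{R}$ characterized by a bound on $\|w\|$ and, crucially, by keeping $\theta$ bounded away from $\pi$ (so that $w$ is not nearly antipodal to $v$, where the activation regions of the two neurons would barely overlap). Here the spread condition (Assumption 5.3) supplies the directional nondegeneracy that prevents the alignment from vanishing, while boundedness (Assumption 5.1) controls the integrals. Paired with a complementary smoothness bound $\|\nabla L(w)\|^2 \le \gamma_2 \|w - v\|^2$ on $\mathcal{R}$, the one-step expansion
\[
\|w_{t+1}-v\|^2 = \|w_t - v\|^2 - 2\alpha\langle \nabla L(w_t), w_t - v\rangle + \alpha^2 \|\nabla L(w_t)\|^2
\]
yields $\|w_{t+1}-v\|^2 \le (1 - 2\alpha\gamma_1 + \alpha^2\gamma_2)\|w_t - v\|^2$, and choosing $\alpha \le \gamma/c^4$ small enough forces this factor below $1 - \gamma\alpha$, so that the claimed geometric contraction follows by induction.

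Third, the high-probability claim comes entirely from the random initialization. With $\tilde w_0 \sim \mathrm{Uniform}(\mathbb{S}^{d-1}(\rho))$ and the bias coordinate set to $0$, I would invoke concentration of measure on the sphere to show that the initial angle between $w_0$ and $v$ avoids the degenerate regime with probability at least $1 - e^{-\Omega(d)}$; this is exactly what places $w_0$ inside the basin $\mathcal{R}$ from which the deterministic contraction above runs.

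The main obstacle I anticipate is the forward invariance of $\mathcal{R}$ under the gradient step: the descent inequality is only useful if \emph{every} iterate remains in the region where one-point convexity holds, and the appended bias coordinate couples the evolution of the angle $\theta$ and the norm $\|w\|$ in a way that is not monotone. The crux is therefore to show that a single gradient step cannot simultaneously shrink the error while pushing $\theta$ back toward $\pi$ or inflating $\|w\|$ past the boundary of $\mathcal{R}$. Controlling this coupling in the presence of the bias is precisely the technical difficulty that separates this analysis from the cleaner bias-free case, and it is the step where I would concentrate the most effort.
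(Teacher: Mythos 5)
The paper offers no proof of this statement: it is imported verbatim as Corollary 5.5 of \citet{vardi2021learning}, so there is no internal argument of the paper to compare yours against. Your outline does track the general strategy of that reference --- a one-point convexity bound \(\langle\nabla L(w), w-v\rangle \ge \gamma_1\|w-v\|^2\) on a good region, a matching bound \(\|\nabla L(w)\|^2 \le \gamma_2\|w-v\|^2\), the one-step expansion yielding a contraction factor \(1-2\alpha\gamma_1+\alpha^2\gamma_2\), and concentration on the sphere to place \(w_0\) in the basin with probability \(1-e^{-\Omega(d)}\).

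However, as written your proposal is a plan rather than a proof, and the step you defer is the entire technical content of the theorem: forward invariance of the region \(\mathcal{R}\). You do not supply the mechanism by which it is maintained, and the mechanism is not the angle-and-norm coupling argument you anticipate. In the cited analysis the basin is characterized by a condition that is monotone in \(\|w-v\|\) (roughly, that \(\|w-v\|\) is small enough relative to \(\|v\|\) to keep the activation regions of \(w\) and \(v\) overlapping non-degenerately), so the contraction of \(\|w_t-v\|^2\) itself preserves the basin and the induction closes without separately tracking \(\theta\) and \(\|w\|\). This is also where the specific radius \(\rho = M/c^2\) matters: concentration on \(\mathbb{S}^{d-1}(\rho)\) makes \(\tilde w_0\) nearly orthogonal to \(v_{1:d}\) (angle near \(\pi/2\), not merely bounded away from \(\pi\)), and what makes \(w_0\) land in the basin is that its \emph{norm} is pinned to the right scale so that \(\|w_0-v\|^2\) satisfies the monotone basin condition at \(t=0\). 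Without making the basin definition and its monotonicity under the contraction explicit, the geometric decay cannot be propagated and the proof does not close.
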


The high-probability, linear-rate convergence established in Theorem \ref{thm:VardiConvergenceRandom} implies an $O(1/T)$ expected average regret when training a single ReLU neuron with non-adversarial targets. In contrast, we show below that if the target weights can be chosen after the network is initialized, that is adversarially, the average regret no longer vanishes, even under very mild conditions.

\subsubsection{Adversarial Target-Weights}\label{sec:AdversarialTargetWeights}

 In continual learning, the weight initialization for a task can be those learned for a preceding task and can yield a poor initialization or local minimum. We formalize this notion by introducing the setting of adversarially selected target weights $v$. We formally define ($L_2$-regularized) gradient descent under adversarial selection of target weights below in Algorithm \ref{alg:GradientDescentAdversarial}.

\begin{algorithm}[H]
\caption{Gradient Descent for Minimizing \(\mathcal{L}(w)\) with Adversarial Target Weights}
\label{alg:GradientDescentAdversarial}
\SetKwInOut{Init}{Weight Initialization}
\SetKwInOut{Target}{Target Selection}
\Init{Initialize \(w_0\) from some distribution}
\Target{Target weights \(v\) chosen adversarially with knowledge of \(w_0\)}
\SetAlgoLined
\For{$t\leftarrow 0,1,2,\dots$}{
  Compute gradient \(\nabla \mathcal{L}(w_t)\)\;
  Update weights: \(w_{t+1}\leftarrow w_t - \alpha\,\nabla \mathcal{L}(w_t)\)\;
}
\end{algorithm}

Below in Theorem, \ref{cor:NegativeResultFullIntro} we show that ($L_2$-regularized) gradient descent can attain $\Omega(1)$ average regret under this adversarial model.

\begin{theorem}\label{cor:NegativeResultFullIntro}
    Let \(L(\cdot)\) be the objective function for learning a single neuron (see Eq.~\eqref{eqn:Objective}), and let \(w_0 \in \mathbb{R}^{d+1}\) be an initialization satisfying \((w_0)_{d+1} \le 0\). Define the target weight vector \(v \in \mathbb{R}^{d+1}\) so that \(v_{1:d} = -(w_0)_{1:d}\) and \(v_{d+1} < -(w_0)_{d+1}\). Then, for any step size \(\alpha < \frac{1}{2(d+1)}\) and any regularization parameter \(\lambda < 2d\), performing gradient descent on the L2-regularized objective \(\hat{L}(\cdot)\) from \(w_0\), i.e. Algorithm \ref{alg:GradientDescentAdversarial}, satisfies
    \[
        \forall T \ge 0, \quad R_T \;\ge\; L(0) \;>\; 0.
    \]
\end{theorem}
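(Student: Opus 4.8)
The plan is to exploit the adversary's construction to show that the firing region of every iterate $w_t$ stays disjoint (up to a null set) from that of the target $v$, and that this alone forces $L(w_t)\ge L(0)$ at every step. Write $x=(g,1)$ with $g\sim\mathcal N(0,I_d)$ and set $u=(w_0)_{1:d}^{\top}g$. Since $v_{1:d}=-(w_0)_{1:d}$, the target fires ($v^{\top}x\ge 0$) exactly when $u\le v_{d+1}$, whereas $w_0$ fires when $u\ge -(w_0)_{d+1}$; the hypothesis $v_{d+1}<-(w_0)_{d+1}$ makes these two half-lines disjoint at $t=0$. The key observation is that whenever $w_t$ and $v$ have disjoint firing regions, the cross term in $L(w_t)=\mathbb E[\sigma(w_t^{\top}x)^2]-2\mathbb E[\sigma(w_t^{\top}x)\sigma(v^{\top}x)]+\mathbb E[\sigma(v^{\top}x)^2]$ vanishes, so $L(w_t)=\mathbb E[\sigma(w_t^{\top}x)^2]+L(0)\ge L(0)$. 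Averaging over $t$ then yields $R_T\ge L(0)$, and $L(0)=\mathbb E[\sigma(v^{\top}x)^2]>0$ because $v^{\top}x$ is a non-degenerate Gaussian. Thus the whole theorem reduces to proving that disjointness is preserved by Algorithm \ref{alg:GradientDescentAdversarial}.

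To study the dynamics I would first note that on the firing region of $w_t$ the target is invisible to the gradient: disjointness gives $\sigma(v^{\top}x)=0$ on $\{w_t^{\top}x\ge 0\}$, so $\nabla\hat L(w_t)$ coincides with the gradient of the self-energy objective $\tilde L(w)=\mathbb E[\sigma(w^{\top}x)^2]+\tfrac{\lambda}{2}\|w\|^2$. Hence, as long as disjointness persists, the run is indistinguishable from gradient descent on the convex function $\tilde L$, which is minimized at (or, for $\lambda=0$, driven toward) the origin. Next I would use rotational symmetry of the Gaussian: because $w_t^{\top}x$ depends on $g$ only through $(w_t)_{1:d}^{\top}g$, the first-$d$ block of $\nabla\tilde L(w_t)$ is parallel to $(w_t)_{1:d}$, and the regularizer preserves this. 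Therefore $(w_t)_{1:d}=\beta_t (w_0)_{1:d}$ for a scalar $\beta_t$ with $\beta_0=1$, and the entire trajectory collapses to a two-dimensional recursion in $(\beta_t,b_t)$, where $b_t=(w_t)_{d+1}$.

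The heart of the argument is then an induction showing that the region $\{\beta_t>0,\ b_t\le 0,\ b_t+\beta_t v_{d+1}\le 0\}$ is forward-invariant; the last inequality is precisely the algebraic encoding of disjointness (it says $w_t^{\top}x\le 0$ on $\{v^{\top}x\ge 0\}$). Writing the scalar updates as $\beta_{t+1}=(1-\alpha\lambda)\beta_t-\tfrac{2\alpha}{\rho^2}\mathbb E[Y_t u\,\mathbbm 1_{\{Y_t\ge 0\}}]$ and $b_{t+1}=(1-\alpha\lambda)b_t-2\alpha\,\mathbb E[Y_t\,\mathbbm 1_{\{Y_t\ge 0\}}]$, with $Y_t=\beta_t u+b_t$ and $\rho=\|(w_0)_{1:d}\|$, I would verify each coordinate in turn: $\mathbb E[\sigma(Y_t)]\ge 0$ keeps $b_{t+1}\le 0$; the bound $\mathbb E[Y_t u\,\mathbbm 1_{\{Y_t\ge 0\}}]\le \rho^2\beta_t$ (valid since $b_t\le 0$ makes the firing event $\{u\ge -b_t/\beta_t\}$ have a nonnegative threshold) gives $\beta_{t+1}\ge(1-\alpha(\lambda+2))\beta_t$; and telescoping $\phi_t:=b_t+\beta_t v_{d+1}$ yields $\phi_{t+1}=(1-\alpha\lambda)\phi_t-2\alpha\,\mathbb E[Y_t(1+v_{d+1}u/\rho^2)\,\mathbbm 1_{\{Y_t\ge 0\}}]\le 0$, where for $v_{d+1}\ge 0$ the integrand is manifestly nonnegative on the firing event (and for $v_{d+1}<0$ disjointness is immediate from $b_t\le 0,\ \beta_t\ge 0$). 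This is exactly where the hypotheses enter: $\lambda<2d$ and $\alpha<\tfrac{1}{2(d+1)}$ combine to give $\alpha(\lambda+2)<2\alpha(d+1)<1$ and $\alpha\lambda<1$, keeping the contraction factor $1-\alpha\lambda$ in $(0,1]$ and preventing the $\beta$-update from overshooting past $0$.

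I expect the main obstacle to be precisely this invariance induction, and within it the need to sign the truncated Gaussian moments $\mathbb E[Y_t u\,\mathbbm 1_{\{Y_t\ge 0\}}]$ and $\mathbb E[Y_t(1+v_{d+1}u/\rho^2)\,\mathbbm 1_{\{Y_t\ge 0\}}]$ uniformly in $t$ using only $\beta_t>0$ and $b_t\le 0$, while arguing that the step-size and regularization bounds are exactly strong enough to stop the iterate from crossing either the boundary $\beta=0$ or the disjointness boundary $\phi=0$. Everything else—the loss decomposition, the reduction to $\tilde L$, and the symmetry collapse to two dimensions—should be routine once the invariant is established.
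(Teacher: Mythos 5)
Your proposal is correct and takes essentially the same route as the paper's proof (Theorem~\ref{thm:NegativeResultFull} together with Corollary~\ref{cor:NegativeResultFull}): an induction showing that the firing regions of $w_t$ and $v$ remain disjoint with $(w_t)_{d+1}\le 0$, using rotational symmetry of the Gaussian to pin the first-$d$ block of the gradient to the span of $(w_t)_{1:d}$ and the bounds $\alpha<\tfrac{1}{2(d+1)}$, $\lambda<2d$ to keep the iterate from crossing the disjointness boundary, after which the vanishing cross term yields $L(w_t)\ge L(0)$ and hence $R_T\ge L(0)$. Your explicit collapse to the scalar recursion in $(\beta_t,b_t)$ with the invariant $\phi_t=b_t+\beta_t v_{d+1}\le 0$ is a reparametrization of the paper's step $w_{t+1}^{\top}y\le(1-\alpha\lambda-D)\,w_t^{\top}y<0$ for every $y$ in the target's firing region, with the hypotheses entering in the same way.
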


We interpret this result as $w_0$ being the optimal solution for a preceding task and thereby the starting point for the current task at time $t=0$. It is a priori unclear that there is an efficient way to detect when network weights $w_t$ are in a poor local minimum, and moreover, how to remedy it. Fortunately, weight-resets in conjunction with a reset-oracle solve this issue. 

\subsubsection{Adversarial Target Weights with a Reset Oracle}\label{sec:ResetOracle}

We define the \textit{reset oracle} \( \mathcal{R}_{\delta}(w) \) as  
\begin{equation*}
    \mathcal{R}_{\delta}(w) = 
    \begin{cases}
        1, & \text{if } \mathbb{E}[\sigma(w^{\top} x)] \leq \delta, \\
        0, & \text{if } \mathbb{E}[\sigma(w^{\top} x)] > \delta.
    \end{cases}
\end{equation*}
The reset oracle \( \mathcal{R}_{\delta} \) is parameterized by a threshold \( \delta \) and returns 1 (True) if and only if the expected activation of the neuron, defined by the weight vector \( w \), falls below \( \delta \). This reset oracle is an idealization of the reset-criteria of reset-algorithms like CBP, ReDO, and SNR.

Next, we introduce the gradient descent algorithm with the reset oracle \( \mathcal{R}_{\delta} \) for minimizing the loss function \( L(w) \). The reset oracle monitors the neuron's expected activation and determines whether a reset is necessary. If the reset condition is met, the weights are reinitialized from the original distribution; otherwise, the standard gradient descent update is applied to minimize \( L(w) \).

\begin{algorithm}[H]
\caption{Gradient Descent with Reset Oracle for Minimizing \(L(w)\)}
\label{alg:GradientDescentWithResetOracle}
\SetKwInOut{Init}{Weight Initialization}
\SetKwInOut{Target}{Target Selection}
\Init{Initialize $w_0$ from some distribution}
\Target{Target weights \(v\) chosen adversarially with knowledge of \(w_0\)}
\SetAlgoLined
\For{$t\leftarrow 0,1,2,\dots$}{
  \eIf{$\mathcal{R}_{\delta}(w_t) = 0$}{
    Sample $w_{t+1}$ from the initial distribution\;
  }{
    Compute gradient $\nabla L(w_t)$\;
    Update weights: $w_{t+1}\leftarrow w_t - \alpha\,\nabla L(w_t)$\;
  }
}
\end{algorithm}



Below, we present our main theoretical contributions. When the target weights \( v \) are selected adversarially with knowledge of the network initialization \( w_0 \), Theorem \ref{thm:PositiveRegretGuaranteeIntro} establishes that gradient descent with the reset oracle (Algorithm \ref{alg:GradientDescentWithResetOracle}) achieves vanishing expected average regret.

\begin{theorem}\label{thm:PositiveRegretGuaranteeIntro}
Let $w_0$ be sampled from \(\mathcal{D}_{w_0} = \text{Unif}(l\cdot S^{d-1}\times\{0\})\) for some positive constant $l>0$, where $l\cdot S^{d-1}\subseteq \mathbb{R}^d$ is the $l$-radius sphere. Let \(v\in\mathbb{R}^{d+1}\) (possibly chosen adversarially with knowledge of \(w_0\)) be the target neuron's weights satisfying
\[
-v_{d+1} \le \frac{\|v_{1:d}\|}{2\sqrt{2\pi}},\quad \|v_{1:d}\|=2\pi l,\quad \|v\|\ge C_1,\quad \|v_{1:d}\|\ge C_2\|v\|,
\]
for some constants \(C_1,C_2>0\), and suppose
\[
\delta<\left(\frac{C_2^2}{8\pi^2\Bigl(\sqrt{3}+\frac{2}{C_1}\Bigr)}\right)^3.
\]
Then, minimizing \(L(\cdot)\) via gradient descent with step size \(\alpha\le \frac{1}{2(d+1)}\), initialized at \(w_0\), and employing a reset oracle \(\mathcal{R}_\delta\), i.e. Algorithm \ref{alg:GradientDescentWithResetOracle}, yields
\[
\forall\, T\ge0,\quad \mathbb{E}[R_T]\le  \mathcal{O}\!\Bigl(\frac{d^2\ln T}{T}\Bigr),
\]
where the expectation is taken over the randomness of the the initialization of $w_0$ and the resets.
\end{theorem}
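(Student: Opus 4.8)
The plan is to exploit the defining feature of Algorithm~\ref{alg:GradientDescentWithResetOracle}: every time the reset oracle $\mathcal{R}_\delta$ fires, the next iterate is an \emph{independent} fresh draw from $\mathcal{D}_{w_0}$. Since the adversary commits to $v$ after seeing only the \emph{original} $w_0$, each resampled run is effectively an instance of the non-adversarial problem of Algorithm~\ref{alg:GradientDescentNonAdversarial} — a uniform initialization on $l\cdot S^{d-1}\times\{0\}$ against a \emph{fixed} target $v$. I would therefore decompose the trajectory into a sequence of independent ``runs,'' each begun by such a fresh draw (the very first run possibly beginning from the adversarial $w_0$), bound the expected cumulative loss $\mathbb{E}\big[\sum_{t=0}^{T-1}L(w_t)\big]$ run-by-run, and divide by $T$ at the end.

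First I would record the two elementary facts that make the threshold $\delta$ do the right thing. A fresh draw has $w_0^\top x\sim\mathcal{N}(0,l^2)$, so $\mathbb{E}[\sigma(w_0^\top x)]=l/\sqrt{2\pi}$; the hypothesis on $\delta$ (together with $\|v_{1:d}\|=2\pi l$) ensures in particular that $\delta$ lies strictly below this fresh-draw activation level, so a fresh draw is never reset on arrival. Symmetrically, the constraint $-v_{d+1}\le \|v_{1:d}\|/(2\sqrt{2\pi})$ keeps the \emph{target} itself ``alive,'' i.e.\ $\mathbb{E}[\sigma(v^\top x)]$ is bounded well above $\delta$, so a trajectory that actually converges to $v$ also never triggers a reset. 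These two facts are what let me cleanly separate convergent runs from runs that die.

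The heart of the argument is a dichotomy for a fresh draw $w_0$, obtained by analyzing the dynamics through the closed-form Gaussian (arc-cosine kernel) expressions for $\mathbb{E}[\sigma(w^\top x)\sigma(v^\top x)]$ and hence for $\nabla L$. I would partition $l\cdot S^{d-1}$ into a good set $\mathcal{G}$ and its complement. On $\mathcal{G}$ — morally the set whose angle to $v_{1:d}$ is not too large, of normalized measure $q\ge\Omega(1)$ — I would show, adapting the convergence analysis behind Theorem~\ref{thm:VardiConvergenceRandom} to the present constants (the conditions $\|v\|\ge C_1$ and $\|v_{1:d}\|\ge C_2\|v\|$ supply exactly the boundedness and spread needed), that $w_t\to v$ while $\mathbb{E}[\sigma(w_t^\top x)]$ stays above $\delta$ throughout, so no reset fires. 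Summing the per-step loss along this trajectory via a descent-lemma bound of the form $L(w_t)=O(\beta\|w_0-v\|^2/t)$, with smoothness $\beta=O(d)$ coming from $\mathbb{E}\|x\|^2=d+1$ and an $O(d)$ effective conditioning, the harmonic tail $\sum_{t\le T}1/t$ yields cumulative loss $O(d^2\ln T)$ over the first $T$ steps. On the complementary ``bad'' set the activation is driven monotonically toward zero; I would show it crosses $\delta$ after $N=O(\mathrm{poly}(d))$ steps and that $L(w_t)\le L(0)=\mathbb{E}[\sigma(v^\top x)^2]=O(1)$ throughout, so a bad run contributes at most $O(\mathrm{poly}(d))$ before triggering a reset.

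Finally I would assemble the pieces. Because resampled draws are i.i.d.\ and good with probability $q\ge\Omega(1)$, the number of runs before a good one is stochastically dominated by a Geometric$(q)$ variable with $O(1)$ mean; every such run is bad and contributes $O(\mathrm{poly}(d))$, while the good run — which then persists to time $T$ — contributes $O(d^2\ln T)$. Hence $\mathbb{E}\big[\sum_{t<T}L(w_t)\big]\le O(1)\cdot O(\mathrm{poly}(d))+O(d^2\ln T)=O(d^2\ln T)$, and dividing by $T$ gives $\mathbb{E}[R_T]=O(d^2\ln T/T)$. The main obstacle is the dichotomy of the previous paragraph: controlling the non-convex ReLU dynamics tightly enough to prove both that good runs converge \emph{without} their activation ever dipping below $\delta$ and that bad runs reach the reset threshold quickly with uniformly bounded loss. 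The delicate point, as in \citet{vardi2021learning}, is the bias coordinate, which must be shown to induce neither slow convergence on $\mathcal{G}$ nor a spurious early reset.
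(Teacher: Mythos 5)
Your high-level accounting --- decompose the trajectory into runs separated by resets, show a fresh draw survives forever with probability $\Omega(1)$ so the number of resets is geometrically bounded, charge each dead run $\mathrm{poly}(d)$ and the surviving run $O(d^2\ln T)$ --- matches the paper's (it conditions on the number of resets $r$, bounds $\mathbb{P}(A_r)\le 2^{-(r-1)}$, and splits the regret into the contributions before and after the last reset). But the two load-bearing steps that you yourself flag as ``the main obstacle'' are resolved in the paper by mechanisms quite different from the ones you propose, and the ones you propose are either missing their key idea or are likely unprovable as stated.

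First, the paper never establishes convergence of the iterates to $v$ on a ``good'' set and never tracks $\mathbb{E}[\sigma(w_t^\top x)]$ along the trajectory. Instead it shows that $L$ admits a $(d+1)$-quadratic upper bound so that with $\alpha\le\tfrac{1}{2(d+1)}$ the loss is monotonically non-increasing (Lemmas~\ref{lemma:QuadraticUpperBound} and~\ref{lemma:GDAnalysis}); that any $w$ with $\mathbb{E}[\sigma(w^\top x)]\le\delta$ must have $L(w)>L(0)-\|v_{1:d}\|^2/(8\pi^2)$ (Lemma~\ref{lemma:LossComparison}, which is where the hypothesis on $\delta$ enters); and that a fresh draw satisfies $L(w)\le L(0)-\|v_{1:d}\|^2/(8\pi^2)$ with probability at least $1/2$ (Lemma~\ref{lemma:BoundingLossAtInitWHP}, the event being essentially $w^\top v\ge 0$ plus an arc-cosine-kernel computation). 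These three facts give ``good draw $\Rightarrow$ no reset ever'' with no convergence analysis at all; this is the idea your plan is missing, and it is much lighter than adapting \citet{vardi2021learning} while simultaneously controlling the activation level. Second, your claim that on the bad set the activation is ``driven monotonically toward zero'' and crosses $\delta$ within $\mathrm{poly}(d)$ steps is not what the paper proves and is doubtful --- a draw with $w_0^\top v<0$ need not die. What the paper proves (Lemma~\ref{lemma:MaxResetTime}) is weaker and suffices: within $t_r=O(d^2)$ steps some iterate has $\|\nabla L\|^2\le\delta^2$, and then either $\mathbb{P}(w^\top x\ge0,\,v^\top x\ge0)\le\delta^2/(4\|v\|^2)$, in which case the oracle fires (Lemma~\ref{lemma:ResetCondition}), or that probability is bounded below, in which case a restricted-eigenvalue bound (Lemmas~\ref{lemma:LowerBoundMinEval} and~\ref{lemma:UpperBoundLoss1}) converts the small gradient into $L(w_t)=O(1/t)$ and no reset ever fires again. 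That dichotomy, not linear convergence to $v$, is also what produces the $O(d^2\ln T)$ harmonic tail for the run that survives to time $T$; a run need neither die nor converge, and your plan has no bound for that case. Without these ingredients the proposal does not close.
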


\textbf{Remarks.} For simplicity of exposition, Theorem~\ref{thm:PositiveRegretGuaranteeIntro} assumes that \( w_{1:d} \) is sampled uniformly from the sphere of radius \( l = \frac{\|v_{1:d}\|}{2\pi} \). Alternatively, one may sample \( w_{1:d} \) from \( \mathcal{N}(0,\sigma^2 I_d) \) with \( \sigma^2 = \frac{\|v_{1:d}\|^2}{4\pi^2 d} \), in which case standard concentration of measure arguments imply that  
\[
\mathbb{P}\!\Bigl(\|w\|^2 \in \Bigl[(1-\epsilon)\frac{\|v_{1:d}\|^2}{4\pi^2}, (1+\epsilon)\frac{\|v_{1:d}\|^2}{4\pi^2}\Bigr]\Bigr) \geq 1 - e^{-\Omega(d\epsilon^2)}.
\]
Additionally, the assumption \( -v_{d+1} \leq \frac{\|v_{1:d}\|}{2\sqrt{2\pi}} \) ensures that the bias term \( v_{d+1} \) is not arbitrarily negative relative to \( \|v_{1:d}\| \). The assumptions of Theorem~\ref{thm:PositiveRegretGuaranteeIntro}, including the initialization distribution of \( w_0 \), the latter bound on \( v_{d+1} \), and the constraints on \( \|v\| \) and \( \|v_{1:d}\| \), are identical, up to the choice of constants, to those used in the gradient descent analysis of \cite{vardi2021learning}, the most state-of-the-art work on learning a single ReLU-activated neuron in the non-adversarial setting. While the assumptions of Theorem~\ref{thm:PositiveRegretGuaranteeIntro} align with those in prior work, our analysis of gradient descent with resets takes an independent approach. 


\subsection{Motivating SNR and Comparison to Other Reset Schemes}
\label{sec:hyp_test}

Here we motivate the SNR heuristic and compare it to other proposed reset schemes. Consider the following simple hypothesis test: we observe a discrete time process $Z_s \in \{0,1\}$ which under the null hypothesis $H_0$ is a Bernoulli process with mean $p >0$. The alternative hypothesis $H_1$ is that the mean of the process is identically zero. A hypothesis test must, at some stopping time $\tau$, either reject ($X_\tau = 1$) or accept ($X_\tau = 0$) the null; an optimal such test would choose to minimize the sum of type-1 and type-2 errors (the `error rate') and a penalty for delays: 
\[
\mathbb{P}(X_\tau=1|H_0) + \mathbb{P}(X_\tau=0|H_1)
+ \lambda 
\left(
\E[\tau|H_0] + \E[\tau|H_1] 
\right)
\]
Here the multiplier $\lambda > 0$ penalizes the delay in a decision. If $\lambda < p/2$, the optimal test takes a simple form: for some suitable threshold $\bar T$, reject the null iff $Z_s = 0$ for all times $s$ up to $\bar T$:  
\begin{proposition}\label{prop:prop1}
Let $\bar T$ be the $1- \lambda(p-\lambda)^{-1}$ percentile of a {\rm Geometric}$(p)$ distribution. Then the optimal hypothesis test takes the form $X_\tau = \mathbf{1}\{Z_\tau = 0\}$ where $\tau = {\rm min}(s:Z_s = 1) \wedge \bar T$.   
\end{proposition}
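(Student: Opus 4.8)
The plan is to reduce the space of all (adapted, possibly randomized) tests to a one-parameter family of threshold tests and then optimize the single parameter explicitly. The driving observation is that under $H_1$ the process is identically zero, so the first firing is \emph{conclusive}: any path on which some $Z_s = 1$ has occurred has probability zero under $H_1$, and therefore must have come from $H_0$. First I would establish two structural lemmas. (i) \emph{Accept immediately on firing.} Given any test, modify it so that at the first time a firing is observed it stops and sets $X = 0$. Since firing never occurs under $H_1$, the two $H_1$-terms (type-2 error and $\E[\tau \mid H_1]$) are untouched; under $H_0$, on the firing paths the modified test incurs no type-1 error (accepting is the correct decision under $H_0$) and stops no later than before, so both the error and the delay terms weakly decrease. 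Hence some optimal test accepts and halts at the first firing. (ii) \emph{Reject when giving up.} On the all-zeros path, if the test stops at some time $\bar T$, compare the two terminal decisions: the delay is identical, while rejecting contributes only the tail mass $\mathbb{P}(A > \bar T) = (1-p)^{\bar T} \le 1$ to the type-1 error, whereas accepting contributes the full mass $1$ to the type-2 error under $H_1$; so rejecting dominates. Together these lemmas reduce every test to the form in the statement, indexed only by the give-up time $\bar T$ applied on the all-zeros path.

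To justify that a \emph{deterministic} give-up time is optimal (and that a single threshold, rather than a more complicated stopping region, suffices), I would observe that on the all-zeros path the running state is fully summarized by the elapsed time $s$, that the posterior odds of $H_1$ versus $H_0$ are proportional to $1/(1-p)^{s}$ and hence monotone increasing in $s$, and that the objective is affine in the (randomized) stopping probabilities, so an extreme point — a deterministic threshold — attains the minimum. Writing $A = \min\{s : Z_s = 1\} \sim \mathrm{Geometric}(p)$, the candidate test with give-up time $\bar T$ has, under $H_0$, type-1 error $\mathbb{P}(A > \bar T) = (1-p)^{\bar T}$ and delay $\E[\min(A,\bar T)] = (1-(1-p)^{\bar T})/p$, and under $H_1$ zero type-2 error with delay $\bar T$. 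Collecting the terms gives the closed form
\[
J(\bar T) = (1-p)^{\bar T} + \frac{\lambda}{p}\bigl(1-(1-p)^{\bar T}\bigr) + \lambda \bar T.
\]

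Finally I would optimize $J$ over integers $\bar T \ge 0$ by first differences:
\[
J(\bar T+1) - J(\bar T) = (1-p)^{\bar T}(\lambda - p) + \lambda.
\]
Since $\lambda < p$, the factor $(\lambda - p)$ is negative and $(1-p)^{\bar T}$ is decreasing, so these increments are strictly increasing in $\bar T$; thus $J$ is convex over the integers and its minimizer is the smallest $\bar T$ at which the increment turns nonnegative, i.e. the smallest $\bar T$ with $(1-p)^{\bar T} \le \lambda/(p-\lambda)$. Equivalently $\mathbb{P}(A \le \bar T) \ge 1 - \lambda(p-\lambda)^{-1}$, which identifies $\bar T$ as the $1 - \lambda(p-\lambda)^{-1}$ percentile of the $\mathrm{Geometric}(p)$ law, exactly as claimed; the hypothesis $\lambda < p/2$ is precisely what makes $\lambda/(p-\lambda) < 1$, so this percentile is well-defined and positive.

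I expect the main obstacle to be the rigorous form of the structural reduction in the first paragraph — arguing that \emph{no} adaptive or randomized test can beat the threshold family — rather than the optimization, which is the routine computation above. The cleanest way to close this gap is to phrase the all-zeros path as a sequential Bayesian stopping problem with equal priors and per-step cost $\lambda$, invoke the monotonicity of the $H_1$-posterior to obtain a monotone (threshold) optimal-stopping region, and use the conclusiveness of firing to collapse the behavior off the all-zeros path; the delay and error bookkeeping then reduces to the single function $J(\bar T)$ above.
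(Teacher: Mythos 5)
Your proof is correct and reaches the paper's threshold condition $(1-p)^{\bar T}\le \lambda(p-\lambda)^{-1}$ by a genuinely different optimization route. You share the paper's two structural observations (a firing is conclusive so one should accept and halt immediately; on the all-zeros path rejecting dominates accepting), but from there the paper recasts the problem as a Bayesian stopping problem with equal priors, tracks the posterior $\pi_s=(1-p)^s/((1-p)^s+1)$, and identifies the optimal give-up time via a one-step-lookahead comparison $C^{\mathrm{stop}}(T)\le C^{\mathrm{cont}}(T)$ in the Bellman equation, which after algebra reduces to $\pi_T\le\lambda/p$. You instead write the full frequentist cost of the threshold test in closed form, $J(\bar T)=(1-p)^{\bar T}+\tfrac{\lambda}{p}\bigl(1-(1-p)^{\bar T}\bigr)+\lambda\bar T$, and minimize by first differences, observing that $J(\bar T+1)-J(\bar T)=(1-p)^{\bar T}(\lambda-p)+\lambda$ is increasing so $J$ is discretely convex. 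Your route buys something the paper's does not make explicit: the one-step-lookahead rule is only guaranteed optimal for monotone stopping problems, and your convexity computation is precisely the verification of that monotone structure, so your argument is the more self-contained of the two within the threshold family. Conversely, the paper's posterior formulation makes the ``reject when giving up'' step ($\pi_s\le 1-\pi_s$) and the reduction to the all-zeros path slightly more systematic. Both proofs leave the reduction from arbitrary adapted, randomized tests to deterministic threshold tests at the level of a sketch; you are candid about this, and your proposed repair (monotone posterior odds plus affineness of the objective in the randomization) is the right way to close it.
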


Notice that if $\lambda \propto p$, the percentile threshold above is independent of $p$. Applying this setup to the setting where under the null, we observe the firing of neuron $i$ under i.i.d. training examples from $\mu_t$ and a neuron is considered `dead' or inactive if the alternate hypothesis is true, imagine that $p = \mathbb{P}(Z^{\mu_t}_{s,i}=1)$. Further, we assume $\lambda = \alpha p$ ($\alpha < 1/2$); a reasonable assumption which models a larger penalty for late detection of neurons that are highly active. 
It is then optimal to declare neuron $i$ `inactive'  if the length of time it has not fired exceed the $1 - \alpha(1-\alpha)^{-1}$ percentile of the distribution of $A_i^{\mu_t}$. This is the underlying motivation for the SNR heuristic. 

\textbf{Comparison with Reset Schemes:} Neuron reset heuristics such as \citet{sokar2023dormant} define (sometimes complex) notions of neuron `utility' to determine whether or not to re-initialize a neuron. The utility of every neuron is computed over every consecutive (say) $r$ minibatches, and neurons with utility below a threshold are reset. To facilitate a comparison, consider the setting where neurons that do not fire at all over the course of the $r$ mini batches are estimated to have zero utility, and that only neurons with zero utility are re-initialized. 

This reveals an interesting comparison with SNR. The schemes above will re-initialize a neuron after inactivity over a period of time that is  {\em uniform} across all neurons. On the other hand, SNR will reset a neuron after it is inactive for a period that corresponds to a fixed percentile of the inter-firing time distribution of that neuron. Whereas this percentile is fixed across neurons, the corresponding period of inactivity after which a neuron is reset will vary across neurons: shorter for neurons that tend to fire frequently, and longer for neurons that fire less frequently. 

We can make this comparison precise in the context of the hypothesis testing setup above: specifically, consider two neurons with null firing rates $p_1$ and $p_2$ respectively ($p_1 < p_2$), and delay multipliers, $\lambda$, of $\alpha p_1$ and $\alpha p_2$ respectively. By Proposition~\ref{prop:prop1}, under SNR, the first is reset if it is inactive for time at least $\log(\alpha(1-\alpha)^{-1})/\log (1-p_1)$ and the second if it is inactive for time at least $\log(\alpha(1-\alpha)^{-1})/\log (1-p_2)$. In contrast, for a fixed threshold scheme such as \citet{sokar2023dormant}, either neuron would be reset after being inactive for some fixed threshold, say $r^*$. Assume $r^*$ is set to minimize the sum of the error rates of the two neurons while keeping the total delay identical to that for SNR. The proposition below compares the error rate between the two schemes:

\begin{proposition}\label{prop:prop2}
The ratio of total error rate with a fixed threshold $r^*$ to that under SNR scales like
\[
\Omega\left(
\exp
\left(
\log(\alpha(1-\alpha)^{-1})
\left( 
-\frac{1}{2} + \frac{1}{2} \frac{\log (1-p_1)}{\log(1-p_2)}
\right)
\right)
\right)
\]
\end{proposition}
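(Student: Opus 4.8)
The plan is to reduce the statement to explicit formulas for the error rate and delay of a threshold test on a single Bernoulli stream, and then to pin down the fixed threshold $r^*$ through the delay-matching constraint. First I would record the basic quantities for a test that rejects the null as soon as it observes no firing for $T$ consecutive steps. Under $H_1$ the stream is identically zero, so the test always reaches $T$ and rejects; hence the type-2 error vanishes identically and the error rate of such a test equals its type-1 error, namely $\mathbb{P}(\text{no firing in } T \text{ steps}\mid H_0) = (1-p)^{T}$. The associated delay is $\mathbb{E}[\tau\mid H_0] + \mathbb{E}[\tau\mid H_1] = \frac{1-(1-p)^{T}}{p} + T$, since under $H_0$ the stopping time is the first firing capped at $T$, and under $H_1$ it is always $T$.

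Next I would specialize these to the SNR thresholds from Proposition~\ref{prop:prop1}. Writing $L := \log(\alpha(1-\alpha)^{-1}) < 0$, the percentile in Proposition~\ref{prop:prop1} with $\lambda_i = \alpha p_i$ gives the SNR threshold $\bar T_i = L/\log(1-p_i)$ for neuron $i$, and a one-line computation yields $(1-p_i)^{\bar T_i} = e^{L} = \alpha(1-\alpha)^{-1}$. Consequently each neuron's error under SNR is exactly $\alpha(1-\alpha)^{-1}$, independent of $p_i$, so the total SNR error is $2\alpha(1-\alpha)^{-1} = 2e^{L}$, while its total delay is $\bar T_1 + \bar T_2 + \frac{1-2\alpha}{1-\alpha}(p_1^{-1} + p_2^{-1})$.

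Then I would handle the fixed-threshold scheme. Because the error $(1-p_1)^{r^*} + (1-p_2)^{r^*}$ is strictly decreasing in $r^*$ while the total delay $2r^* + \sum_i \frac{1-(1-p_i)^{r^*}}{p_i}$ is strictly increasing, the delay-equality constraint determines $r^*$ uniquely and coincides with the error-minimizing choice. Equating the two total delays and using $0 \le \frac{1-(1-p_i)^{r^*}}{p_i} \le p_i^{-1}$ together with $\frac{1-2\alpha}{1-\alpha}\le 1$, I would bound the correction terms to obtain the upper bound $r^* \le \tfrac12(\bar T_1 + \bar T_2) + \tfrac12(p_1^{-1}+p_2^{-1})$, i.e. $r^* = \tfrac12(\bar T_1+\bar T_2) + O(1)$ for fixed $p_1,p_2$. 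This upper bound is exactly what I need, since $(1-p_1)^{r^*}$ is decreasing in $r^*$: it yields the lower bound $(1-p_1)^{r^*} \ge (1-p_1)^{(\bar T_1+\bar T_2)/2}\cdot(1-p_1)^{(p_1^{-1}+p_2^{-1})/2}$, whose second factor is a positive constant. Since $1-p_1 > 1-p_2$, the fixed-threshold error is dominated by $(1-p_1)^{r^*}=\Theta\big((1-p_1)^{(\bar T_1+\bar T_2)/2}\big)$, and substituting $\bar T_i = L/\log(1-p_i)$ gives fixed-threshold error $= \Theta\big(\exp(\tfrac{L}{2}(1 + \tfrac{\log(1-p_1)}{\log(1-p_2)}))\big)$. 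Dividing by the SNR error $2e^{L}$ collapses the exponent to $L(-\tfrac12 + \tfrac12 \tfrac{\log(1-p_1)}{\log(1-p_2)})$, which is precisely the claimed $\Omega(\cdot)$ bound.

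I expect the main obstacle to be the rigorous treatment of the delay-matching constraint: I must verify that the correction terms I absorb are genuinely $O(1)$ (and in particular that the implicit constants do not grow as $\alpha\to 0$, which is what legitimizes the upper bound on $r^*$), and that the resulting bounded shift in $r^*$ perturbs the error only by a constant multiplicative factor, so that it is swallowed by the $\Omega(\cdot)$. A secondary nuisance is integrality of the thresholds: the percentiles $\bar T_i$ and the optimal $r^*$ need not be integers, but rounding each changes the relevant quantity by at most one step and hence the error by a bounded factor, so it too is absorbed into the constant; I would state this explicitly rather than silently treating the thresholds as continuous.
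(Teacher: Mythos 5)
Your proposal is correct and follows essentially the same route as the paper's own proof: use the delay-matching constraint to upper-bound $r^*$ by roughly $\tfrac12(\bar T_1+\bar T_2)$ plus an additive constant, lower-bound the fixed-threshold error by $(1-p_1)^{r^*}$, and divide by the SNR error $\alpha(1-\alpha)^{-1}$ to obtain the stated exponent. If anything, your version is more careful than the paper's (explicit delay formulas, the monotonicity argument pinning down $r^*$, and the integrality caveat), but the underlying argument is identical.
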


Now recall that $\alpha < 1/2$ and $p_1 < p_2$. The result above then shows that: (a) the error rate under an (optimal) fixed threshold can grow arbitrarily larger than the error rate under SNR as the penalty for delay shrinks to zero and (b) the rate at which this gap grows itself scales with the difference in the nominal firing rates of the neurons under consideration. This provides insight into the relative merits of using a scheme like SNR in lieu of existing reset proposals: {\em resets under SNR detect changes in the firing rate of a neuron faster and more accurately; this matters particularly in situations where there is wide disparity in the nominal firing rates of neurons across the network.}

\section{Experiments}\label{sec:experiments}

We evaluate the efficacy and robustness of SNR on a series of benchmark problems from the continual learning literature, measuring regret with respect to prediction accuracy $l(y,y') = \mathbf{1}\{y\neq y'\}$. As an overview, we will seek to make the following points:

\noindent \textbf{Inactive neurons are an important correlate of plasticity loss:} This is true across several architectures: vanilla MLPs, CNNs and transformers. 
\newline
\noindent \textbf{Lower average loss:} Across a broad set of problems/ architectures from the literature, SNR consistently achieves lower average loss than competing algorithms.
\newline
\noindent \textbf{No consistent second-best competitor:} Among competing algorithms, none emerge as consistently second best to SNR.
\newline
\noindent \textbf{Robustness to hyper-parameters:} The performance of SNR is robust to the choice of its single hyper parameter (the rejection percentile threshold). This is less so for competing algorithms.  

\subsection{Experimental Setup}

Each problem consists of tasks $\mathcal{T}_1, \mathcal{T}_2,\ldots, \mathcal{T}_N$, each of which contains training examples in $\mathcal{X}\times\mathcal{Y}$. A network is trained for a fixed number of epochs per task to minimize cross-entropy loss. We perform an initial hyperparameter sweep over 5 seeds to determine the optimal choice of hyperparameters (see Appendix \ref{sec:hyperparam_sweep}). For each algorithm and problem, we select the hyperparameters that attain the lowest average loss and repeat the experiment on 5 new random seeds. A random seed determines the network's parameter initialization, the generation of tasks, and any randomness in the algorithms evaluated. We evaluate both SGD and Adam as the base optimization algorithm, as earlier literature has argued that Adam can be less performant than SGD in some continual learning settings \citep{dohare2023loss, ashley2021does}. We evaluate on the following problems: 

    \noindent \textbf{Permuted MNIST (PM) \citep{goodfellow2013empirical,dohare2021continual,kumar2023maintaining}:}  A subset of 10000 image-label pairs from the MNIST dataset are sampled for an experiment. A task consists of a random permutation applied to each of the 10000 images. The network is presented with 2400 tasks appearing in consecutive order. Each task consists of a single epoch and the network receives data in batches of size 16.
    \newline
      \noindent \textbf{Random Label MNIST (RM) \citep{kumar2023maintaining,lyle2023understanding}:} A subset of 1200 images from the MNIST dataset are sampled for an experiment. An experiment consists of 100 tasks, where each tasks is a random assignment of labels, consisting of 10 classes, to the 1200 images. A network is trained for 400 epochs on each task with a batch size 16. 
      \newline
     \noindent \textbf{Random Label CIFAR (RC) \citep{kumar2023maintaining,lyle2023understanding}:} A subset of 128 images from the CIFAR-10 dataset are sampled for an experiment. An experiment consists of 50 tasks, where each tasks is a random assignment of labels, consisting of 10 classes, to the 128 images. An agent is trained for 400 epochs on each task with a batch size 16.
     \newline
     \noindent \textbf{Continual Imagenet (CI) \citep{dohare2023loss,kumar2023maintaining}:} An experiment consists of all 1000 classes of images from the ImageNet-32 dataset \citep{chrabaszcz2017downsampled} containing 600 images from each class. Each task is a binary classification problem between two of the 1000 classes, selected at random. The experiment consists of 500 tasks and each class occurs in exactly one task. Each task consists of 1200 images, 600 from each class, and the network is trained for 10 epochs with a batch size of 100. 
     \newline
    \noindent \textbf{Permuted Shakespeare (PS):} We propose this problem to facilitate studying the transformer architecture in analogy to the MNIST experiments. An experiment consists of 32768 tokens of text from Shakespeare's Tempest. For any task, we take a random permutation of the vocabulary of the Tempest and apply it to the text. The network is presented with 500 tasks. Each task consists of 100 epochs and the network receives data in batches of size 8 with a context widow of width 128. We evaluate over 9 seeds.

This experimental setup, for all but Permuted Shakespeare, follows that of \citep{kumar2023maintaining}, with the exceptions of Permuted MNIST which has its task count increased from 500 to 2400, Random Label MNIST which has its task count increased from 50 to 100, and Random Label CIFAR which has its dataset reduced from 1200 to 128 images. \citet{lyle2023understanding} consider variants of the Random Label MNIST and CIFAR problems by framing them as MDP environments for DQN agents. During training, the DQN agents are periodically paused to assess their plasticity by training them on separate, randomly generated regression tasks using the same image datasets.

\subsubsection{Algorithms and Architetcures}

Our baseline in all problems consist simply of using SGD or Adam as the optimizer with no further intervention. We then consider several interventions to mitigate plasticity loss. First, we consider algorithms that employ an explicit reset of neurons: these include SNR, Continual Backprop (CBP) \citep{dohare2021continual}, and ReDO \citep{sokar2023dormant}. Among algorithms that attempt to use regularization, we consider vanilla L2 regularization, L2 Init \citep{kumar2023maintaining}, and Shrink and Perturb \citep{ash2020warm}. Finally, as a potential architectural modification we consider the use of Layer Normalization \citep{ba2016layer}. 

We utilize the following network architectures:  
\newline
\noindent \textbf{MLP}: For Permuted MNIST and Random Label MNIST we use an MLP identical to that in \citet{kumar2023maintaining} which in turn is a slight modification to that in \citet{dohare2023loss}. 
\newline
\textbf{CNN}: For Random Label CIFAR and Continual ImageNet we use a CNN architectures identical to that in \citet{kumar2023maintaining} which in turn is a slight modification to that in \citet{dohare2023loss}. 
\newline
\textbf{Transformer:} We use a decoder model with a single layer consisting of 2 heads, dimension 16 for each head, and with 256 neurons in the feed forward layer with ReLU activations. We deploy this architecture on the Permuted Shakespeare problem using the GPT-2 BPE tokenizer (limited to the set of unique tokens present in the sampled text).
\newline
\textbf{ViT}: In addition, we benchmark a Vision Transformer (ViT) on the Continual ImageNet problem. Our ViT follows the specifications of \citet{leeslow} and \citet{lewandowski2024learning}, using 4×4 input patches, three attention heads, and 12 transformer layers with an embedding dimension $d_{\text{model}} = 192$. This architecture comprises approximately 5.3 million non-embedding parameters. We refer to this variant as CI-ViT to distinguish it from the CNN-based experiments.

\subsection{Results and Discussion}

We separately discuss the results for the first four problems (PM, RM, RC, CI) followed by Permuted Shakespeare and CI-ViT; we observe additional phenomena in the latter experiments which merit separate discussion. 

\begin{table}[htbp]
\centering
\begin{tabular}{lcccc|cccc}
\hline
\textbf{Optimizer} & \multicolumn{4}{c|}{\textbf{SGD}} & \multicolumn{4}{c}{\textbf{Adam}} \\
\hline
\textbf{Algorithm} & \textbf{PM} & \textbf{RM} & \textbf{RC} & \textbf{CI} & \textbf{PM} & \textbf{RM} & \textbf{RC} & \textbf{CI} \\
\hline
No Intv.     & $0.71$  & $0.11$  &  0.18 & $0.78$ & $0.64$    & $0.11$  &  0.15 & $0.58$ \\
SNR          & $\textbf{0.85}$    & $\textbf{0.97}$  &  \textbf{0.99}& $\textbf{0.89}$ & $\textbf{0.88}$    & $\textbf{0.98}$  &  \textbf{0.98} & $\textbf{0.85}$ \\
CBP          & $0.84$    & $0.95$  &  0.96 & $0.84$ & $\textbf{0.88}$    & $0.95$  &  0.33 & $0.82$ \\
ReDO         & $0.83$    & $0.72$  & 0.98 & $0.87$ & $0.85$    & $0.67$  & 0.74 & $0.80$ \\
L2 Reg.      & $0.82$    & $0.80$  & 0.95 & $0.83$ & $\textbf{0.88}$    & $0.95$  & 0.97 & $0.80$ \\
L2 Init      & $0.83$   & $0.91$  & 0.97 & $0.83$ & $\textbf{0.88}$    & $0.96$  & \textbf{0.98} & $0.83$ \\
S\&P         & $0.83$ & 0.92 & 0.97 & $0.85$ & $\textbf{0.88}$ & 0.96 & 0.97 & $0.81$ \\
Layer Norm.  & $0.69$    & $0.14$  & 0.96 & $0.82$ & $0.66$    & $0.11$  &  0.96 & $0.58$ \\
\hline
\end{tabular}
\caption{Average accuracy on the last 10\% of tasks on the benchmark continual learning problems over 5 seeds. Standard deviations are provided in the extended Table~\ref{tab:average_accuracy_complete}.}
\label{tab:average_accuracy}
\end{table}

Table \ref{tab:average_accuracy} shows that across all four problems (PM, RM, RC, CI) and both SGD and Adam, SNR consistently attains the largest average accuracy on the final 10\% of tasks.  For each competitor algorithm there is at least one problem on which SNR attains superior accuracy by at least 5 percentage points with SGD and at least 2 percentage points with Adam. We also see that there is no consistent second-best algorithm with the SGD optimizer while L2 Init is consistently the second-best with the Adam optimizer. 

Another notable property of SNR is its robustness to the choice of rejection percentile threshold. In contrast, its competitors are not robust to the choice of their hyperparemter(s). On all but RM with SGD, SNR experiences a decrease of at most 2 percentage points in average accuracy when varying its rejection percentile threshold across the range of optimal thresholds found across experiments. On the other hand, increasing the hyperparameter strength by a single order of magnitude, as is common in a hyperparameter sweep,  from the optimal value for L2 Init, S\&P, and CBP results in a decrease in average accuracy by at least 72 percentage points. See Appendix \ref{sec:hyperparam_sweep} for detailed tables. 

Next, we turn our attention to the Permuted Shakespeare problem. For the no intervention network with Adam, we see dramatic plasticity loss, as the average loss increases from about 0.15 on the first few tasks to 3.0242 on the last 50 tasks; see Figure \ref{fig:PS_loss}. In Figure \ref{fig:PS_weight_norms} and Figure \ref{fig:PS_neuron_death} we see that this plasticity loss is correlated with increasing weight norms in the self-attention and feedforward layers, persistent neuron inactivity, and a collapse in the entropy of the self-attention probabilities. 

\begin{table}[htbp]
\centering
\begin{tabular}{lccc}
\hline
\textbf{Algorithm} & \textbf{All Tasks} & \textbf{First 50 Tasks} & \textbf{Last 50 Tasks} \\
\hline
L2         & $0.2762 \hspace{0.5em} (0.0309)$    & $0.1560 \hspace{0.5em} (0.0236)$    & $0.3101 \hspace{0.5em}  (0.0425)$ \\
SNR+L2     & $0.2177 \hspace{0.5em}  (0.0196)$    & $0.1370 \hspace{0.5em}  (0.0169)$    & $0.2551 \hspace{0.5em}  (0.0454)$ \\
No Intv.   & $2.7397 \hspace{0.5em} (0.0140)$    & $1.8164 \hspace{0.5em}  (0.0295)$    & $3.0147 \hspace{0.5em}  (0.0250)$ \\
L2 Init    & $1.5052 \hspace{0.5em}  (0.0437)$    & $1.2931 \hspace{0.5em}  (0.0486)$    & $1.5262 \hspace{0.5em} (0.0420)$ \\
SNR        & $2.6872 \hspace{0.5em}  (0.0222)$    & $1.7338 \hspace{0.5em}  (0.0295)$    & $3.0242 \hspace{0.5em}  (0.0408)$ \\
CBP &       2.4922 \hspace{0.5em}  (0.0171) & 1.3732 \hspace{0.5em}   (0.0234) & 2.9410 \hspace{0.5em} (0.0188) \\
L2* & 0.1506 \hspace{0.5em}  (0.0279) & 0.1874 \hspace{0.5em}  (0.0348) & 0.1092 \hspace{0.5em}  (0.0429) \\
SNR+L2* & \textbf{0.1402} \hspace{0.5em}  (0.0246) & \textbf{0.1549} \hspace{0.5em}  (0.0107) & \hspace{0.5em}  \textbf{0.0909} \hspace{0.5em} (0.0177) \\
ReDO & 2.3258 \hspace{0.5em} (0.0356) &  1.3689 \hspace{0.5em} (0.0686) & 2.8119 \hspace{0.5em} (0.0373)\\
\hline
\end{tabular}
\caption{Average loss measured on the final epoch of each task with standard deviations over 9 seeds on the Permuted Shakespeare problem. Note, L2* denotes L2 regularization applied only to the attention weights.}
\label{tab:PS_loss}
\end{table}

\begin{figure}[htbp]
    \centering
    \includegraphics[width=0.48\linewidth]{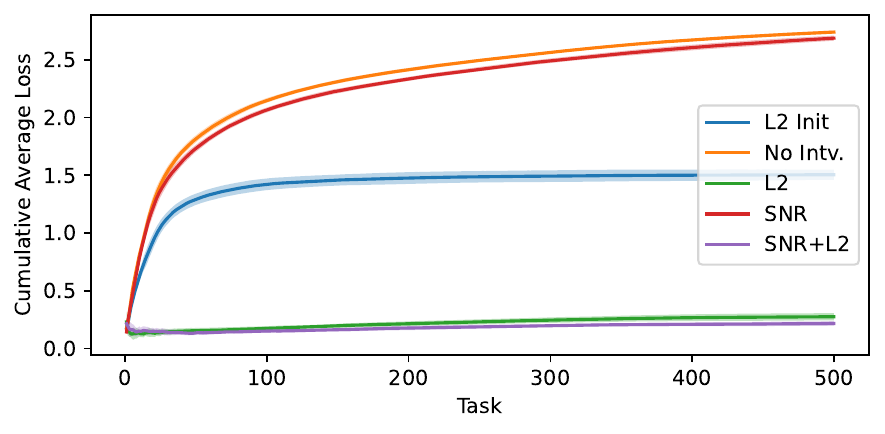}
    \includegraphics[width=0.48\linewidth]{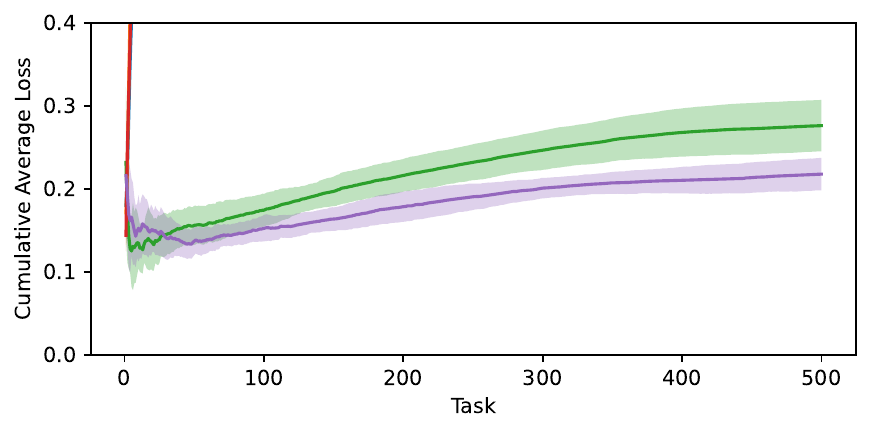}
    \caption{Cumulative average of the loss measured on the final epoch of each task on the Permuted Shakespeare problem. Right panel zooms in to L2 and SNR+L2}
    \label{fig:PS_loss}
\end{figure}

We see that resets are by themselves insufficient in mitigating plasticity loss, providing at most a marginal improvement over no intervention. This is unsurprising since neurons are only present in the feedforward layers, unlike the MLP and CNN architectures in the earlier experiments. As such, regularization appears necessary and we see that, over the last 50 tasks, L2 regularization attains an average loss of 0.3101 in contrast to 3.0147 and 3.0242 for no intervention and SNR. This improvement in performance coincides with stable weight norms and non-vanishing average entropy of self-attention probabilities for L2 regularization. In contrast to the earlier problems, L2 Init fares worse than L2 regularization and experiences substantial loss of plasticity, although to a lesser extent than the no intervention network.

While L2 regularization addresses weight blowup, neuron death remains present; see Figure~\ref{fig:PS_neuron_death}. The average loss with L2 increases from 0.1560, over the first 50 tasks, to 0.3101, over the final 50 tasks. This prompts us to consider using SNR in addition to L2 regularization. This largely eliminates neuron death (see the right panel of Figure~\ref{fig:PS_neuron_death}), while stabilizing weight norms and maintaining entropy of self-attention probabilities, providing the lowest loss (0.2551) over the final 50 tasks. As an ablation, we evaluate performance when applying L2 regularization only to the attention weights, which we denote by L2* and SNR+L2* in Table \ref{tab:PS_loss}.

\begin{figure}[htbp]
    \centering
    \includegraphics[width=0.48\linewidth]{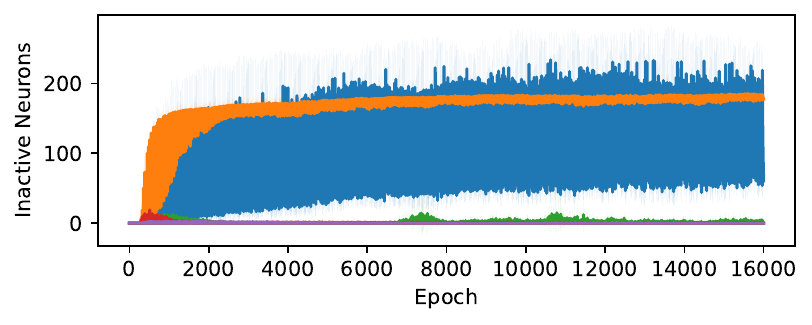}
    \includegraphics[width=0.48\linewidth]{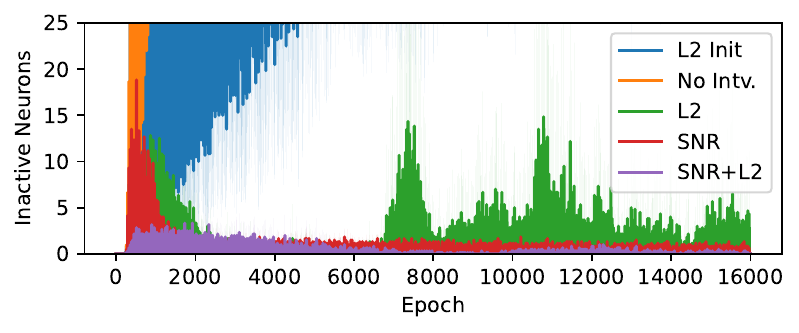}
    \caption{Number of inactive neurons for 5 consecutive training steps on the PS experiment.}
    \label{fig:PS_neuron_death}
\end{figure}

    \begin{figure}[htbp]
    \centering
    \includegraphics[width=0.24\linewidth]{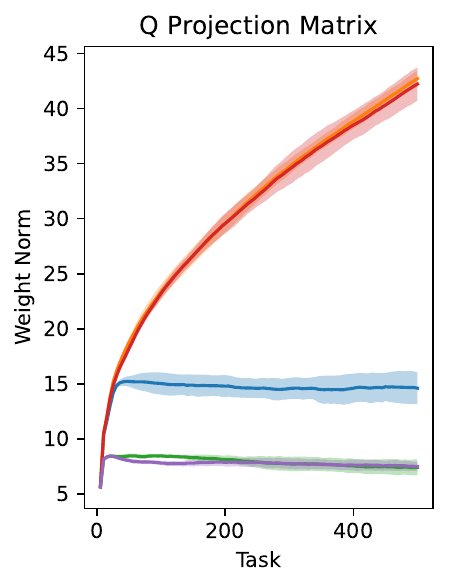}
    \includegraphics[width=0.24\linewidth]{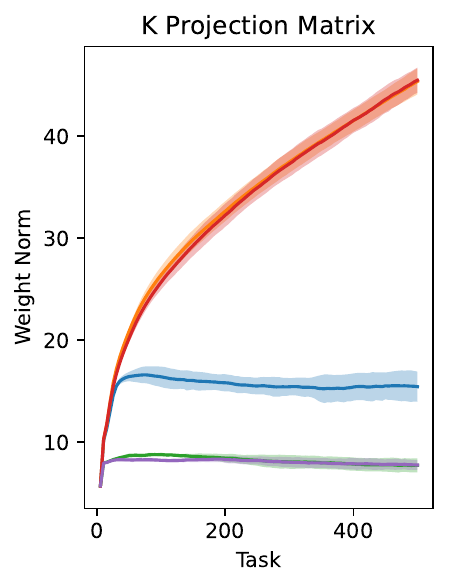}
    \includegraphics[width=0.24\linewidth]{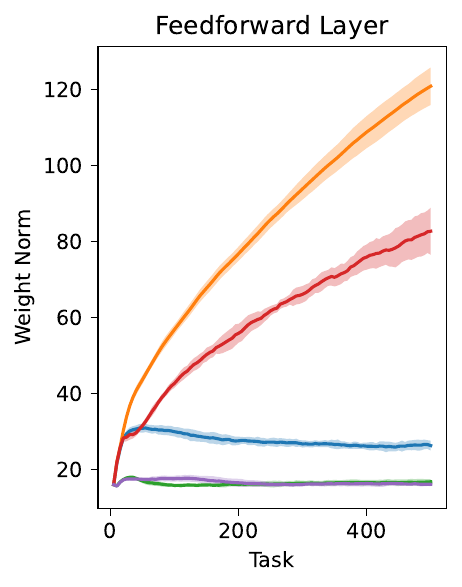}
    \includegraphics[width=0.24\linewidth]{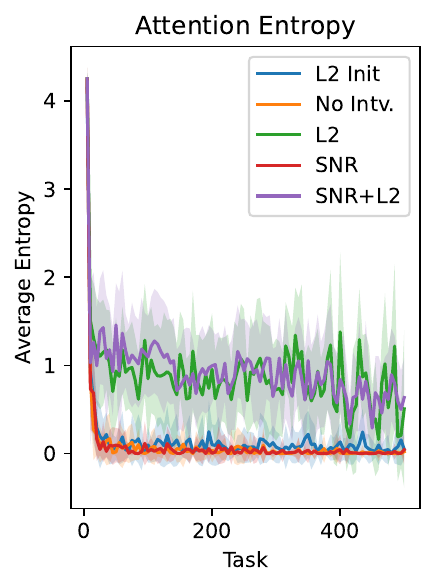}
    \caption{Weight norms and attention-score entropy on the Permuted Shakespeare experiment.}
    \label{fig:PS_weight_norms}
\end{figure}

Finally, we turn to our CI-ViT experiment, which reproduce many of the same behaviors observed in the Permuted Shakespeare benchmark, a consequence of the transformer architecture. As shown in Figures \ref{fig:ViT_acc}, \ref{fig:ViT_weight_norms}, and \ref{fig:ViT_attention_neuron_death}, plasticity loss in CI-ViT is accompanied by steadily growing weight norms in both the KQV projection matrices and the feed-forward MLP layers, by neuron inactivity, and by a collapse in self‐attention entropy. As with the PS experiment, resets alone remain insufficient to prevent plasticity loss when training a ViT on Continual ImageNet. Table \ref{tab:ViTResults} shows that CBP, ReDO, and SNR attain average training accuracies of 0.7037, 0.6380, and 0.6555, respectively, over the final epoch of the last 50 tasks, whereas the regularization-based methods L2 and L2-Init achieve 0.8140 and 0.9196. All algorithms  except No Intervention limit feedforward weight norm growth due to regularization or neuron-resets. However, No Intervention and the purely reset-based methods experience unbounded weight norm growth of the KQV projection matrices and a collapse of the entropy of the self-attention probabilities. As stated earlier, this is unsurprising as reset-based methods do not target self-attention layers. To remedy this, we consider the variants SNR+L2 and SNR+L2*. We find that SNR-L2*, like in PS, is the most performant algorithm, attaining an average training accuracy of 0.93372 over the final epoch of the last 50 tasks, while its closest competitor, L2-Init attains 0.91955. Similarly to the MLP and CNN based experiments, we observe that for the transformer-based experiments, PS and CI-ViT, there is no consistent closest competitor to SNR+L2* as L2 is its closest competitor in PS and L2-Init is its closest competitor in CI-ViT.

\begin{table}[htbp]
\label{tab:ViTResults}
\centering
\begin{tabular}{lc|}
\hline
\textbf{Algorithm} & \textbf{Training Accuracy (Std. Dev.)} \\ \hline
No Intv. & 0.64315 (0.01403) \\ \hline
L2 & 0.81399 (0.00811) \\ \hline
L2Init & 0.91955 (0.00385) \\ \hline
S\&P & 0.54589 (0.00072)\\ \hline
CBP & 0.70365 (0.03138) \\ \hline
ReDO & 0.63802 (0.01893) \\ \hline
SNR & 0.65547 (0.02421) \\ \hline
SNR+L2 & 0.87326 (0.00714) \\ \hline
SNR+L2* & \textbf{0.93372} (0.00920) \\ \hline
\end{tabular}
\caption{Average training accuracy on the last epoch of the final 50 tasks, averaged over 5 seeds with standard deviations in brackets, for the CI-ViT experiment.}
\label{tab:training_accuracies}
\end{table}

\begin{figure}[htbp]
    \centering
    \includegraphics[width=0.65\linewidth]{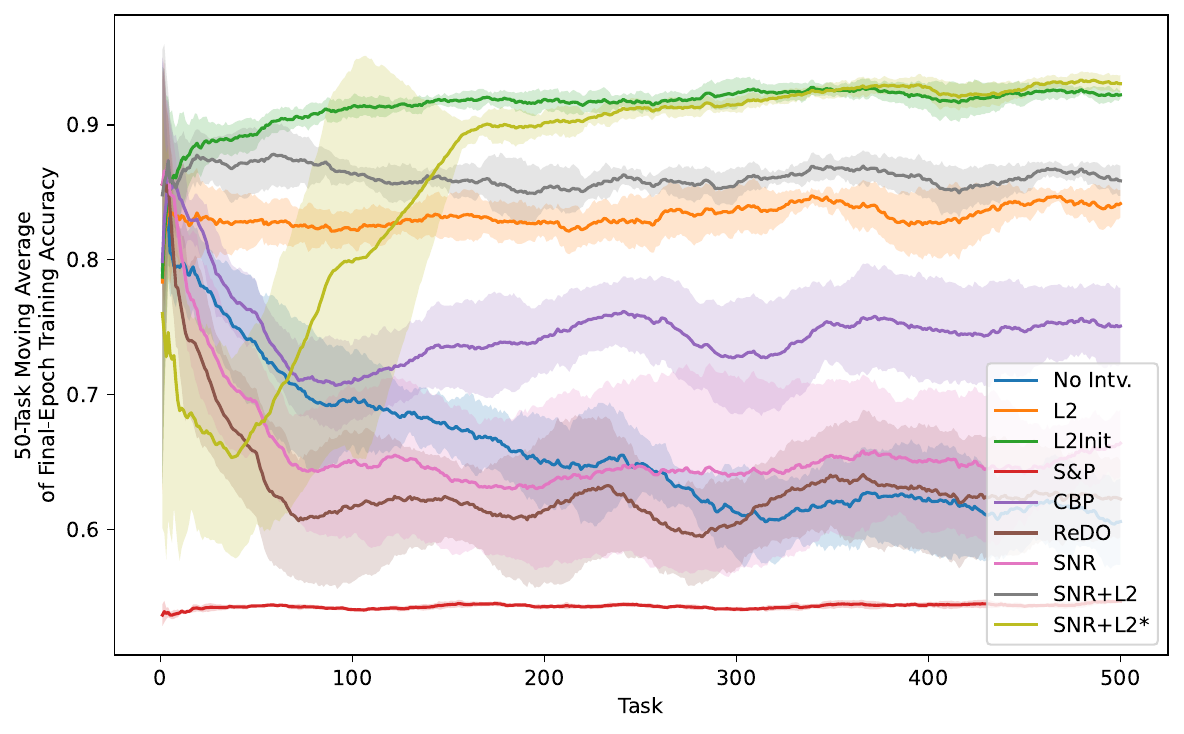}
    \caption{50-task moving average of final-epoch training accuracy of the CI-ViT experiment.}
    \label{fig:ViT_acc}
\end{figure}

\begin{figure}[htbp]
    \centering
    \includegraphics[width=0.4\linewidth]{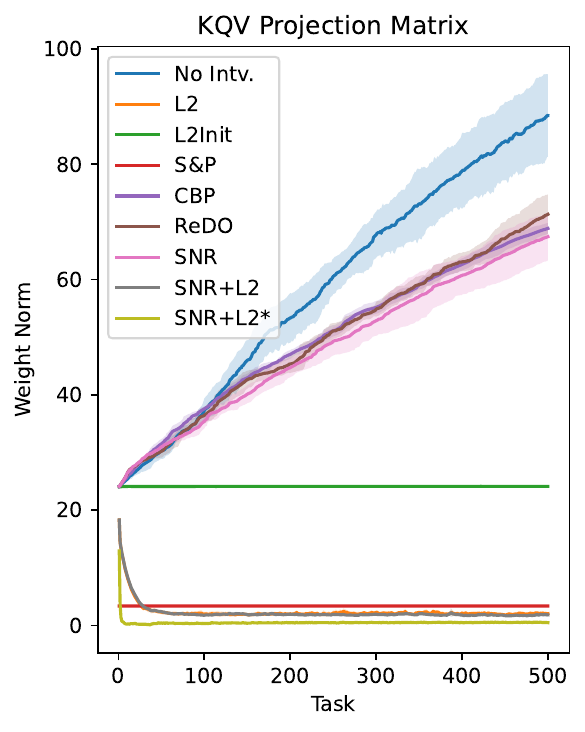}
    \includegraphics[width=0.4\linewidth]{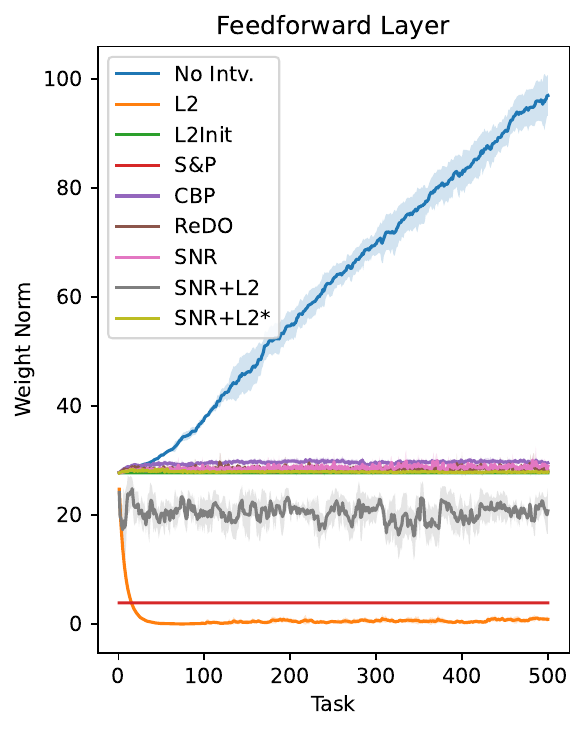}
    \caption{Weight norms on the Continual ImageNet experiment.}
    \label{fig:ViT_weight_norms}
\end{figure}

\begin{figure}[htbp]
    \centering
    \includegraphics[width=0.4\linewidth]{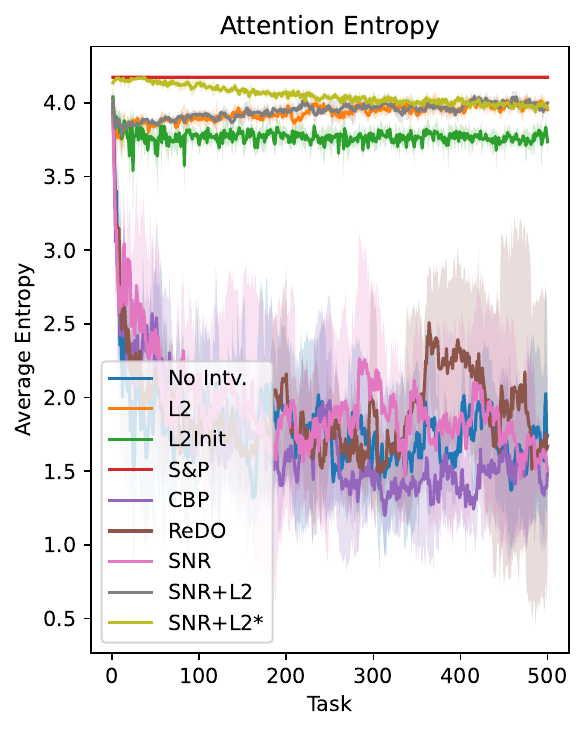}
    \includegraphics[width=0.4\linewidth]{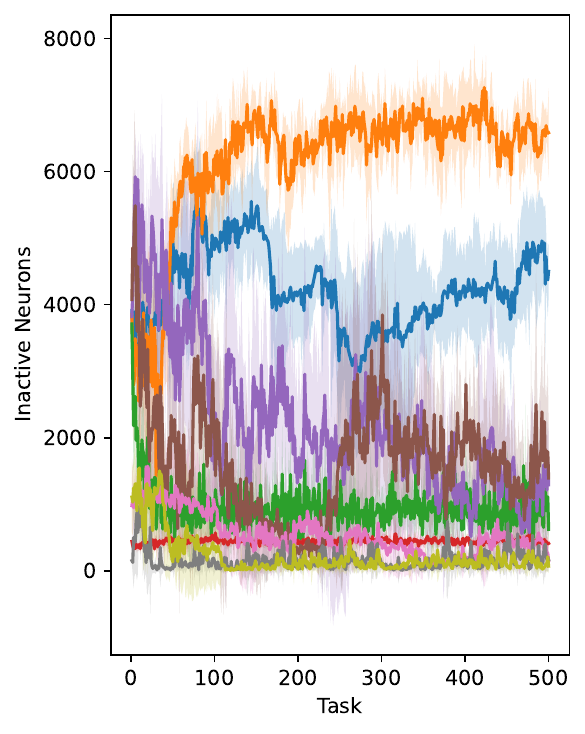}
    \caption{Attention-score entropy and number of inactive neurons at any timestep, averaged over each task, on the Continual ImageNet experiment. For reference, our ViT model has 9216 neurons in total.}
    \label{fig:ViT_attention_neuron_death}
\end{figure}

\subsubsection{Scaled Permuted Shakespeare} 

While the scale of our Permuted Shakespeare problem serves as a simple benchmark problem for evaluating a series of continual learning algorithms and hyperparameter choices for language models, it is also of interest to investigate the effect of model and dataset scale on plasticity. To this end, we scale the number of non-embedding weights in our transformer network by a factor of $N=16$, increasing the number of heads to 8 and number of neurons to 1024. In line with scaling laws \citep{kaplan2020scaling}, we increase the size of our dataset by a factor of $16^{0.74}$, specifically to 254'976 tokens per task. The rest of the problem setup remains unchanged; we train the network for 100 epochs on 500 tasks in sequence. To facilitate the larger token count, we train on a sample of 254'976 tokens worth of text from the complete set of plays by William Shakespeare. 

We limit our experiment to 4 random seeds, scales 1 and 16, evaluating only SNR+L2 and L2 with hyperparameters $\eta = 0.05$ and $\lambda = 10^{-4}$, presenting our results in Table \ref{tab:PS_scaled}. We first note that as model and dataset size grow, the gap in average loss between L2-regularization and SNR+L2-regularization grows substantially. Simultaneously, we see a dramatic increase in the proportion of inactive neurons with L2-regularization. At any time step, on average $0.06\%$ of neurons are inactive at scale $N=1$ while $32.9\%$ are inactive at scale $N=16$. These results suggest that resets can play a critical role in maintaining plasticity in large-scale language models. See Appendix A which shows a scale 256 experiment, but run only up to 50 tasks, showing similar merits.

\begin{table}[htbp]
\centering
\begin{tabular}{l|c|c|c}
\hline
\textbf{Algorithm} & \textbf{Loss (All Tasks)} & \textbf{Loss (Last 50 Tasks)} & \textbf{Dead Neuron Rate} \\
\hline
L2 - Scale 1 & 0.159 \hspace{0.5em} (0.025) & 0.152 \hspace{0.5em} (0.016) & 0.06\% \\
SNR+L2 - Scale 1 & 0.154 \hspace{0.5em} (0.008) & 0.141 \hspace{0.5em} (0.031) & 0.00\% \\
\hline
L2 - Scale 16 & 0.377 \hspace{0.5em} (0.016) & 0.410 \hspace{0.5em} (0.052)& 32.9\% \\
SNR+L2 - Scale 16 & 0.324 \hspace{0.5em} (0.028) & 0.332 \hspace{0.5em} (0.073) & $1.41\cdot 10^{-6}$\%\\
\hline
\end{tabular}
\caption{Average loss on the final epoch of each task for the scaled Permuted Shakespeare experiments, with means and standard deviations reported over 4 seeds.}
\label{tab:PS_scaled}
\end{table}

\subsubsection{Generalization}

\cite{leeslow} recently propose separating training loss from test loss in studying plasticity; the extant literature and the present work focuses largely on measuring training loss on each task. As a complement, we briefly consider measuring test loss on a holdout set for each task. We consider three setups: first, we consider the PM problem, and measure test loss on a set of 10000 image label-pairs for each task; we annotate this setup PM-G1. Our second setup is a modification of PM-G1: in each task we add noise to the labels by randomly re-labeling a fraction of the training images. We decrease the fraction of images with noisy labels from 50\% on the first task to 0 on the last; we annotate this task PM-G2. Finally in analogy to PM-G2, we consider a variant of the RC problem (RC-G1) where we decay the fraction of random labels from 50\% to 0 linearly across tasks. PM-G2 and RC-G1 are analogous to the setup in \cite{leeslow}.

Detailed results of these experiments are presented in Appendix~\ref{sec:appendix_generalization}; we draw the following conclusions: first, SNR displays similar relative merits to competing algorithms as in our main body of experiments on training loss. Second, through a careful study of variants of the setup in PM-G2 where we alter the number of passes through the dataset relevant to each task, we point out an important issue that requires careful attention if one is to study test loss: specifically, confounding the effects of overfitting with plasticity loss.

\section{Discussion and Limitations}\label{sec:discussion}

A common explanation for plasticity loss is that the network weights obtained from minimizing loss over some task yield poor initializations for a subsequent task -- this explanation continues to motivate the vast majority of algorithms that attempt to combat plasticity loss. Specifically, regularization based algorithms can be viewed as regularizing towards a `good' initial set of weights, whereas reset based algorithms can be viewed as explicitly reinitializing weights to good random initializations.

\textbf{Theoretical Motivation: }Our theoretical development showed that regularization methods themselves are insufficient at combatting the plasticity loss problem, at least in the case of learning ReLUs (Theorems \ref{thm:PositiveRegretGuaranteeIntro} and \ref{cor:NegativeResultFullIntro}). Further we showed that existing proposals to detect whether it was appropriate to reset a ReLU (based on ad hoc notions of a neuron's utility) could have high type 1 error rates relative to an optimal resetting mechanism (Proposition~\ref{prop:prop2}). SNR was designed to minimize the error rate in solving this detection problem (Proposition~\ref{prop:prop1}).

\textbf{SOTA Performance: } Across a range of experiments we see that neuron inactivity is an important correlate of plasticity loss, and that SNR consistently outperforms other reset based methods as well as regularization methods (Table~\ref{tab:average_accuracy}, Figure~\ref{tab:PS_loss}). We also observed that SNR was robust to its hyper-parameters while the adhoc utility schemes employed by other reset methods tended to make them somewhat brittle (Appendix~\ref{sec:hyperparam_sweep}). Finally, the extant literature has focused on measuring plasticity loss through the lens of training loss, but it is also natural to ask about test loss. We see that SNR enjoys similar relative merits in this context as well (Table~\ref{tab:generalization_results}), but one has to be careful to not confound issues of overfitting with plasticity loss (Figure~\ref{fig:CIFAR_16_100_compare}).

\textbf{Attention Menchanisms: }Whereas neuron death is an important correlate of plasticity loss, in the case of language models, a crucial component of the architecture -- the attention mechanism -- does not involve neurons so that the notion of resetting is irrelevant to that component. We have shown that plasticity loss occurs nonetheless and that it is associated with a collapse in the entropy of the attention layer along with neuron death in the feedforward layers (Figures 3 and 4). As such, we see that combining our resets along with L2 regularization of the attention mechanism is important to preserve plasticity (Table~\ref{tab:PS_loss}). Language models also give us a natural substrate to consider the plasticity loss phenomenon vis-a-vis scaling laws, and we see similar relative merits across several orders of magnitude of model scales (Table~\ref{tab:PS_scaled}, Appendix~\ref{tab:scale256}).

\textbf{Limitations: }We recognize a few limitations in the present work. First, whereas we have shown the plasticity loss phenomenon over several orders of model scale our largest models have up to 5M parameters. This is large in the context of the literature on plasticity loss but small in a practical sense. We believe that overcoming this requires understanding ways of exploring this phenomenon without the brute force effort of feeding a model a large sequence of tasks (which is fundamentally serial and hard to accelerate). A second limitation is the theoretical characterization of the phenomenon itself; the approach in the literature has been largely phenomenological -- fixing conjectured root causes for what is really an increase in dynamic regret over time. While Section 4 took a first step, future work ought to connect the plasticity loss phenomenon tightly with the optimization landscape.

\bibliographystyle{plainnat}
\bibliography{bibliography}

\appendix
\section{Additional Experimental Results}

\subsection{Even Larger Scale Experiment for Permuted Shakespeare}

We increase the scale further for the Permuted Shakespeare experiment from our original model by a factor of 256, resulting in a model with 5.1 million non-embedding parameters; results are in Table \ref{tab:scale256}. In accordance with scaling laws, we simultaneously increase the number of tokens or examples per task by a factors of $256^{0.74}$, resulting in 1.9 million tokens or 15'500 training examples, with a context window of 128, for the 256-scale model. Due to computational constraints, we reduce the number of tasks from 500 to 50, but keep a training regime of 100 epochs per task. At the largest scale, over the 50 tasks, our model is trained for 100 epochs over 99 million tokens or 775'000 distinct 128-token-long training examples.

The scale of our largest model is comparable to the scale of the largest transformers considered in recent literature on plasticity loss, namely ViT Tiny (5 million parameters) in \citet{leeslow}. The largest dataset on which \citet{leeslow} train on is the Tiny Imagenet dataset consisting of 100'000 training examples, on which models are trained for 100 epochs; our largest experiment consists of 775'000 training examples. It is important to consider the scale of datasets used in continual learning experiments as the phenomenon of plasticity loss has been shown to be correlated with the amount of data on which a network trains \citep{kumar2023continual, dohare2021continual}.

\begin{table}[h!]\label{tab:Scale256}
\centering
\begin{tabular}{lccc}
\hline
\textbf{Algorithm}  & \textbf{Train Loss at Scale 256} \\
\hline
SNR+L2 & 0.4576 \hspace{0.5em} (0.0017) \\
\hline
L2 & 0.4681 \hspace{0.5em} (0.0025) \\
\hline
\end{tabular}
\caption{Average training loss on the final epoch over all 50 tasks of the 256-scale Permuted Shakespeare problem with standard deviations reported for 5 random seeds.}
\label{tab:scale256}
\end{table}

\textbf{It is important to note that relative to Table~\ref{tab:PS_scaled} where we show plasticity loss for 500 tasks, the above experiment has only been run to 50 tasks (due to computational budget constraints) and already shows a gap between SNR+L2 and L2.}

\subsection{Generalization Results\label{sec:appendix_generalization}}

 Table~\ref{tab:generalization_results} shows test loss on the last 10\% of tasks for PM-G1, PM-G2 and RC-G1. Figure 5 shows the test accuracy for PM-G1 and PM-G2. We see that SNR enjoys similar relative merits to its competitors in PM-G1 and PM-G2. For RC-G1, it appears that there is no plasticity loss and as such all of the different algorithms perform similarly. Notice that this is not contrary to our observations in RC -- there the labels across tasks were entirely unrelated.


We consider next a slight tweak on RC-G1. Specifically, each task in RC-G1 consists of 16 epochs; what happens if we do more? The bottom three curves of Figure 6 show the results of doing 100 epochs for each task. We see here that test loss for no intervention and SNR begin to suffer. As it turns out this is misleading -- specifically, we see that the test accuracy for all three algorithms is significantly below test accuracy when one does only 16 epochs (as reported in Table 5). That is, the decay in test performance is likely attributable to overfitting in each task, and while L2 overfits as well, it is unsurprisingly more resilient. Put a different way, this overfitting can be controlled by the usual tools to combat overfitting. The top three curves of Figure 6 show that SNR performs equivalently to L2 showing that early stopping would accomplish this comfortably in this context.

\begin{figure}
    \centering
    \includegraphics[width=0.48\linewidth]{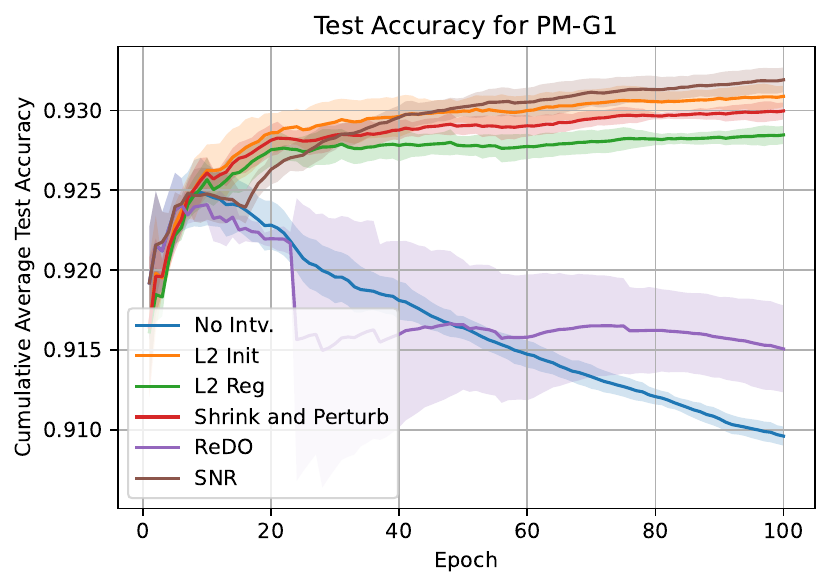}
    \includegraphics[width=0.48\linewidth]{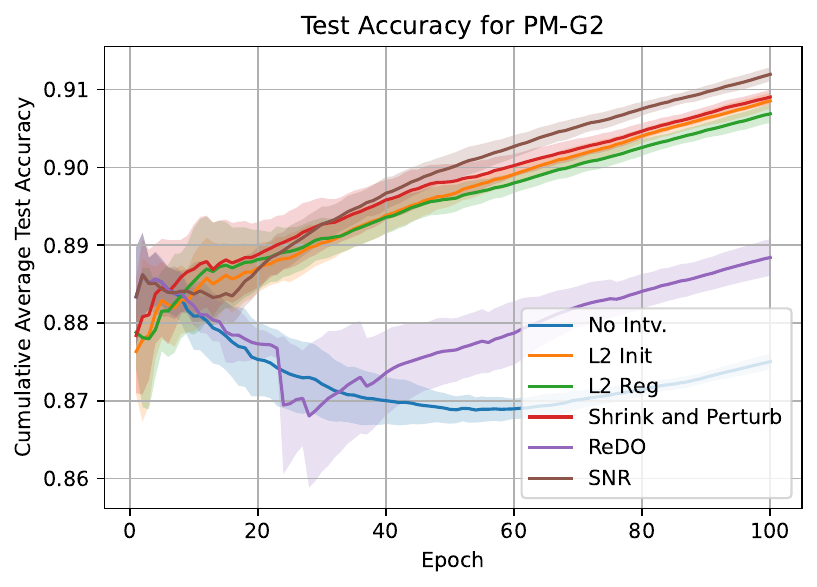}
    \caption{Cumulative average test accuracy for PM-G1 and PM-G2 experiments.}
    \label{fig:PM_generalization}
\end{figure}

\begin{figure}[htbp]
    \centering
    \includegraphics[width=0.5\linewidth]{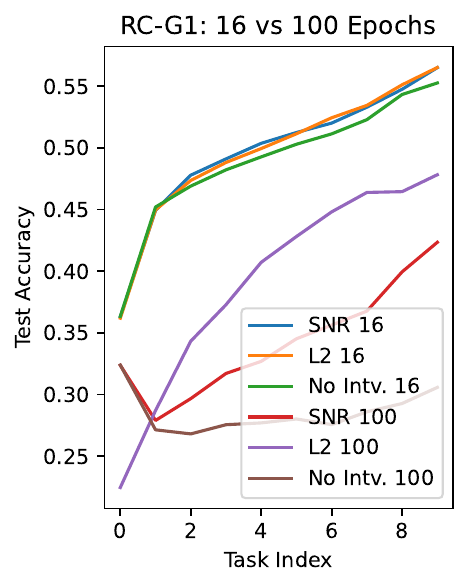}
    \caption{Test accuracy for the RC-G1 experiments evaluated with 16 and 100 epochs per task.}
    \label{fig:CIFAR_16_100_compare}
\end{figure}


\begin{table}[htbp]
\centering
\begin{tabular}{@{}lccc@{}}
\hline
\textbf{Algorithm} & \textbf{PM-G1} & \textbf{PM-G2} & \textbf{RC-G1} \\ \hline
No Intv.  & 0.8999 \hspace{0.5em} (0.0041) & 0.8931 \hspace{0.5em} (0.0013) & \textbf{0.5526} \hspace{0.5em} (0.0034) \\
\hline
L2 Init   & 0.9320 \hspace{0.5em} (0.0037) & 0.9284 \hspace{0.5em} (0.0032) & N/A        \\
\hline
L2    & 0.9297 \hspace{0.5em} (0.0034) & 0.9258 \hspace{0.5em} (0.0032) & \textbf{0.5651} \hspace{0.5em} (0.0023)          \\
\hline
S\&P      & 0.9313 \hspace{0.5em} (0.0033) & 0.9281 \hspace{0.5em} (0.0027) & N/A        \\
\hline
ReDO      & 0.9086 \hspace{0.5em} (0.0028) & 0.9083 \hspace{0.5em} (0.0021) & N/A        \\
\hline
SNR       & \textbf{0.9318} \hspace{0.5em} (0.0001) & \textbf{0.9325 }\hspace{0.5em} (0.0029) & \textbf{0.5650} \hspace{0.5em} (0.0013) \\
\hline
\end{tabular}
 \caption{Test accuracy over the last $10\%$ of tasks (mean and standard deviation) for PM-G1, PM-G2, and RC-G1.}
\label{tab:generalization_results}
\end{table}

\section{Complete Results for Benchmark Problems}

To maintain brevity in the main body of the paper, we include here the complete results for the benchmark problems PM, RC, RM, and CI which include both the mean and standard deviation of terminal task accuracies over random seeds.

\begin{table}[htbp]
\centering
\begin{tabular}{lcccc}
\hline
\textbf{Optimizer} & \multicolumn{4}{c}{\textbf{SGD}} \\
\hline
\textbf{Algorithm} & \textbf{PM} & \textbf{RM} & \textbf{RC} & \textbf{CI} \\
\hline
No Intv.     & $0.710$ (0.007) & $0.113 (0.004)$  &  0.180 (0.011) & $0.784$ (0.019) \\
SNR          & $\textbf{0.851} (0.002)$    & $\textbf{0.975}$ (0.001) &  \textbf{0.987} (0.002) & $\textbf{0.888}$ (0.010) \\
CBP          & $0.844 (0.002)$    & $0.951$ (0.007) &  0.961 (0.011) & $0.840$ (0.015)\\
ReDO         & $0.831$ (0.013)   & $0.716$ (0.024) & 0.981 (0.003) & $0.869$ (0.038) \\
L2 Reg.      & $0.818$ (0.001)    & $0.803$ (0.011) & 0.952 (0.006) & $0.833$ (0.011) \\
L2 Init      & $0.829$ (0.001)  & $0.913$ (0.001) & 0.966 (0.002) & $0.832$ (0.010) \\
S\&P         & $0.826$ (0.002) & 0.920 (0.009) & 0.971 (0.004) & $0.853$ (0.006)\\
Layer Norm.  & $0.687$ (0.009)   & $0.143$ (0.015) & 0.959 (0.005) & $0.819$ (0.009) \\
\hline
\textbf{Optimizer} & \multicolumn{4}{c}{\textbf{Adam}} \\
\hline
\textbf{Algorithm} & \textbf{PM} & \textbf{RM} & \textbf{RC} & \textbf{CI} \\
\hline
No Intv.     & $0.641$ (0.007)    & $0.114$ (0.005) &  0.151 (0.005)& $0.581$ (0.081)\\
SNR          & $\textbf{0.889} (0.001)$  & $\textbf{0.982} (0.001) $  &  \textbf{0.976} (0.002) & $\textbf{0.847}$ (0.005) \\
CBP          & $0.876$ (0.001)   & $0.948$ (0.003)  &  0.331 (0.312) & $0.818$ (0.005)\\
ReDO         & $0.846$ (0.002)   & $0.671$ (0.021) & 0.744 (0.131) & $0.803$ (0.063)\\
L2 Reg.      & $0.876 (0.002)$    & $0.948$ (0.002) & 0.967 (0.011) & $0.803$ (0.009)\\
L2 Init      & $0.883$ (0.002)   & $0.961$ (0.003) & \textbf{0.976} (0.002) & $0.827$ (0.008) \\
S\&P         & $0.876$ (0.002) & 0.955 (0.006) & 0.971 (0.005) & $0.814$ (0.005)\\
Layer Norm.  & $0.662$ (0.001)   & $0.113$ (0.005) &  0.955 (0.005) & $0.651$ (0.053)\\
\hline
\end{tabular}
\caption{Average accuracy on the last 10\% of tasks on the benchmark continual learning problems with standard deviations over 5 seeds.}
\label{tab:average_accuracy_complete}
\end{table}

\section{Additional Experimental Details and Hyperparameter Sweep}\label{sec:hyperparam_sweep}

With SGD we train with learning rate $10^{-2}$ on all problems except Random Label MNIST, for which we train with learning rate $10^{-1}$. With Adam we train with learning rate $10^{-3}$ on all problems, including Permuted Shakespeare and we use the standard parameters of $\beta_1 = 0.9, \beta_2 = 0.999$, and $\epsilon = 10^{-7}$. For Permuted Shakespeare we train our networks solely with Adam. The learning rates were selected after an initial hyperparameter sweep.

For each algorithm we vary its hyperparameter(s) by an appropriate constant over 7 choices, effectively varying the hyperparameters over a log scale. With the exception of the Permuted Shakespeare experiment, we limit over hyperparameter search to 5 choices. In Table \ref{tab:hyperparam_sweep_benchmarks} we provide the hyperparameter sweep for the 4 benchmark problems. CBP's replacement rate $r$ is to be interpreted as one replacement per layer every $r^{-1}$ training examples, as presented in \citet{dohare2024loss}. ReDO's reset frequency $r$ determines the frequency of resets in units of tasks, as implemented and evaluated in \citet{kumar2023maintaining}. 

\begin{table}[hbp]
  \centering
  \begin{tabular}{lllllllll}
    \multicolumn{6}{r}{Hyperparameter Strength}                   \\
    \hline
    Algorithm  & 0  & 1 & 2 & 3 & 4 & 5 & 6    \\
    \hline
    L2 Reg. ($\lambda$)  &  $10^{-6}$ & $10^{-5}$ & $10^{-4}$ & 
    $10^{-3}$ & $10^{-2}$ & $10^{-1}$  & $10^{0}$  \\
    \hline
    L2 Init ($\lambda$)& $10^{-6}$ & $10^{-5}$ & $10^{-4}$ & $10^{-3}$ & $10^{-2}$ & $10^{-1}$  & $10^{0}$ \\
    \hline
    S\&P (1-p) & $10^{-8}$ & $10^{-7}$  & $10^{-6}$ & $10^{-5}$ & $10^{-4}$ & $10^{-3}$ & $10^{-2}$    \\
    \hline
    S\&P $(\sigma)$ & $10^{-6}$ & $10^{-5}$ & $10^{-4}$ & $10^{-3}$ & $10^{-2}$ & $10^{-1}$ & $10^{0}$   \\
    \hline
    CBP $(r)$ & $10^{-7}$  & $10^{-6}$ & $10^{-5}$ & $10^{-4}$ & $10^{-3}$ & $10^{-2}$ & $10^{-1}$   \\
    \hline
    ReDO $(\tau)$ & 0 & 0.01 & 0.02 & 0.04 & 0.08 &  0.16 & 0.32 \\ 
    \hline
    ReDO $(r)$ & 64 & 32 & 16 & 8 & 4 & 2 & 1 \\
    \hline
    SNR $(\eta)$ & 0.08 & 0.04 & 0.02 & 0.01 & 0.005 & 0.0025 & 0.00125 \\
    \hline
  \end{tabular}
    \caption{Hyperparameter sweep for the Permuted MNIST (PM), Random Label MNIST (RM), Random Label CIFAR (RC), and Continual ImageNet (CI) problems.}
\label{tab:hyperparam_sweep_benchmarks}
\end{table}

\begin{table}[hbp]
  \centering
  \begin{tabular}{llllll}
    \multicolumn{5}{r}{Hyperparameter Strength} \\
    \hline
    Algorithm  & 0  & 1 & 2 & 3 & 4  \\
    \hline
    L2 Reg. ($\lambda$)  &  $10^{-6}$ & $10^{-5}$ & $10^{-4}$ & 
    $10^{-3}$ & $10^{-2}$   \\
    \hline
    L2 Init ($\lambda$)& $10^{-6}$ & $10^{-5}$ & $10^{-4}$ & $10^{-3}$ & $10^{-2}$ \\
    \hline
    SNR $(\eta)$ & 0.1 & 0.05 & 0.03 & 0.01 & 0.001 \\
    \hline
    SNR + L2 Reg $(\eta)$ & 0.1 & 0.05 & 0.03 & 0.01 & 0.001 \\
    \hline
  \end{tabular}
   \caption{Hyperparameter sweep for the Permuted Shakespeare problem. For the combination of SNR and L2 regularization we use the regularization strength of $10^{-4}$, the best performing regularization strength for L2 regularization, and vary the rejection percentile threshold $\eta$}.
\label{tab:hyperparam_sweep_PS}
\end{table} 

\begin{table}
  \centering
  \begin{tabular}{l|l|l|l|l|l}
    \multicolumn{5}{r}{Hyperparameter Strength} \\
    \hline
    Algorithm  & 0  & 1 & 2 & 3 & 4  \\
    \hline
    L2 Reg. ($\lambda$)  & $1.284$ & $0.883$ & $0.348$ & $0.853$ & $4.269$\\
  & $(0.037)$ & $(0.059)$ & $(0.093)$ & $(0.061)$ & $(0.013)$\\
    \hline
    L2 Init ($\lambda$)& $1.791$ & $1.481$ & $1.641$ & $2.429$ & $5.598$\\
  & $(0.056)$ & $(0.050)$ & $(0.067)$ & $(0.124)$ & $(0.030)$\\
    \hline
    SNR $(\eta)$ & $3.034$ & $3.024$ & $3.012$ & $3.027$ & $3.025$\\
 & $(0.022)$ & $(0.041)$ & $(0.025)$ & $(0.029)$ & $(0.045)$\\
    \hline
    SNR + L2 Reg $(\eta)$  & $0.315$ & $0.276$ & $0.325$ & $0.293$ & $0.269$\\
 & $(0.084)$ & $(0.078)$ & $(0.068)$ & $(0.065)$ & $(0.025)$\\
    \hline
  \end{tabular}
   \caption{Average loss of the final epoch of each task in Permuted Shakespeare, averaged over the final 50 tasks, reported as mean (standard deviation) over 5 random seeds.} 
\label{tab:hyperparam_sweep_losses}
\end{table} 

\begin{figure}[htbp]
    \centering
    \includegraphics[width=0.48\linewidth]{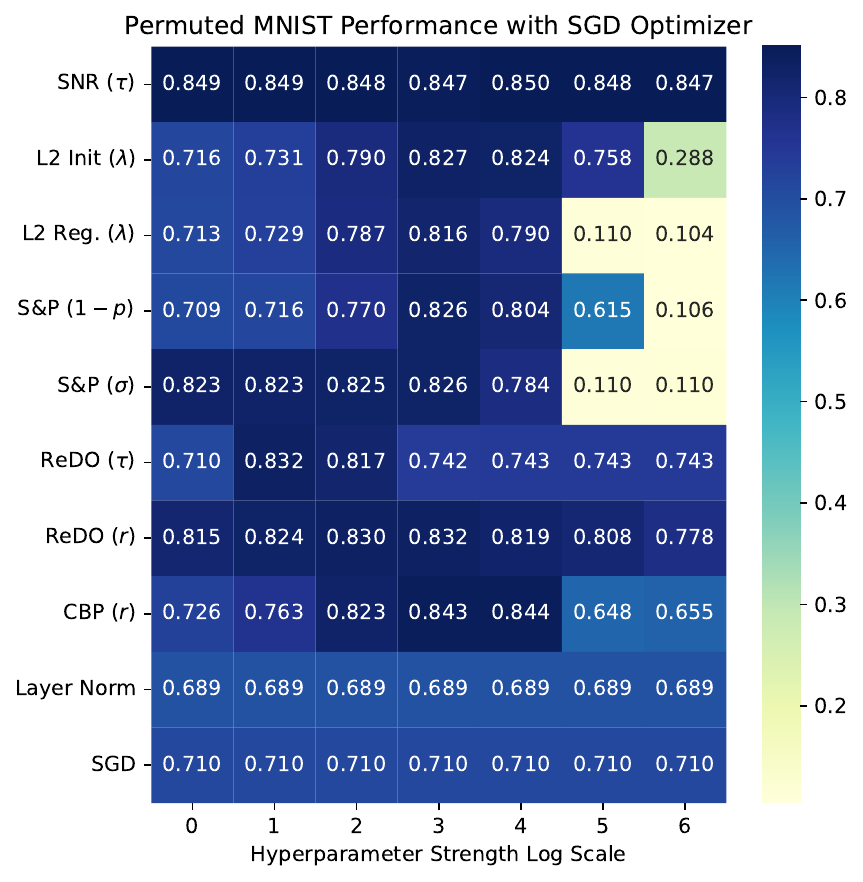}
    \includegraphics[width=0.48\linewidth]{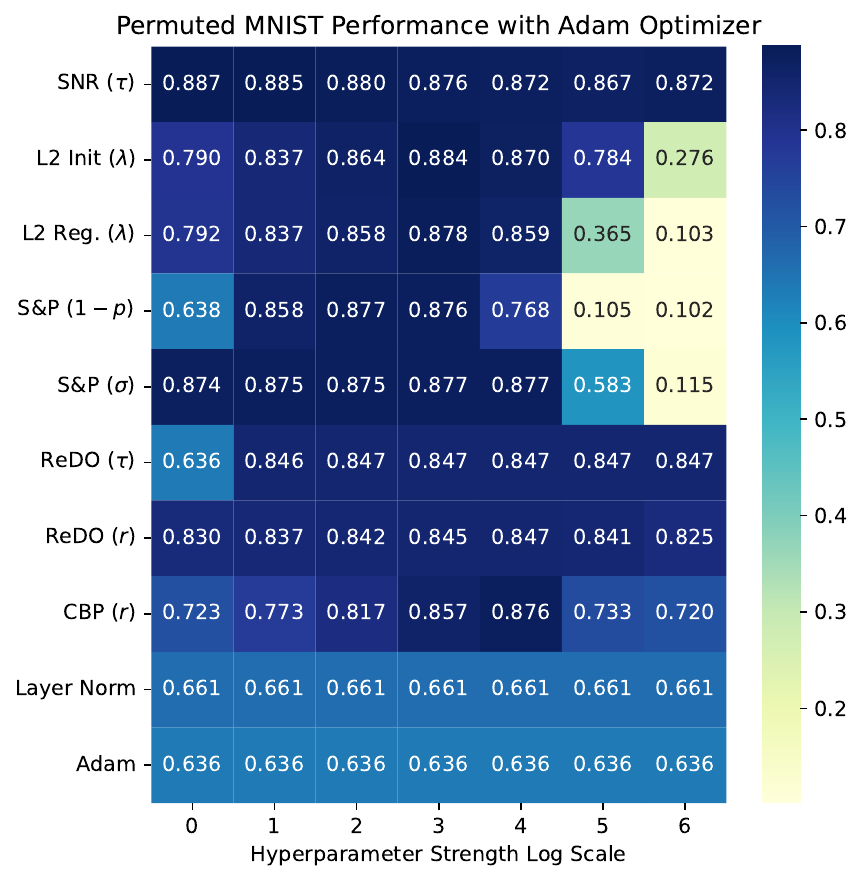}
    \caption{Permuted MNIST hyperparameter sweep results over 5 seeds.}
    \label{fig:hyperparam_sweep_PM}
\end{figure}

\begin{figure}[htbp]
    \centering
    \includegraphics[width=0.48\linewidth]{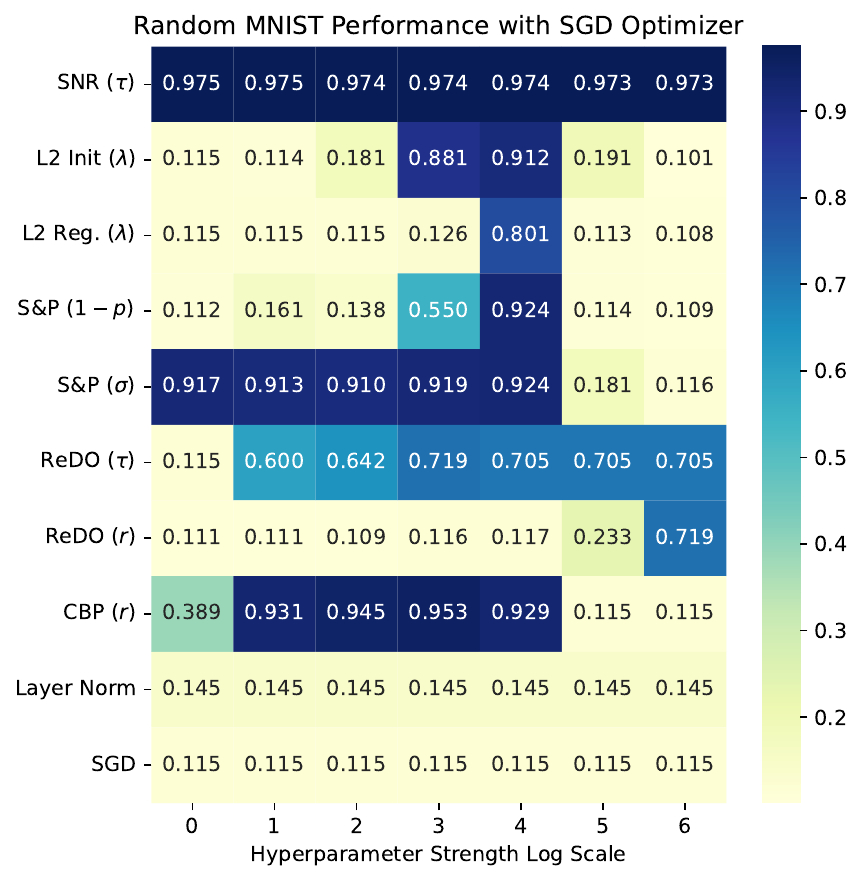}
    \includegraphics[width=0.48\linewidth]{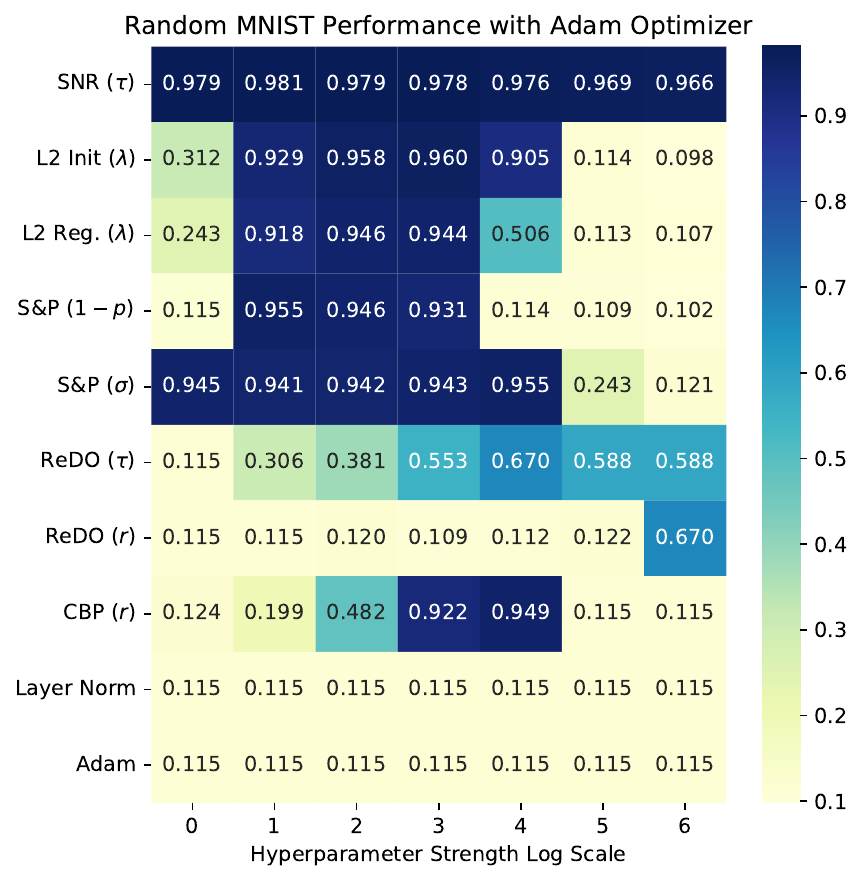}
    \caption{Random LabeL MNIST hyperparameter sweep result over 5 seeds.}
    \label{fig:hyperparam_sweep_RM}
\end{figure}

\begin{figure}[htbp]
    \centering
    \includegraphics[width=0.48\linewidth]{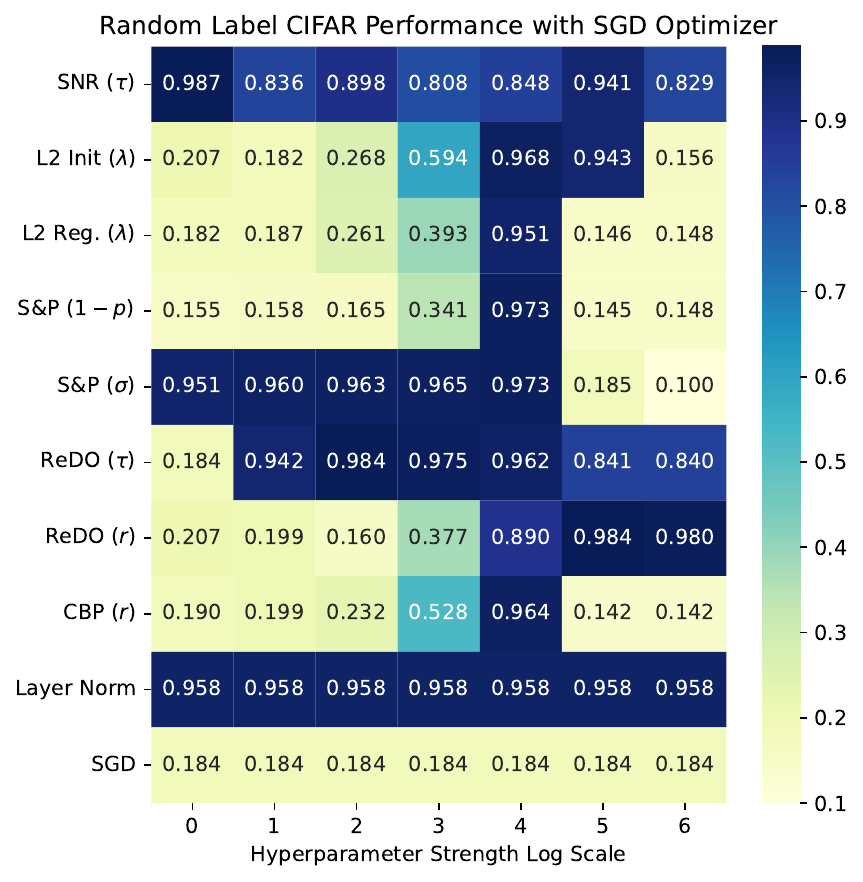}
    \includegraphics[width=0.48\linewidth]{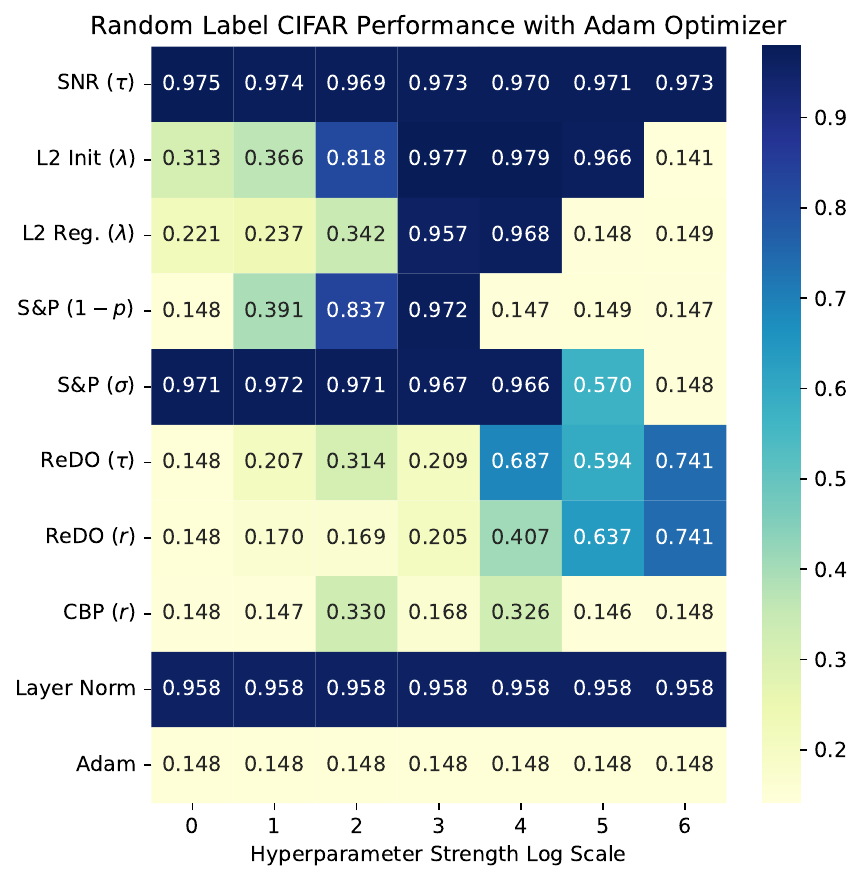}
    \caption{Random Label CIFAR hyperparameter sweep results over 5 seeds.}
    \label{fig:hyperparam_sweep_RC}
\end{figure}

\begin{figure}[htbp]
    \centering
    \includegraphics[width=0.48\linewidth]{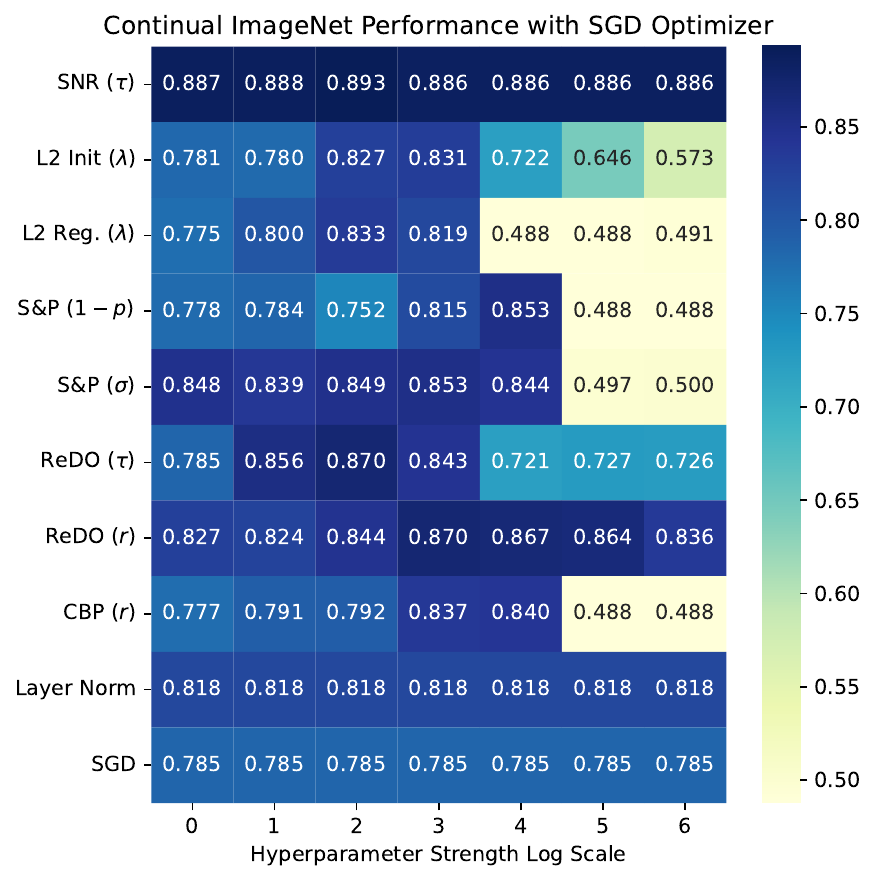}
    \includegraphics[width=0.48\linewidth]{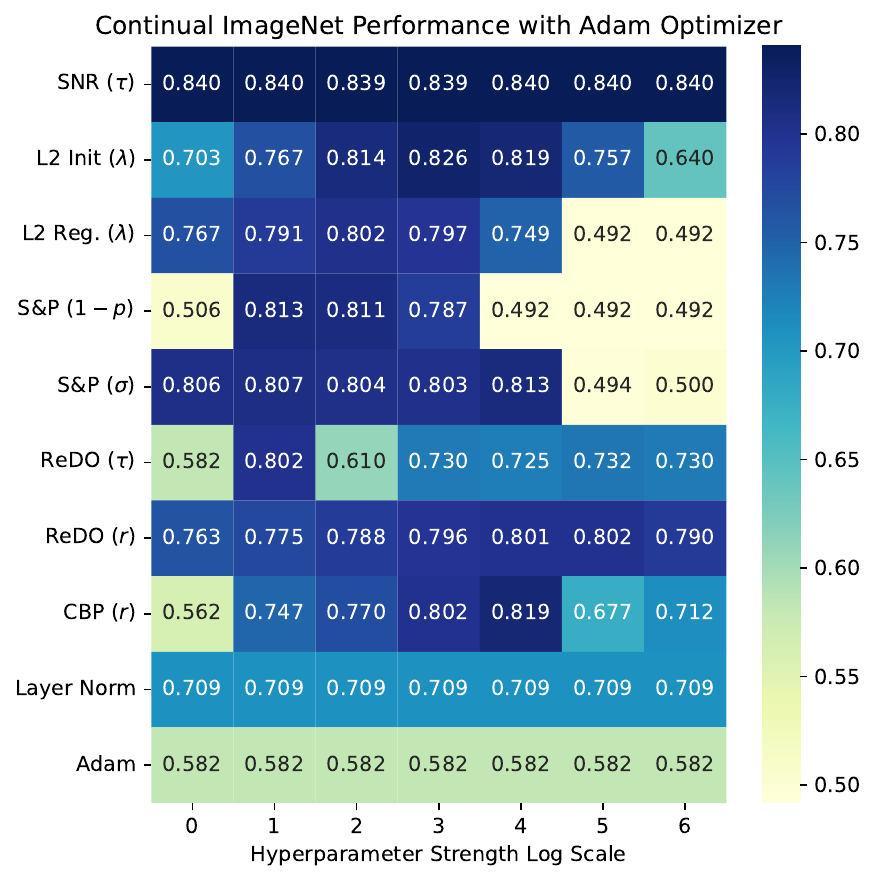}
    \caption{Continual ImageNet hyperparameter sweep results over 5 seeds.}
    \label{fig:hyperparam_sweep_CI}
\end{figure}

\begin{table}[htbp]
\centering
\begin{tabular}{|l|c|c|c|c|c|c|c|}
\hline
 & \multicolumn{7}{c|}{Hyperparameter Strength Log Scale} \\
\hline
 Algorithm & 0 & 1 & 2 & 3 & 4 & 5 & 6 \\
\hline \hline
SNR $(\eta)$ & 0.849 & 0.849 & 0.848 & 0.847 & 0.850 & 0.848 & 0.847 \\
 & (0.002) & (0.003) & (0.003) & (0.004) & (0.002) & (0.002) & (0.002) \\
\hline
L2 Init $(\lambda)$ & 0.716 & 0.731 & 0.790 & 0.827 & 0.824 & 0.758 & 0.288 \\
 & (0.001) & (0.001) & (0.001) & (0.002) & (0.001) & (0.001) & (0.017) \\
\hline
L2 Reg. $(\lambda)$ & 0.713 & 0.729 & 0.787 & 0.816 & 0.790 & 0.110 & 0.104 \\
 & (0.002) & (0.002) & (0.001) & (0.002) & (0.003) & (0.003) & (0.002) \\
\hline
S\&P $(1-p)$ & 0.709 & 0.716 & 0.770 & 0.826 & 0.804 & 0.615 & 0.106 \\
 & (0.001) & (0.002) & (0.001) & (0.002) & (0.004) & (0.001) & (0.003) \\
\hline
S\&P $(\sigma)$ & 0.823 & 0.823 & 0.825 & 0.826 & 0.784 & 0.110 & 0.110 \\
 & (0.001) & (0.002) & (0.002) & (0.002) & (0.002) & (0.002) & (0.002) \\
\hline
ReDO ($\tau$) & 0.710 & 0.832 & 0.817 & 0.742 & 0.743 & 0.743 & 0.743 \\
 & (0.007) & (0.001) & (0.011) & (0.018) & (0.018) & (0.018) & (0.018) \\
\hline
ReDO $(r)$ & 0.815 & 0.824 & 0.830 & 0.832 & 0.819 & 0.808 & 0.778 \\
 & (0.004) & (0.002) & (0.001) & (0.001) & (0.005) & (0.006) & (0.013) \\
\hline
CBP $(r)$ & 0.726 & 0.763 & 0.823 & 0.843 & 0.844 & 0.648 & 0.655 \\
 & (0.006) & (0.002) & (0.002) & (0.001) & (0.002) & (0.000) & (0.000) \\
\hline
Layer Norm & 0.689 & 0.689 & 0.689 & 0.689 & 0.689 & 0.689 & 0.689 \\
 & (0.003) & (0.003) & (0.003) & (0.003) & (0.003) & (0.003) & (0.003) \\
\hline
SGD & 0.710 & 0.710 & 0.710 & 0.710 & 0.710 & 0.710 & 0.710 \\
 & (0.007) & (0.007) & (0.007) & (0.007) & (0.007) & (0.007) & (0.007) \\
\hline
\end{tabular}
\caption{Performance on Permuted MNIST with different hyperparameters reported as mean (standard deviation).}
\end{table}

\begin{table}[htbp]
\centering
\begin{tabular}{|l|c|c|c|c|c|c|c|}
\hline
 & \multicolumn{7}{c|}{Hyperparameter Strength Log Scale} \\
\hline
 Algorithm & 0 & 1 & 2 & 3 & 4 & 5 & 6 \\
\hline \hline
SNR $(\eta)$ & 0.887 & 0.885 & 0.880 & 0.876 & 0.872 & 0.867 & 0.872 \\
 & (0.002) & (0.001) & (0.002) & (0.002) & (0.001) & (0.002) & (0.012) \\
\hline
L2 Init $(\lambda)$ & 0.790 & 0.837 & 0.864 & 0.884 & 0.870 & 0.784 & 0.276 \\
 & (0.001) & (0.002) & (0.002) & (0.002) & (0.002) & (0.001) & (0.008) \\
\hline
L2 Reg. $(\lambda)$ & 0.792 & 0.837 & 0.858 & 0.878 & 0.859 & 0.365 & 0.103 \\
 & (0.002) & (0.002) & (0.002) & (0.002) & (0.004) & (0.008) & (0.002) \\
\hline
S\&P $(1-p)$ & 0.638 & 0.858 & 0.877 & 0.876 & 0.768 & 0.105 & 0.102 \\
 & (0.005) & (0.002) & (0.002) & (0.002) & (0.001) & (0.003) & (0.001) \\
\hline
S\&P $(\sigma)$ & 0.874 & 0.875 & 0.875 & 0.877 & 0.877 & 0.583 & 0.115 \\
 & (0.002) & (0.002) & (0.002) & (0.002) & (0.002) & (0.003) & (0.001) \\
\hline
ReDO ($\tau$) & 0.636 & 0.846 & 0.847 & 0.847 & 0.847 & 0.847 & 0.847 \\
 & (0.009) & (0.002) & (0.001) & (0.001) & (0.001) & (0.001) & (0.001) \\
\hline
ReDO $(r)$ & 0.830 & 0.837 & 0.842 & 0.845 & 0.847 & 0.841 & 0.825 \\
 & (0.004) & (0.002) & (0.002) & (0.002) & (0.001) & (0.001) & (0.001) \\
\hline
CBP $(r)$ & 0.723 & 0.773 & 0.817 & 0.857 & 0.876 & 0.733 & 0.720 \\
 & (0.012) & (0.004) & (0.002) & (0.002) & (0.002) & (0.001) & (0.008) \\
\hline
Layer Norm & 0.661 & 0.661 & 0.661 & 0.661 & 0.661 & 0.661 & 0.661 \\
 & (0.001) & (0.001) & (0.001) & (0.001) & (0.001) & (0.001) & (0.001) \\
\hline
Adam & 0.636 & 0.636 & 0.636 & 0.636 & 0.636 & 0.636 & 0.636 \\
 & (0.009) & (0.009) & (0.009) & (0.009) & (0.009) & (0.009) & (0.009) \\
\hline
\end{tabular}
\caption{Performance on Permuted MNIST with different hyperparameters reported as mean (standard deviation).}
\end{table}

\begin{table}[htbp]
\centering
\begin{tabular}{|l|c|c|c|c|c|c|c|}
\hline
 & \multicolumn{7}{c|}{Hyperparameter Strength Log Scale} \\
\hline
 Algorithm & 0 & 1 & 2 & 3 & 4 & 5 & 6 \\
\hline \hline
SNR $(\eta)$ & 0.975 & 0.975 & 0.974 & 0.974 & 0.974 & 0.973 & 0.973 \\
 & (0.001) & (0.001) & (0.000) & (0.000) & (0.001) & (0.001) & (0.001) \\
\hline
L2 Init $(\lambda)$ & 0.115 & 0.114 & 0.181 & 0.881 & 0.912 & 0.191 & 0.101 \\
 & (0.006) & (0.009) & (0.027) & (0.017) & (0.006) & (0.008) & (0.010) \\
\hline
L2 Reg. $(\lambda)$ & 0.115 & 0.115 & 0.115 & 0.126 & 0.801 & 0.113 & 0.108 \\
 & (0.006) & (0.006) & (0.006) & (0.013) & (0.010) & (0.007) & (0.009) \\
\hline
S\&P $(1-p)$ & 0.112 & 0.161 & 0.138 & 0.550 & 0.924 & 0.114 & 0.109 \\
 & (0.008) & (0.009) & (0.008) & (0.023) & (0.003) & (0.006) & (0.009) \\
\hline
S\&P $(\sigma)$ & 0.917 & 0.913 & 0.910 & 0.919 & 0.924 & 0.181 & 0.116 \\
 & (0.005) & (0.004) & (0.007) & (0.004) & (0.003) & (0.005) & (0.006) \\
\hline
ReDO ($\tau$) & 0.115 & 0.600 & 0.642 & 0.719 & 0.705 & 0.705 & 0.705 \\
 & (0.006) & (0.023) & (0.030) & (0.021) & (0.021) & (0.021) & (0.021) \\
\hline
ReDO $(r)$ & 0.111 & 0.111 & 0.109 & 0.116 & 0.117 & 0.233 & 0.719 \\
 & (0.008) & (0.008) & (0.008) & (0.019) & (0.019) & (0.022) & (0.021) \\
\hline
CBP $(r)$ & 0.389 & 0.931 & 0.945 & 0.953 & 0.929 & 0.115 & 0.115 \\
 & (0.031) & (0.051) & (0.012) & (0.005) & (0.002) & (0.006) & (0.006) \\
\hline
Layer Norm & 0.145 & 0.145 & 0.145 & 0.145 & 0.145 & 0.145 & 0.145 \\
 & (0.013) & (0.013) & (0.013) & (0.013) & (0.013) & (0.013) & (0.013) \\
\hline
SGD & 0.115 & 0.115 & 0.115 & 0.115 & 0.115 & 0.115 & 0.115 \\
 & (0.006) & (0.006) & (0.006) & (0.006) & (0.006) & (0.006) & (0.006) \\
\hline
\end{tabular}
\caption{Performance on Random Label MNIST with different hyperparameters reported as mean (standard deviation).}
\end{table}

\begin{table}[htbp]
\centering
\begin{tabular}{|l|c|c|c|c|c|c|c|}
\hline
 & \multicolumn{7}{c|}{Hyperparameter Strength Log Scale} \\
\hline
 Algorithm & 0 & 1 & 2 & 3 & 4 & 5 & 6 \\
\hline \hline
SNR $(\eta)$ & 0.979 & 0.981 & 0.979 & 0.978 & 0.976 & 0.969 & 0.966 \\
 & (0.001) & (0.001) & (0.002) & (0.001) & (0.002) & (0.001) & (0.003) \\
\hline
L2 Init $(\lambda)$ & 0.312 & 0.929 & 0.958 & 0.960 & 0.905 & 0.114 & 0.098 \\
 & (0.050) & (0.011) & (0.007) & (0.001) & (0.003) & (0.004) & (0.009) \\
\hline
L2 Reg. $(\lambda)$ & 0.243 & 0.918 & 0.946 & 0.944 & 0.506 & 0.113 & 0.107 \\
 & (0.038) & (0.008) & (0.009) & (0.004) & (0.043) & (0.007) & (0.009) \\
\hline
S\&P $(1-p)$ & 0.115 & 0.955 & 0.946 & 0.931 & 0.114 & 0.109 & 0.102 \\
 & (0.006) & (0.004) & (0.008) & (0.005) & (0.006) & (0.009) & (0.004) \\
\hline
S\&P $(\sigma)$ & 0.945 & 0.941 & 0.942 & 0.943 & 0.955 & 0.243 & 0.121 \\
 & (0.009) & (0.009) & (0.007) & (0.013) & (0.004) & (0.015) & (0.006) \\
\hline
ReDO ($\tau$) & 0.115 & 0.306 & 0.381 & 0.553 & 0.670 & 0.588 & 0.588 \\
 & (0.006) & (0.017) & (0.023) & (0.028) & (0.019) & (0.031) & (0.031) \\
\hline
ReDO $(r)$ & 0.115 & 0.115 & 0.120 & 0.109 & 0.112 & 0.122 & 0.670 \\
 & (0.006) & (0.006) & (0.009) & (0.008) & (0.008) & (0.028) & (0.019) \\
\hline
CBP $(r)$ & 0.124 & 0.199 & 0.482 & 0.922 & 0.949 & 0.115 & 0.115 \\
 & (0.030) & (0.055) & (0.077) & (0.007) & (0.004) & (0.006) & (0.006) \\
\hline
Layer Norm & 0.115 & 0.115 & 0.115 & 0.115 & 0.115 & 0.115 & 0.115 \\
 & (0.006) & (0.006) & (0.006) & (0.006) & (0.006) & (0.006) & (0.006) \\
\hline
Adam & 0.115 & 0.115 & 0.115 & 0.115 & 0.115 & 0.115 & 0.115 \\
 & (0.006) & (0.006) & (0.006) & (0.006) & (0.006) & (0.006) & (0.006) \\
\hline
\end{tabular}
\caption{Performance on Random Label MNIST with different hyperparameters reported as mean (standard deviation).}
\end{table}

\begin{table}[htbp]
\centering
\begin{tabular}{|l|c|c|c|c|c|c|c|}
\hline
 & \multicolumn{7}{c|}{Hyperparameter Strength Log Scale} \\
\hline
 Algorithm & 0 & 1 & 2 & 3 & 4 & 5 & 6 \\
\hline \hline
SNR $(\eta)$ & 0.887 & 0.888 & 0.893 & 0.886 & 0.886 & 0.886 & 0.886 \\
 & (0.009) & (0.007) & (0.006) & (0.004) & (0.004) & (0.004) & (0.004) \\
\hline
L2 Init $(\lambda)$ & 0.781 & 0.780 & 0.827 & 0.831 & 0.722 & 0.646 & 0.573 \\
 & (0.028) & (0.016) & (0.011) & (0.008) & (0.014) & (0.014) & (0.018) \\
\hline
L2 Reg. $(\lambda)$ & 0.775 & 0.800 & 0.833 & 0.819 & 0.488 & 0.488 & 0.491 \\
 & (0.010) & (0.013) & (0.012) & (0.007) & (0.000) & (0.000) & (0.001) \\
\hline
S\&P $(1-p)$ & 0.778 & 0.784 & 0.752 & 0.815 & 0.853 & 0.488 & 0.488 \\
 & (0.023) & (0.024) & (0.073) & (0.010) & (0.008) & (0.000) & (0.000) \\
\hline
S\&P $(\sigma)$ & 0.848 & 0.839 & 0.849 & 0.853 & 0.844 & 0.497 & 0.500 \\
 & (0.002) & (0.010) & (0.008) & (0.008) & (0.010) & (0.005) & (0.000) \\
\hline
ReDO ($\tau$) & 0.785 & 0.856 & 0.870 & 0.843 & 0.721 & 0.727 & 0.726 \\
 & (0.024) & (0.012) & (0.010) & (0.015) & (0.041) & (0.043) & (0.040) \\
\hline
ReDO $(r)$ & 0.827 & 0.824 & 0.844 & 0.870 & 0.867 & 0.864 & 0.836 \\
 & (0.027) & (0.038) & (0.029) & (0.010) & (0.016) & (0.011) & (0.009) \\
\hline
CBP $(r)$ & 0.777 & 0.791 & 0.792 & 0.837 & 0.840 & 0.488 & 0.488 \\
 & (0.022) & (0.016) & (0.013) & (0.012) & (0.008) & (0.000) & (0.000) \\
\hline
Layer Norm & 0.818 & 0.818 & 0.818 & 0.818 & 0.818 & 0.818 & 0.818 \\
 & (0.011) & (0.011) & (0.011) & (0.011) & (0.011) & (0.011) & (0.011) \\
\hline
SGD & 0.785 & 0.785 & 0.785 & 0.785 & 0.785 & 0.785 & 0.785 \\
 & (0.024) & (0.024) & (0.024) & (0.024) & (0.024) & (0.024) & (0.024) \\
\hline
\end{tabular}
\caption{Performance on Continual ImageNet with different hyperparameters reported as mean (standard deviation).}
\end{table}

\begin{table}[htbp]
\centering
\begin{tabular}{|l|c|c|c|c|c|c|c|}
\hline
 & \multicolumn{7}{c|}{Hyperparameter Strength Log Scale} \\
\hline
 Algorithm & 0 & 1 & 2 & 3 & 4 & 5 & 6 \\
\hline \hline
SNR $(\eta)$ & 0.840 & 0.840 & 0.839 & 0.839 & 0.840 & 0.840 & 0.840 \\
 & (0.002) & (0.005) & (0.006) & (0.006) & (0.006) & (0.006) & (0.006) \\
\hline
L2 Init $(\lambda)$ & 0.703 & 0.767 & 0.814 & 0.826 & 0.819 & 0.757 & 0.640 \\
 & (0.052) & (0.025) & (0.005) & (0.009) & (0.005) & (0.011) & (0.014) \\
\hline
L2 Reg. $(\lambda)$ & 0.767 & 0.791 & 0.802 & 0.797 & 0.749 & 0.492 & 0.492 \\
 & (0.018) & (0.013) & (0.012) & (0.009) & (0.021) & (0.001) & (0.000) \\
\hline
S\&P $(1-p)$ & 0.506 & 0.813 & 0.811 & 0.787 & 0.492 & 0.492 & 0.492 \\
 & (0.025) & (0.010) & (0.008) & (0.008) & (0.001) & (0.000) & (0.000) \\
\hline
S\&P $(\sigma)$ & 0.806 & 0.807 & 0.804 & 0.803 & 0.813 & 0.494 & 0.500 \\
 & (0.012) & (0.004) & (0.011) & (0.007) & (0.010) & (0.000) & (0.001) \\
\hline
ReDO ($\tau$) & 0.582 & 0.802 & 0.610 & 0.730 & 0.725 & 0.732 & 0.730 \\
 & (0.075) & (0.018) & (0.120) & (0.012) & (0.016) & (0.016) & (0.013) \\
\hline
ReDO $(r)$ & 0.763 & 0.775 & 0.788 & 0.796 & 0.801 & 0.802 & 0.790 \\
 & (0.033) & (0.018) & (0.013) & (0.007) & (0.008) & (0.018) & (0.006) \\
\hline
CBP $(r)$ & 0.562 & 0.747 & 0.770 & 0.802 & 0.819 & 0.677 & 0.712 \\
 & (0.087) & (0.032) & (0.014) & (0.004) & (0.003) & (0.092) & (0.039) \\
\hline
Layer Norm & 0.709 & 0.709 & 0.709 & 0.709 & 0.709 & 0.709 & 0.709 \\
 & (0.014) & (0.014) & (0.014) & (0.014) & (0.014) & (0.014) & (0.014) \\
\hline
Adam & 0.582 & 0.582 & 0.582 & 0.582 & 0.582 & 0.582 & 0.582 \\
 & (0.075) & (0.075) & (0.075) & (0.075) & (0.075) & (0.075) & (0.075) \\
\hline
\end{tabular}
\caption{Performance on Continual ImageNet with different hyperparameters reported as mean (standard deviation).}
\end{table}

\begin{table}[htbp]
\centering
\begin{tabular}{|l|c|c|c|c|c|c|c|}
\hline
 & \multicolumn{7}{c|}{Hyperparameter Strength Log Scale} \\
\hline
 Algorithm & 0 & 1 & 2 & 3 & 4 & 5 & 6 \\
\hline \hline
SNR $(\eta)$ & 0.975 & 0.974 & 0.969 & 0.973 & 0.970 & 0.971 & 0.973 \\
 & (0.003) & (0.001) & (0.006) & (0.004) & (0.004) & (0.004) & (0.002) \\
\hline
L2 Init $(\lambda)$ & 0.313 & 0.366 & 0.818 & 0.977 & 0.979 & 0.966 & 0.141 \\
 & (0.076) & (0.096) & (0.165) & (0.005) & (0.003) & (0.004) & (0.006) \\
\hline
L2 Reg. $(\lambda)$ & 0.221 & 0.237 & 0.342 & 0.957 & 0.968 & 0.148 & 0.149 \\
 & (0.049) & (0.029) & (0.035) & (0.012) & (0.002) & (0.004) & (0.004) \\
\hline
S\&P $(1-p)$ & 0.148 & 0.391 & 0.837 & 0.972 & 0.147 & 0.149 & 0.147 \\
 & (0.004) & (0.228) & (0.170) & (0.002) & (0.005) & (0.004) & (0.005) \\
\hline
S\&P $(\sigma)$ & 0.971 & 0.972 & 0.971 & 0.967 & 0.966 & 0.570 & 0.148 \\
 & (0.004) & (0.002) & (0.002) & (0.009) & (0.004) & (0.039) & (0.008) \\
\hline
ReDO ($\tau$) & 0.148 & 0.207 & 0.314 & 0.209 & 0.687 & 0.594 & 0.741 \\
 & (0.004) & (0.115) & (0.330) & (0.079) & (0.125) & (0.089) & (0.128) \\
\hline
ReDO $(r)$ & 0.148 & 0.170 & 0.169 & 0.205 & 0.407 & 0.637 & 0.741 \\
 & (0.004) & (0.044) & (0.045) & (0.077) & (0.165) & (0.155) & (0.128) \\
\hline
CBP $(r)$ & 0.148 & 0.147 & 0.330 & 0.168 & 0.326 & 0.146 & 0.148 \\
 & (0.004) & (0.007) & (0.302) & (0.026) & (0.074) & (0.006) & (0.004) \\
\hline
Layer Norm & 0.958 & 0.958 & 0.958 & 0.958 & 0.958 & 0.958 & 0.958 \\
 & (0.006) & (0.006) & (0.006) & (0.006) & (0.006) & (0.006) & (0.006) \\
\hline
Adam & 0.148 & 0.148 & 0.148 & 0.148 & 0.148 & 0.148 & 0.148 \\
 & (0.004) & (0.004) & (0.004) & (0.004) & (0.004) & (0.004) & (0.004) \\
\hline
\end{tabular}
\caption{Performance on Random Label CIFAR with different hyperparameters reported as mean (standard deviation).}
\end{table}

\begin{table}[htbp]
\centering
\begin{tabular}{|l|c|c|c|c|c|c|c|}
\hline
 & \multicolumn{7}{c|}{Hyperparameter Strength Log Scale} \\
\hline
 Algorithm & 0 & 1 & 2 & 3 & 4 & 5 & 6 \\
\hline \hline
SNR $(\eta)$ & 0.987 & 0.836 & 0.898 & 0.808 & 0.848 & 0.941 & 0.829 \\
 & (0.002) & (0.242) & (0.112) & (0.124) & (0.183) & (0.064) & (0.173) \\
\hline
L2 Init $(\lambda)$ & 0.207 & 0.182 & 0.268 & 0.594 & 0.968 & 0.943 & 0.156 \\
 & (0.040) & (0.012) & (0.068) & (0.135) & (0.004) & (0.007) & (0.005) \\
\hline
L2 Reg. $(\lambda)$ & 0.182 & 0.187 & 0.261 & 0.393 & 0.951 & 0.146 & 0.148 \\
 & (0.012) & (0.024) & (0.097) & (0.265) & (0.005) & (0.004) & (0.004) \\
\hline
S\&P $(1-p)$ & 0.155 & 0.158 & 0.165 & 0.341 & 0.973 & 0.145 & 0.148 \\
 & (0.005) & (0.007) & (0.009) & (0.081) & (0.003) & (0.005) & (0.004) \\
\hline
S\&P $(\sigma)$ & 0.951 & 0.960 & 0.963 & 0.965 & 0.973 & 0.185 & 0.100 \\
 & (0.007) & (0.006) & (0.004) & (0.003) & (0.003) & (0.068) & (0.014) \\
\hline
ReDO ($\tau$) & 0.184 & 0.942 & 0.984 & 0.975 & 0.962 & 0.841 & 0.840 \\
 & (0.013) & (0.054) & (0.004) & (0.020) & (0.038) & (0.039) & (0.057) \\
\hline
ReDO $(r)$ & 0.207 & 0.199 & 0.160 & 0.377 & 0.890 & 0.984 & 0.980 \\
 & (0.057) & (0.067) & (0.025) & (0.220) & (0.121) & (0.004) & (0.001) \\
\hline
CBP $(r)$ & 0.190 & 0.199 & 0.232 & 0.528 & 0.964 & 0.142 & 0.142 \\
 & (0.021) & (0.024) & (0.058) & (0.266) & (0.012) & (0.006) & (0.006) \\
\hline
Layer Norm & 0.958 & 0.958 & 0.958 & 0.958 & 0.958 & 0.958 & 0.958 \\
 & (0.008) & (0.008) & (0.008) & (0.008) & (0.008) & (0.008) & (0.008) \\
\hline
SGD & 0.184 & 0.184 & 0.184 & 0.184 & 0.184 & 0.184 & 0.184 \\
 & (0.013) & (0.013) & (0.013) & (0.013) & (0.013) & (0.013) & (0.013) \\
\hline
\end{tabular}
\caption{Performance on Random Label CIFAR with different hyperparameters reported as mean (standard deviation).}
\end{table}
\newpage
\section{Learning a Single ReLU-Activated Neuron Under Adversarial Selection}
\subsection{Supporting Propositions}

\begin{proposition}\label{prop:4thmoment}
        Let $v\in\mathbb{R}^{d+1}$ be a unit-norm vector and suppose that $x\sim \mathcal{N}(0,I_d)\times\{1\}$. Then, \begin{equation*}
            \mathbb{E}[\sigma(v^{\top}x)^4]\leq 3
        \end{equation*}
    \end{proposition}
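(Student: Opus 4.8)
The plan is to reduce the problem to a one-dimensional non-central Gaussian fourth-moment computation. First I would observe that since $x = (\tilde x, 1)$ with $\tilde x \sim \mathcal{N}(0, I_d)$, the scalar $Y := v^{\top} x = v_{1:d}^{\top}\tilde x + v_{d+1}$ is an affine function of a standard Gaussian vector, hence $Y \sim \mathcal{N}(\mu, s^2)$ with mean $\mu = v_{d+1}$ and variance $s^2 = \|v_{1:d}\|^2$. The unit-norm hypothesis $\|v\|^2 = 1$ supplies the single constraint $\mu^2 + s^2 = 1$, so in particular $s^2 \in [0,1]$.

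Next I would discard the ReLU via the pointwise bound $\sigma(z)^4 = (\max(0,z))^4 \le z^4$, which immediately yields $\mathbb{E}[\sigma(Y)^4] \le \mathbb{E}[Y^4]$. This trades away some slack (the true value is roughly half of $\mathbb{E}[Y^4]$ when $\mu = 0$, by symmetry), but as the optimization below will show, the crude bound already suffices to reach the constant $3$.

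Then I would compute the non-central fourth moment directly. Writing $Y = \mu + sZ$ with $Z \sim \mathcal{N}(0,1)$ and using $\mathbb{E}[Z] = \mathbb{E}[Z^3] = 0$, $\mathbb{E}[Z^2] = 1$, $\mathbb{E}[Z^4] = 3$, the binomial expansion gives $\mathbb{E}[Y^4] = \mu^4 + 6\mu^2 s^2 + 3 s^4$. Substituting $\mu^2 = 1 - s^2$ collapses this to the single-variable expression $g(u) = 1 + 4u - 2u^2$ evaluated at $u = s^2 \in [0,1]$.

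Finally I would maximize $g$ on $[0,1]$: its derivative $g'(u) = 4 - 4u$ is nonnegative there, so $g$ is increasing and attains its maximum at $u = 1$, where $g(1) = 3$. Chaining the inequalities gives $\mathbb{E}[\sigma(Y)^4] \le \mathbb{E}[Y^4] = g(s^2) \le 3$, as claimed. There is no genuine obstacle in this argument; the only points that require care are correctly identifying the mean/variance split of $Y$ and carrying the constraint $\mu^2 + s^2 = 1$ through so that the final optimization runs over the correct interval.
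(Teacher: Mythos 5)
Your proof is correct and follows essentially the same route as the paper's: drop the ReLU via $\sigma(z)^4\le z^4$, write $v^\top x$ as a shifted Gaussian with the constraint $\mu^2+s^2=1$, expand the non-central fourth moment, and maximize the resulting quadratic $1+4u-2u^2$ over $u\in[0,1]$ (the paper writes it as $-2a^2+4a+1$, which is the same function). No gaps.
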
 \begin{proof}
        First, we note that \begin{equation*}
            \mathbb{E}[\sigma(v^{\top}x)^4] \leq \mathbb{E}[(v^{\top}x)^4]
        \end{equation*} Then $v^\top x = z + \beta$ where $z \sim \mathcal{N}(0,\alpha^2)$ such that $\alpha^2 = \norm{v_{1:d}}^2$ and $\alpha^2 + \beta^2 = 1$ by the fact that $\norm{v}^2  = 1$. Furthermore, \begin{align*}
            \mathbb{E}[(v^\top x)^4] & = \mathbb{E}[(z+\beta)^4] \\
            & = \mathbb{E}[z^4 +4\beta z^3 + 6\beta^2z^2 + 4\beta^3 z + \beta^4]
        \end{align*} By symmetry, $\mathbb{E}[4\beta z^3] = \mathbb{E}[4\beta^3 z] = 0$. For a zero-mean Gaussian with variance $\alpha^2$, $\mathbb{E}[z^4] = 3\alpha^4$ and $\mathbb{E}[6\beta^2z^2] = 6\beta^2\alpha^2$. Therefore, \begin{equation*}
            \mathbb{E}[(v^\top x)^4] = 3\alpha^4 + 6\beta^2\alpha^2 + \beta^4
        \end{equation*} We define $a = \alpha^2$ and by $\beta^2 = 1 -\alpha^2$ we have that \begin{align*}
            \mathbb{E}[(v^\top x)^4]&  = 3a^2 + 6a(1-a) + (1-a)^2 \\
            & = -2a^2 + 4a+1
        \end{align*} By standard analysis of quadratic functions, the maximum is attained at $a=1$ with a value of 3. Therefore, $\mathbb{E}[\sigma(v^\top x)^4]\leq 3$
    \end{proof}

    \begin{proposition}\label{prop:2ndmoment}
        Let $v\in\mathbb{R}^{d+1}$ be a unit-norm vector and suppose that $x\sim \mathcal{N}(0,I_d)\times\{1\}$. Then, \begin{equation*}
            \mathbb{E}[\sigma(v^{\top}x)^2]\leq 1
        \end{equation*}
    \end{proposition}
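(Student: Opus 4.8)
The plan is to mirror the argument used for the fourth-moment bound in Proposition~\ref{prop:4thmoment}, but the computation here is even simpler because the second moment can be evaluated exactly rather than maximized. First I would discard the rectifier: since $\sigma(z)^2 = \max(0,z)^2 \le z^2$ for every real $z$ (equality when $z \ge 0$, and the left side is $0$ when $z<0$), monotonicity of expectation gives $\mathbb{E}[\sigma(v^{\top}x)^2] \le \mathbb{E}[(v^{\top}x)^2]$.

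Next I would exploit the product structure of the input distribution exactly as in the preceding proposition. Writing $x=(\tilde x,1)$ with $\tilde x \sim \mathcal{N}(0,I_d)$, we have $v^{\top}x = v_{1:d}^{\top}\tilde x + v_{d+1} =: z + \beta$, where $z = v_{1:d}^{\top}\tilde x$ is a centered Gaussian with variance $\alpha^2 = \norm{v_{1:d}}^2$ and $\beta = v_{d+1}$ is a deterministic constant. The unit-norm hypothesis supplies the single identity we need, namely $\alpha^2 + \beta^2 = \norm{v}^2 = 1$.

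The second moment is then immediate: $\mathbb{E}[(z+\beta)^2] = \mathbb{E}[z^2] + 2\beta\,\mathbb{E}[z] + \beta^2 = \alpha^2 + 0 + \beta^2 = 1$, using $\mathbb{E}[z]=0$ and $\mathbb{E}[z^2]=\alpha^2$. Chaining this with the first step yields $\mathbb{E}[\sigma(v^{\top}x)^2] \le 1$, which is the claim.

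I do not anticipate any genuine obstacle. Unlike the fourth-moment case, there is no quadratic in $a=\alpha^2$ to maximize, since the second moment of $v^{\top}x$ equals $\norm{v}^2$ no matter how the unit mass is split between the Gaussian variance $\alpha^2$ and the bias $\beta^2$; the rectifier only helps and the Gaussian cross term vanishes by the zero mean of $z$.
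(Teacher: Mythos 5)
Your proposal is correct and follows essentially the same route as the paper: drop the rectifier via $\sigma(z)^2 \le z^2$, write $v^{\top}x = z + \beta$ with $z \sim \mathcal{N}(0,\alpha^2)$ and $\alpha^2 + \beta^2 = 1$, and compute $\mathbb{E}[(z+\beta)^2] = \alpha^2 + \beta^2 = 1$. No gaps.
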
 \begin{proof}
        First, we note that \begin{equation*}
            \mathbb{E}[\sigma(v^{\top}x)^2] \leq \mathbb{E}[(v^{\top}x)^2]
        \end{equation*} Then $v^\top x = z + \beta$ where $z \sim \mathcal{N}(0,\alpha^2)$ such that $\alpha^2 = \norm{v_{1:d}}^2$ and $\alpha^2 + \beta^2 = 1$ by the fact that $\norm{v}^2  = 1$. We then observe that, \begin{align*}
            \mathbb{E}[(v^\top x)^2] & = \mathbb{E}[(z+\beta)^2] \\
            & = \mathbb{E}[z^2 + 2\beta z + \beta^2] \\
            & = \alpha^2 + \beta^2  & \text{by $z\sim \mathcal{N}(0,\alpha^2)$} \\
            & = 1 & \text{by $\alpha^2+\beta^2 = 1$}
        \end{align*}
            \end{proof}

            \begin{proposition}\label{prop:SigmaLowerBound}
        If $x,y \in \mathbb{R}$ such that $y \leq 0$ then $\sigma(x+y) \geq \sigma(x) +y$.
    \end{proposition}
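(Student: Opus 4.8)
The plan is to prove the inequality by an elementary case analysis on the sign of $x$, relying only on two basic properties of the ReLU: that $\sigma(z) \ge z$ and $\sigma(z) \ge 0$ for every $z \in \mathbb{R}$ (both immediate from $\sigma(z) = \max(0,z)$). The hypothesis $y \le 0$ will be used only in the second case.

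First I would treat the case $x \ge 0$. Here $\sigma(x) = x$, so the desired inequality becomes $\sigma(x+y) \ge x+y$, which holds for all arguments since $\sigma(z) = \max(0,z) \ge z$. Next I would treat the case $x < 0$, where $\sigma(x) = 0$, so the target reduces to $\sigma(x+y) \ge y$. Because $\sigma$ is nonnegative and $y \le 0$, we get $\sigma(x+y) \ge 0 \ge y$, closing this case. Since the two cases together cover all $x \in \mathbb{R}$ (the boundary $x = 0$ being absorbed by the first), the claim follows.

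A slightly slicker alternative I would mention is to rewrite the right-hand side directly: since $y \le 0$ one has
\[
\sigma(x) + y = \max(0,x) + y = \max(y,\, x+y),
\]
while $\sigma(x+y) = \max(0,\, x+y)$. The latter dominates the former because $\max(0,x+y) \ge x+y$ trivially and $\max(0,x+y) \ge 0 \ge y$ using $y \le 0$, so $\max(0,x+y) \ge \max(y,x+y)$.

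I do not anticipate any genuine obstacle here; the statement is elementary and both routes are a few lines. The only point requiring the slightest care is ensuring the hypothesis $y \le 0$ is invoked exactly where it is needed (the $x<0$ case, or equivalently the bound $0 \ge y$ in the alternative argument), since the inequality is false without it—for instance, $\sigma$ is not additive and $\sigma(x+y) < \sigma(x) + y$ can occur when $y > 0$ and $x < -y < 0$.
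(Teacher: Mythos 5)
Your proof is correct and follows essentially the same two-case argument as the paper ($x \ge 0$ giving $\sigma(x)+y = x+y \le \sigma(x+y)$, and $x<0$ giving $\sigma(x)+y = y \le 0 \le \sigma(x+y)$). The alternative rewriting via $\max(y, x+y)$ is a nice observation but the core reasoning is identical.
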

    \begin{proof}
        We consider two cases. If $x \geq 0$ then \begin{equation*}
            \sigma(x) + y = x +y \leq \sigma(x+y)
        \end{equation*} On the other hand, if $x < 0$ then \begin{equation*}
            \sigma(x) + y = y \leq 0 \leq \sigma(x+y)
        \end{equation*}
    \end{proof}

\subsection{Convergence in Norm}

In the following lemma, we show that $L$ is bounded above by a quadratic function, which in turn guarantees the convergence of gradient descent when minimizing $L$.

    \begin{lemma}\label{lemma:QuadraticUpperBound}
        For any $w,u \in \mathbb{R}^{d+1}$ then \begin{equation}\label{eqn:QuadraticUpperBound}
            L(w) \leq L(u) + \nabla L(u)^{\top}(w-u) + M\norm{w-u}^2
        \end{equation} where $M = d+1$.
    \end{lemma}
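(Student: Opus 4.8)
The plan is to reduce the claim to a one–dimensional (pointwise in $x$) inequality and then integrate. Fix $x$ and write $a = w^\top x$, $b = u^\top x$, $c = v^\top x$. Define the scalar function $g_c(a) = (\sigma(a) - \sigma(c))^2$, whose one-sided derivative under the subgradient convention $\sigma'(z) = \mathbbm{1}_{\{z \ge 0\}}$ is $g_c'(b) = 2(\sigma(b) - \sigma(c))\mathbbm{1}_{\{b \ge 0\}}$. The core of the argument is the pointwise quadratic upper bound
\[
g_c(a) \;\le\; g_c(b) + g_c'(b)\,(a - b) + (a - b)^2 .
\]
Granting this, I take expectations over $x$: the left side becomes $L(w)$, the first right-hand term becomes $L(u)$, the second becomes exactly $\nabla L(u)^\top (w - u)$ (matching the subgradient used to define $\nabla L$), and the third is $\mathbb{E}[((w-u)^\top x)^2]$.

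To finish, I bound the quadratic term crudely by Cauchy–Schwarz: $((w-u)^\top x)^2 \le \norm{w-u}^2 \norm{x}^2$, so that $\mathbb{E}[((w-u)^\top x)^2] \le \norm{w-u}^2\, \mathbb{E}[\norm{x}^2]$. Since $x \sim \mathcal{N}(0,I_d)\times\{1\}$ we have $\mathbb{E}[\norm{x}^2] = d + 1$, which yields the claimed constant $M = d+1$. (In fact $\mathbb{E}[((w-u)^\top x)^2] = \norm{w-u}^2$ exactly, so the constant could be sharpened to $1$; the Cauchy–Schwarz step is precisely what produces the stated $d+1$.)

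The only real work is the pointwise inequality, which I would prove by case analysis on the signs of $a$ and $b$. When $a,b \ge 0$ the function $g_c$ restricted to the nonnegative half-line is the parabola $(a - \sigma(c))^2$ with second derivative $2$, and the inequality holds with equality. The remaining cases ($a<0\le b$, $b<0\le a$, and $a,b<0$) are where $g_c$ crosses its kink at the origin; here one checks directly, using $\sigma(c) \ge 0$, that the Taylor remainder $g_c(a) - g_c(b) - g_c'(b)(a-b)$ is bounded above by $(a-b)^2$. The main subtlety — and the step to get right — is that the kink of $g_c$ at $0$ is \emph{concave} (the one-sided slope drops by $2\sigma(c)$ as $a$ crosses the origin), so it can only decrease the remainder; this is exactly why no curvature contribution beyond the quadratic term is needed, and why the chosen subgradient at $z=0$ is consistent with the upper bound.
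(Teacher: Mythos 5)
Your proof is correct, and it reaches the same coefficient $M=d+1$ by the same final step (Cauchy--Schwarz on $((w-u)^\top x)^2$ followed by $\mathbb{E}[\norm{x}^2]=d+1$), but the route to the pointwise inequality is genuinely different from the paper's. The paper introduces the smooth majorant $\bar L_x(w) = (w^\top x - \sigma(v^\top x))^2$, proves it has a $2\norm{x}^2$-Lipschitz gradient and dominates $L_x$, and then handles the delicate case $u^\top x < 0 \le w^\top x$ by passing through an intermediate point $\bar w$ on the segment $[u,w]$ where $\bar w^\top x = 0$. You instead reduce everything to the scalar inequality $g_c(a) \le g_c(b) + g_c'(b)(a-b) + (a-b)^2$ for $g_c(t) = (\sigma(t)-\sigma(c))^2$ and verify it by a four-way sign case analysis; I checked all four cases and they go through (e.g.\ for $a<0\le b$ the right side collapses to $(a-\sigma(c))^2 \ge \sigma(c)^2$, and for $b<0\le a$ one uses $(a-b)^2 \ge a^2$ and $-2a\sigma(c)\le 0$). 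Your version is more elementary and makes the mechanism transparent — the kink of $g_c$ at the origin is concave (the one-sided slope drops by $2\sigma(c)$), so the ReLU nonsmoothness can only help the upper bound — whereas the paper's majorant-plus-intermediate-point argument is the standard smoothness machinery adapted to the kink. Your parenthetical observation that $\mathbb{E}[((w-u)^\top x)^2] = \norm{w-u}^2$ exactly, so the constant could be sharpened to $1$, is also correct (the paper's Cauchy--Schwarz step is what inflates it to $d+1$), though the weaker constant is what the downstream lemmas use.
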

    \begin{proof}
        We let $x \in \mathbb{R}^{d+1}$ such that $x_{d+1}=1$ be arbitrary and we define \begin{equation*}
            L_x(w) = (\sigma(w^{\top}x) - \sigma(v^{\top}x))^2
        \end{equation*} and \begin{equation*}
            \bar{L}_x(w) = (w^{\top}x - \sigma(v^{\top}x))^2
        \end{equation*} such that $L(w) = \mathbb{E}[L_x(w)]$ and $\nabla L(w) = \mathbb{E}[\nabla L_x(w)]$. In order to prove the desired result, (\ref{eqn:QuadraticUpperBound}), by the linearity of expectation it suffices to prove that \begin{equation}\label{eqn:QUBLx}
            L_x(w) \leq L_x(u) + \nabla L_x(u)^{\top}(w-u) + \norm{x}^2\norm{w-u}^2
        \end{equation} since $\mathbb{E}[\norm{x}^2] = d+1$. We first prove that $\bar{L}_x$ has a $2\norm{x}^2$-Lipschitz continuous gradient. \begin{align*}
            ||\nabla \bar{L}_x(w) - \nabla\bar{L}_x(u)|| & = 2\norm{((w^{\top}x - \sigma(v^{\top}x)) - (u^{\top}x - \sigma(v^{\top}x)))x} \\
            & = 2 \norm{x} |w^{\top}x - u^{\top}x| \\
            & \leq 2\norm{x}^2\norm{w-u} & \text{by Cauchy-Schwarz} \\
        \end{align*}
    Therefore, $\bar{L}_x$ has a $2\norm{x}^2$-Lipschitz continuous gradient, and moreover, it follows by standard analysis that $\bar{L}_x$ has a quadratic upper bound of the form \begin{equation}\label{eqn:QUBbarLx}
        \bar{L}_x(w) \leq \bar{L}_x(u) + \nabla\bar{L}_x(u)^{\top}(w-u) + \norm{x}^2\norm{w-u}^2
    \end{equation} We additionally observe that $L_x(w) \leq \bar{L}_x(w)$ by the following argument. If $w^{\top}x \geq 0$ then clearly $L_x(w) = \bar{L}_x(w)$. Otherwise, if $w^{\top}x  <0 $ then we have that \begin{equation*}
        L_x(w)  =(-\sigma(v^{\top}x))^2 \leq (w^{\top}x - \sigma(v^{\top}x))^2 = \bar{L}_x(w)
    \end{equation*} where the inequality follows by the fact that $w^{\top}x< 0$ and $-\sigma(v^{\top}x) \leq 0$. To establish (\ref{eqn:QUBLx}), we consider several cases. Firstly, if $u^{\top}x \geq 0$ then \begin{equation}\label{eqn:QUBLxEquality}
        L_x(u) = \bar{L}_x(u) \text{ and } \nabla L_x(u) = \nabla \bar{L}_x(u)
    \end{equation} Then it follows that \begin{align*}
        L_x(w) & \leq \bar{L}_x(w) & \text{as shown above} \\
        & \leq \bar{L}_x(u) + \nabla \bar{L}_x(u)^{\top}(w-u) + \norm{x}^2\norm{w-u}^2 & \text{by (\ref{eqn:QUBbarLx})} \\
        & = L_x(u) + \nabla L_x(u)^{\top}(w-u) + \norm{x}^2\norm{w-u}^2 & \text{by (\ref{eqn:QUBLxEquality})}
    \end{align*} Next, we consider the case where $u^{\top}x <0$ and $w^{\top}x < 0$. Then it follows that $L_x(w) = L_x(u) = \sigma(v^{\top}x)^2$ and $\nabla L_x(u) = 0$. Hence, \begin{equation*}
        L_x(w) = L_x(u) \leq L_x(u) + \nabla L_x(u)^{\top}(w-u) + \norm{x}^2\norm{w-u}^2
    \end{equation*} Finally, we consider the case where $u^{\top}x<0$ and $w^{\top}x \geq 0$. Then, we let $\bar{w}$ be the vector in the convex combination of $u$ and $w$ such that $\bar{w}^{\top}x = 0$. Then we have by the choice of $\bar{w}$ and $u^{\top}x < 0$\begin{equation}\label{eqn:QUBnormineq}
        \norm{w-\bar{w}} \leq \norm{w-u}
    \end{equation} 
    \begin{equation}\label{eqn:QUB1}
    L_x(u) = \bar{L}_x(\bar{w}) = \sigma(v^{\top}x)^2   
    \end{equation} \begin{equation}\label{eqn:QUB2}
        \nabla L_x(u) = 0
    \end{equation} and 
    \begin{equation}\label{eqn:QUB3}
        \nabla \bar{L}_x(\bar{w}) = 2(\sigma(\bar{w}^{\top}x) - \sigma(v^{\top}x))x\mathbbm{1}_{\{\bar{w}^{\top}x \geq 0\}} = -2\sigma(v^{\top}x)x
    \end{equation} 
    Then we argue as follows \begin{align*}
        L_x(w) & \leq \bar{L}_x(w) & \text{as shown above} \\
        & \leq \bar{L}_x(\bar{w}) + \nabla \bar{L}_x(\bar{w})^{\top}(w - \bar{w}) + \norm{x}^2\norm{w - \bar{w}}^2 & \text{by (\ref{eqn:QUBbarLx})} \\
        & = L_x(u) + \nabla \bar{L}_x(\bar{w})^{\top}(w - \bar{w}) + \norm{x}^2\norm{w - \bar{w}}^2 & \text{by (\ref{eqn:QUB1})} \\
        & \leq L_x(u) + \nabla \bar{L}_x(\bar{w})^{\top}(w - \bar{w}) + \norm{x}^2\norm{w - u}^2 & \text{by (\ref{eqn:QUBnormineq})} \\
        & = L_x(u) - 2 \sigma(v^{\top}x)(w^{\top}x - \bar{w}^{\top}x) + \norm{x}^2\norm{w - u}^2 & \text{by (\ref{eqn:QUB3})} \\
        & = L_x(u) - 2 \sigma(v^{\top}x)(w^{\top}x) + \norm{x}^2\norm{w - u}^2 & \text{by $\bar{w}^{\top}x = 0$} \\
        & \leq L_x(u) + \norm{x}^2\norm{w - u}^2 & \text{by $w^{\top}x, \sigma(v^{\top}x) \geq 0$} \\
        & = L_x(u) + \nabla L_x(u)^{\top}(w-u) + \norm{x}^2\norm{w - u}^2 & \text{by (\ref{eqn:QUB2})} \\
    \end{align*} This completes the proof.\end{proof} 

    \begin{lemma}\label{lemma:GDAnalysis}
        Let $w_0, w_1, \ldots, w_T$ be iterates of gradient descent applied to minimizing $L$, equation (\ref{eqn:Objective}),  with initialization at $w_0$ and step size $\alpha \leq \frac{1}{2(d+1)}$. Then there exists an iterate $w_t \in \{w_0, w_1, \ldots, w_{T-1}\}$ such that \begin{equation}
            \norm{\nabla L(w_t)}^2 \leq \frac{L(w_0)}{T(\alpha - (d+1)\alpha^2)} \leq \frac{\norm{w_0}^2+\norm{v}^2}{T(\alpha - (d+1)\alpha^2)}
        \end{equation} and \begin{equation}\label{eqn:MonotoneDecreaseOfLoss}
            L(w_{t+1}) \leq L(w_t), \,\forall t\geq 0
        \end{equation}
    \end{lemma}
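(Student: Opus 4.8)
The plan is to invoke the quadratic upper bound of Lemma~\ref{lemma:QuadraticUpperBound} as a descent lemma and then run the standard smooth gradient-descent argument. First I would instantiate inequality~(\ref{eqn:QuadraticUpperBound}) with $u = w_t$ and $w = w_{t+1} = w_t - \alpha\nabla L(w_t)$. Since $w_{t+1}-w_t = -\alpha\nabla L(w_t)$ and $M = d+1$, this yields the per-step bound $L(w_{t+1}) \le L(w_t) - \bigl(\alpha-(d+1)\alpha^2\bigr)\norm{\nabla L(w_t)}^2$.

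The step-size hypothesis $\alpha \le \frac{1}{2(d+1)}$ makes the coefficient nonnegative, since $\alpha - (d+1)\alpha^2 = \alpha\bigl(1-(d+1)\alpha\bigr) \ge \alpha/2 > 0$. As we are subtracting a nonnegative quantity, the monotone-decrease claim~(\ref{eqn:MonotoneDecreaseOfLoss}) follows immediately. Summing the per-step bound over $t = 0, \dots, T-1$ telescopes the loss differences into $L(w_0) - L(w_T)$, and since $L \ge 0$ (it is an expected square) I may drop $L(w_T)$ to obtain $\bigl(\alpha - (d+1)\alpha^2\bigr)\sum_{t=0}^{T-1}\norm{\nabla L(w_t)}^2 \le L(w_0)$. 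Because the minimum of a finite collection is at most its average, there exists an iterate $w_t \in \{w_0,\dots,w_{T-1}\}$ with $\norm{\nabla L(w_t)}^2 \le \frac{L(w_0)}{T(\alpha-(d+1)\alpha^2)}$, which is the first claimed inequality.

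The remaining piece is the crude bound $L(w_0) \le \norm{w_0}^2 + \norm{v}^2$. Here I would first establish the elementary pointwise inequality $(\sigma(a)-\sigma(b))^2 \le \sigma(a)^2 + \sigma(b)^2$ by checking the sign cases of $a,b$, so that $L(w_0) \le \mathbb{E}[\sigma(w_0^\top x)^2] + \mathbb{E}[\sigma(v^\top x)^2]$. Applying Proposition~\ref{prop:2ndmoment} after rescaling each weight vector to unit norm, using the positive homogeneity $\sigma(cz) = c\,\sigma(z)$ for $c \ge 0$, gives $\mathbb{E}[\sigma(w_0^\top x)^2] \le \norm{w_0}^2$ and $\mathbb{E}[\sigma(v^\top x)^2] \le \norm{v}^2$, completing the chain.

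There is no genuine obstacle once Lemma~\ref{lemma:QuadraticUpperBound} supplies smoothness; the argument is otherwise mechanical. The only point demanding mild care is the constant in the final inequality: a naive bound using the $1$-Lipschitzness of $\sigma$ gives only $L(w_0) \le \norm{w_0 - v}^2 \le 2\norm{w_0}^2 + 2\norm{v}^2$, so to reach the stated constant of $1$ one must instead use the sign-case inequality $(\sigma(a)-\sigma(b))^2 \le \sigma(a)^2 + \sigma(b)^2$ together with the second-moment bound.
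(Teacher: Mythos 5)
Your proposal is correct and follows essentially the same route as the paper: the descent step from Lemma~\ref{lemma:QuadraticUpperBound} with the gradient update, telescoping, min-at-most-average, and the bound $L(w_0)\le \mathbb{E}[\sigma(w_0^\top x)^2]+\mathbb{E}[\sigma(v^\top x)^2]$ via nonnegativity of the cross term followed by homogeneity and Proposition~\ref{prop:2ndmoment}. No substantive differences.
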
 \begin{proof}
        According to Lemma \ref{lemma:QuadraticUpperBound}
        \begin{align*}
            L(w_{t+1}) & \leq L(w_t) + \nabla L(w_t)^{\top}(w_{t+1}-w_t) + (d+1)\norm{w_{t+1}-w_t}^2 \\
            & = L(w_t) - \alpha \norm{\nabla L(w_t)}^2 + (d+1)\alpha^2\norm{\nabla L(w_t)}^2 & \text{by GD update} \\
            & = L(w_t) + ((d+1)\alpha^2-\alpha) \norm{\nabla L(w_t)}^2 \\
        \end{align*} Then, \begin{align*}
            (\alpha - (d+1)\alpha^2)\sum_{t=0}^{T-1}\norm{\nabla L(w_t)}^2 & \leq \sum_{t=0}^{T-1} L(w_t) - L(w_{t+1})\\
            & = L(w_0) - L(w_{T})\\
            & \leq L(w_0) & \text{by $L(w_T) \geq 0$}
        \end{align*} This implies that there exists some index $t\in \{0,1,\ldots,T-1\}$ such that \begin{equation*}
             \norm{\nabla L(w_t)}^2 \leq \frac{L(w_0)}{T(\alpha - (d+1)\alpha^2)}
        \end{equation*} Then we bound $L(w_0)$ as follows. \begin{align*}
            L(w_0) & = \mathbb{E}[(\sigma(w_0^{\top}x) - \sigma(v^{\top}x))^2] \\
            & = \mathbb{E}[\sigma(w_0^{\top}x)^2 + \sigma(v^{\top}x)^2 - 2\sigma(w_0^{\top}x)\sigma(v^{\top}x)] \\
            & \leq \mathbb{E}[\sigma(w_0^{\top}x)^2] +\mathbb{E}[\sigma(v^{\top}x)^2] \\
            & = \norm{w_0}^2\mathbb{E}[\sigma(\frac{w_0}{\norm{w_0}}^{\top}x)^2]\mathbbm{1}_{\{w_0\neq0\}} +\norm{v}^2\mathbb{E}[\sigma(\frac{v}{\norm{v}}^{\top}x)^2]\mathbbm{1}_{\{v\neq 0\}} \\
            & \leq \norm{w_0}^2+\norm{v}^2 & \text{by Proposition \ref{prop:2ndmoment}}
        \end{align*} Finally, (\ref{eqn:MonotoneDecreaseOfLoss}) follows by $(d+1)\alpha^2 - \alpha \leq 0$ for all $\alpha \in [0,\frac{1}{d+1}]$. Therefore, by the choice of step size $\alpha \leq \frac{1}{2(d+1)}$ we have that $(d+1)\alpha^2 - \alpha\leq 0$ and so \begin{equation*}
            L(w_{t+1}) \leq L(w_t) + ((d+1)\alpha^2-\alpha)\norm{\nabla L(w_t)}^2 \leq L(w_t)
        \end{equation*}
        
    \end{proof}

        \subsection{Convergence in Iterates}

        We define the directional derivative of $L(w)$ as \begin{equation}
        D_dL(w)= \nabla L(w)^{\top}d = 2\mathbb{E}[(\sigma(w^{\top}x) -\sigma(v^{\top}x))x^{\top}d\mathbbm{1}_{\{w^{\top}x \geq 0\}}]
    \end{equation} For any set $A \subseteq \mathbb{R}^{d+1}$, we define the restricted ``OLS'' objective to be \begin{equation}
        L^r_A(w) = \mathbb{E}[(w^{\top}x-v^{\top}x)^2 \mathbbm{1}_{\{x\in A\}}]
    \end{equation}

    \begin{lemma}\label{lemma:DirectionalDeriv}
        Let $A = \{x: w^{\top}x, v^{\top}x \geq 0\}$. Then \begin{equation*}
            D_{w-v}L(w) = 2L^r_A(w) + c
        \end{equation*} where $c \geq 0$ is some nonnegative value. 
    \end{lemma}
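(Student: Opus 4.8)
The plan is to expand $D_{w-v}L(w)$ directly from its definition and partition the expectation according to the sign of $v^\top x$. First I would set $d = w - v$ in the given formula, obtaining $D_{w-v}L(w) = 2\mathbb{E}[(\sigma(w^\top x) - \sigma(v^\top x))\, x^\top(w-v)\, \mathbbm{1}_{\{w^\top x \geq 0\}}]$. On the event $\{w^\top x \geq 0\}$ the ReLU is active, so I may replace $\sigma(w^\top x)$ by $w^\top x$ and write $x^\top(w-v) = w^\top x - v^\top x$. This turns the integrand into $(w^\top x - \sigma(v^\top x))(w^\top x - v^\top x)\,\mathbbm{1}_{\{w^\top x \geq 0\}}$.

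Next I would split the indicator $\mathbbm{1}_{\{w^\top x \geq 0\}}$ into the two disjoint sub-events $\{w^\top x \geq 0,\, v^\top x \geq 0\}$, which is exactly the set $A$, and $\{w^\top x \geq 0,\, v^\top x < 0\}$. On $A$ one has $\sigma(v^\top x) = v^\top x$, so the integrand collapses to $(w^\top x - v^\top x)^2\,\mathbbm{1}_{\{x \in A\}}$; taking the expectation of $2$ times this quantity yields precisely $2 L^r_A(w)$ by the definition of the restricted OLS objective.

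On the complementary sub-event $\{w^\top x \geq 0,\, v^\top x < 0\}$ one has $\sigma(v^\top x) = 0$, so the integrand is $(w^\top x)(w^\top x - v^\top x)\,\mathbbm{1}_{\{w^\top x \geq 0,\, v^\top x < 0\}}$. I would then \emph{define} $c$ to be $2$ times the expectation of this term and argue its nonnegativity pointwise: on this event $w^\top x \geq 0$ and $-v^\top x > 0$, so $w^\top x - v^\top x > 0$, whence the product of the two nonnegative factors is nonnegative. It follows that $c \geq 0$ and, combining the two contributions, $D_{w-v}L(w) = 2 L^r_A(w) + c$.

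There is no substantive obstacle in this argument; the decomposition is exact because the directional derivative is evaluated on the fixed activation region $\{w^\top x \geq 0\}$ where $\sigma(w^\top x)$ is linear, so the only care required is the bookkeeping of the indicator functions and verifying the sign of the leftover term $c$ on the region where $w$ fires but $v$ does not.
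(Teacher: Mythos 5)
Your proposal is correct and follows essentially the same route as the paper's proof: split the expectation defining $D_{w-v}L(w)$ over the events $\{w^{\top}x \geq 0,\, v^{\top}x \geq 0\}$ and $\{w^{\top}x \geq 0,\, v^{\top}x < 0\}$, recognize the first piece as $2L^r_A(w)$, and verify pointwise nonnegativity of the leftover term (the paper writes the second integrand as $2(w^{\top}x)(w^{\top}x + |v^{\top}x|)$, which is identical to your $(w^{\top}x)(w^{\top}x - v^{\top}x)$ on that event). No gaps.
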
 \begin{proof}
        Let $x\in \mathbb{R}^{d+1}$ be arbitrary. We break the proof into several cases. If $w^{\top}x, v^{\top}x \geq 0$ then \begin{align*}
            2(\sigma(w^{\top}x)-\sigma(v^{\top}x))x^{\top}(w-v)\mathbbm{1}_{\{w^{\top}x \geq 0\}}
            & = 2( w^{\top}x-v^{\top}x)^2 \\
            & = 2( w^{\top}x-v^{\top}x)^2\mathbbm{1}_{\{x\in A\}}
        \end{align*}
        
        
        If $w^{\top}x \geq 0$ and $v^{\top}x < 0$ then \begin{align*}
            2(\sigma(w^{\top}x)-\sigma(v^{\top}x))x^{\top}(w-v)\mathbbm{1}_{\{w^{\top}x \geq 0\}} & = 2(w^{\top}x)(w^{\top}x + |v^{\top}x|) & \text{by $w^{\top}x\geq 0$ and $v^{\top}x< 0$} \\
            & \geq 0 
        \end{align*} For brevity, we define $B = \{x: w^{\top}x \geq 0, v^{\top}x < 0\}$. The preceding case analysis implies that \begin{align*}
            D_{w-v}L(w) & = 2\mathbb{E}[(\sigma(w^{\top}x)-\sigma(v^{\top}x))x^{\top}(w-v)\mathbbm{1}_{\{w^{\top}x \geq 0\}}] \\
            & = 2\mathbb{E}[(\sigma(w^{\top}x)-\sigma(v^{\top}x))x^{\top}(w-v)(\mathbbm{1}_{\{x\in A\}} + \mathbbm{1}_{\{x\in B\}})] \\
            & = 2\mathbb{E}[(w^{\top}x - v^{\top}x)^2\mathbbm{1}_{\{x\in A\}}] + 2\mathbbm{E}[w^{\top}x(w^{\top}x+ |v^{\top}x|)\mathbbm{1}_{\{x\in C\}}] \\
            & = 2L^r_A(w) + c
        \end{align*} where we let $c = 2\mathbb{E}[w^{\top}x(w^{\top}x+ |v^{\top}x|)\mathbbm{1}_{\{x\in C\}}]$. Then $c\geq 0$ by the preceding cases analysis. This completes the proof.
    \end{proof}

    \begin{proposition}\label{prop:prop1_appendix}
        $D_{w-v}(w) \leq \norm{\nabla L(w)} \norm{w-v}$
    \end{proposition}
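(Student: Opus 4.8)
The plan is to recognize this as an immediate consequence of the Cauchy--Schwarz inequality applied to the defining expression for the directional derivative. Recall from the definition given just above the proposition that the directional derivative of $L$ in the direction $d$ is literally the inner product of the gradient with $d$, namely $D_d L(w) = \nabla L(w)^{\top} d$. Specializing to the direction $d = w - v$ gives
\[
    D_{w-v} L(w) = \nabla L(w)^{\top}(w - v).
\]

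The next and final step is to apply Cauchy--Schwarz to this inner product. Since for any two vectors $a, b \in \mathbb{R}^{d+1}$ we have $a^{\top} b \le |a^{\top} b| \le \norm{a}\,\norm{b}$, taking $a = \nabla L(w)$ and $b = w - v$ yields
\[
    D_{w-v} L(w) = \nabla L(w)^{\top}(w - v) \le \norm{\nabla L(w)}\,\norm{w - v},
\]
which is exactly the claimed bound. (The first inequality covers the case where the inner product is negative, in which case the result is immediate since the right-hand side is nonnegative.)

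There is essentially no obstacle here: the entire content of the proposition is that the directional derivative, being a projection of the gradient onto the direction $w - v$, cannot exceed the product of the magnitudes. I would not expand the integral form of $\nabla L(w)$ at all, since doing so only obscures the argument; the structural identity $D_d L = \nabla L^{\top} d$ together with Cauchy--Schwarz suffices. The only thing to flag is the harmless notational slip in the statement, where $D_{w-v}(w)$ should read $D_{w-v}L(w)$; the proof treats it as the latter. This result will presumably be combined with Lemma~\ref{lemma:DirectionalDeriv} (which lower-bounds $D_{w-v}L(w)$ by $2L^r_A(w)$) to relate the restricted OLS objective to the gradient norm in the subsequent convergence-in-iterates argument.
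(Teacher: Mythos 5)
Your proof is correct and is exactly the paper's argument: the paper also proves this in one line by writing $D_{w-v}L(w) = \nabla L(w)^{\top}(w-v)$ and applying Cauchy--Schwarz. Your additional remarks (the notational slip and the intended use alongside Lemma~\ref{lemma:DirectionalDeriv}) are accurate but not needed for the proof itself.
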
 \begin{proof}
        By Cauchy-Schwarz we have that \begin{equation*}
            D_{w-v}(w) = \nabla L(w)^{\top} (w-v) \leq \norm{\nabla L(w)} \norm{w-v}
        \end{equation*}
    \end{proof}

    \begin{proposition}\label{prop:prop2_appendix}
        Let $A \subseteq \mathbb{R}^{d+1}$, then \begin{equation*}
            L^r_A(w) \geq \norm{w-v}^2 \lambda_{\text{min}}(\mathbb{E}[xx^{\top}\mathbbm{1}_{\{x\in A\}}])
        \end{equation*} where $\lambda_{\text{min}}(\cdot)$ is the minimum eigenvalue of a matrix.
    \end{proposition}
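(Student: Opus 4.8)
The plan is to recognize $L^r_A(w)$ as a quadratic form in the vector $w-v$ and then invoke the standard Rayleigh-quotient lower bound for symmetric positive semidefinite matrices. First I would substitute $u = w-v$ and expand the integrand: since $(w^\top x - v^\top x)^2 = ((w-v)^\top x)^2 = u^\top x x^\top u$, linearity of expectation gives
\[
L^r_A(w) = \mathbb{E}\bigl[\,u^\top x x^\top u \,\mathbbm{1}_{\{x\in A\}}\bigr] = u^\top \,\mathbb{E}[x x^\top \mathbbm{1}_{\{x\in A\}}]\, u,
\]
where I pull the deterministic vector $u$ outside the expectation. Denoting $M = \mathbb{E}[x x^\top \mathbbm{1}_{\{x\in A\}}]$, this exhibits $L^r_A(w)$ exactly as $u^\top M u$.

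Next I would observe that $M$ is symmetric and positive semidefinite: each realization $x x^\top \mathbbm{1}_{\{x\in A\}}$ is a (possibly zero) rank-one symmetric PSD matrix, so its expectation is symmetric PSD as well. This is the only structural fact needed, and it justifies the eigenvalue language in the statement (in particular that $\lambda_{\text{min}}(M)$ is real and well-defined). With $M$ symmetric PSD, the final step is the Rayleigh bound: for every $u \in \mathbb{R}^{d+1}$,
\[
u^\top M u \;\ge\; \lambda_{\text{min}}(M)\,\norm{u}^2,
\]
which follows by diagonalizing $M$ in an orthonormal eigenbasis and bounding each eigenvalue below by $\lambda_{\text{min}}(M)$. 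Substituting $u = w-v$ yields $L^r_A(w) = u^\top M u \ge \lambda_{\text{min}}(M)\norm{w-v}^2$, which is precisely the claim.

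There is no real obstacle here; the result is elementary once $L^r_A(w)$ is rewritten as a quadratic form. The only point that warrants a line of care is confirming symmetry and positive semidefiniteness of $M$ so that the minimum eigenvalue bound legitimately applies (as opposed to, say, a singular-value or general spectral bound), but this is immediate from the fact that $M$ is an expectation of PSD matrices.
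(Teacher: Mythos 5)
Your proof is correct and follows essentially the same route as the paper's: rewrite $L^r_A(w)$ as the quadratic form $(w-v)^\top \mathbb{E}[xx^\top \mathbbm{1}_{\{x\in A\}}](w-v)$ and apply the Rayleigh-quotient lower bound for the symmetric matrix $\mathbb{E}[xx^\top \mathbbm{1}_{\{x\in A\}}]$. Your additional remark that the matrix is positive semidefinite (as an expectation of rank-one PSD matrices) is a small bonus of care beyond the paper, which only invokes symmetry, but the argument is otherwise identical.
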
 \begin{proof}
    \begin{align*}
        L^r_A(w) & = \mathbb{E}[(w^{\top}x-v^{\top}x)^2\mathbbm{1}_{\{x\in A\}}] \\
        & = (w-v)^{\top}\mathbb{E}[xx^{\top}\mathbbm{1}_{\{x\in A\}}](w-v) \\
        & \geq  \norm{w-v}^2 \lambda_{\text{min}}(\mathbb{E}[xx^{\top}\mathbbm{1}_{\{x\in A\}}])
        \end{align*} where the last line holds by the fact that $ \mathbb{E}[xx^{\top}\mathbbm{1}_{\{x\in A\}}]$ is a real symmetric matrix.
    \end{proof}

    \begin{lemma}\label{lemma:LowerBoundMinEval}
        Let $x$ be sampled from $\mathcal{N}(0,I_d)\times\{1\}$. Suppose $A = \{x: w^{\top}x \geq 0, v^{\top}x \geq 0\}$ satisfies $\mathbb{P}(A) \geq p$ for some $p > 0$. Then it follows that \begin{equation*}
            \lambda_{\min}\left(\mathbb{E}[xx^{\top}\mathbbm{1}_{\{x\in A\}}] \right) \geq \frac{p^3}{4\sqrt{2\pi e}}
        \end{equation*}
    \end{lemma}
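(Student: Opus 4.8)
The plan is to reduce the eigenvalue bound to a one-dimensional anti-concentration estimate and then control a truncated second moment. By the variational characterization, $\lambda_{\min}\bigl(\mathbb{E}[xx^{\top}\mathbbm{1}_{\{x\in A\}}]\bigr) = \min_{\norm{u}=1}\mathbb{E}[(u^{\top}x)^2\mathbbm{1}_{\{x\in A\}}]$, so it suffices to show $\mathbb{E}[(u^{\top}x)^2\mathbbm{1}_{\{x\in A\}}]\ge \frac{p^3}{4\sqrt{2\pi e}}$ for every unit vector $u$. Fixing such a $u$ and writing $x=(g,1)$ with $g\sim\mathcal{N}(0,I_d)$, the scalar $Y:=u^{\top}x$ is Gaussian with mean $\mu=u_{d+1}$ and variance $\sigma^2=\norm{u_{1:d}}^2$, where $\mu^2+\sigma^2=1$; replacing $u$ by $-u$ if necessary I may assume $\mu\ge 0$. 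Note that only $\mathbb{P}(A)\ge p$ will be used — the half-space structure of $A$ plays no role.

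First I would apply a truncation (Paley--Zygmund-style) bound: for any $t>0$,
\[
\mathbb{E}[Y^2\mathbbm{1}_{\{x\in A\}}]\ge t^2\,\mathbb{P}\bigl(A\cap\{|Y|\ge t\}\bigr)\ge t^2\bigl(\mathbb{P}(A)-\mathbb{P}(|Y|<t)\bigr)\ge t^2\bigl(p-\mathbb{P}(|Y|<t)\bigr).
\]
Thus if $t$ is chosen so that $\mathbb{P}(|Y|<t)\le p/2$, the left side is at least $\tfrac12 p\,t^2$, and the task becomes to make $t$ as large as possible subject to that anti-concentration constraint. I would bound $\mathbb{P}(|Y|<t)$ using the Gaussian density: globally $\mathbb{P}(|Y|<t)\le \tfrac{2t}{\sqrt{2\pi}\,\sigma}$, and more sharply $\mathbb{P}(|Y|<t)\le \tfrac{2t}{\sqrt{2\pi}\,\sigma}\,e^{-(\mu-t)^2/(2\sigma^2)}$ whenever $t\le\mu$.

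The argument then splits on the size of $\sigma$, with the cut-off $\sigma_0^2:=\tfrac{4}{\pi\sqrt{2\pi e}}$ chosen so the two cases meet exactly at the target constant. When $\sigma^2\ge\sigma_0^2$, the crude density bound lets me take $t=\tfrac{\sqrt{2\pi}}{4}p\sigma$, for which $\tfrac{2t}{\sqrt{2\pi}\sigma}=p/2$, so $\mathbb{P}(|Y|<t)\le p/2$ and hence $\mathbb{E}[Y^2\mathbbm{1}_{\{x\in A\}}]\ge \tfrac12 p\,t^2=\tfrac{\pi p^3\sigma^2}{16}\ge \tfrac{\pi p^3\sigma_0^2}{16}=\tfrac{p^3}{4\sqrt{2\pi e}}$. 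When $\sigma^2<\sigma_0^2$ I would instead fix the small threshold $t$ determined by $\tfrac12 p\,t^2=\tfrac{p^3}{4\sqrt{2\pi e}}$ (so $t\propto p$) and verify $\mathbb{P}(|Y|<t)\le p/2$ using the sharper density bound; here the exponential factor $e^{-(\mu-t)^2/(2\sigma^2)}$ controls the $1/\sigma$ blow-up, and the worst case of $\tfrac1\sigma e^{-\mu^2/(2\sigma^2)}$, optimized via the substitution $s=1/\sigma^2$, is what injects the $\sqrt{e}$ into the constant.

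The main obstacle is the small-$\sigma$ regime. As $\sigma\to 0$ the direction $u$ aligns with $e_{d+1}$, the Gaussian density blows up, and the crude bound $\tfrac{2t}{\sqrt{2\pi}\sigma}$ becomes useless; naively one then fears $\mathbb{P}(|Y|<t)$ could exceed $p$ and render the truncation bound vacuous. The resolution is that this is precisely when $Y$ concentrates near $\mu=\sqrt{1-\sigma^2}\approx 1$, so $|Y|$ is bounded away from $0$ and $\mathbb{P}(|Y|<t)$ is in fact exponentially small. Making this quantitative through the exponential factor in the density, and checking that the resulting constraint is tightest exactly at the cut-off $\sigma_0$, is the delicate step that pins down the constant $\tfrac{1}{4\sqrt{2\pi e}}$.
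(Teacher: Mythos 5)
Your proof is correct, but it takes a genuinely different route from the paper after the shared first step (the Rayleigh-quotient reduction to bounding $\mathbb{E}[(u^{\top}x)^2\mathbbm{1}_{\{x\in A\}}]$ for unit $u$). The paper leans on the geometry of $A$: it argues that among convex sets of the given measure the minimizer is a slab orthogonal to $u_{1:d}$, further reduces to a symmetric interval $[-r,r]$ for a standard normal (implicitly taking $u_{d+1}=0$ as the worst case), sets $r=p$, and finishes with the elementary bound $\mathbb{E}[Z^2\mathbbm{1}_{\{|Z|\le r\}}]\ge \tfrac{r^2}{4}\mathbb{P}(|Z|\in[r/2,r])$. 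You instead use only $\mathbb{P}(A)\ge p$ via the truncation $\mathbb{E}[Y^2\mathbbm{1}_A]\ge t^2(p-\mathbb{P}(|Y|<t))$ and an explicit anti-concentration analysis of $Y\sim\mathcal{N}(\mu,\sigma^2)$ split on the size of $\sigma$. What your approach buys: a strictly stronger lemma (valid for any measurable $A$ of measure at least $p$, not just the intersection of half-spaces), and it sidesteps the paper's rearrangement claims ("the minimizer is a slab", "the worst case is $u_{d+1}=0$"), which are asserted rather than proved. What it costs is more delicate constant-chasing in the small-$\sigma$ regime, which you only sketch: the quantity you cite as the worst case, $\sigma^{-1}e^{-\mu^2/(2\sigma^2)}$ (whose maximum is $1$), is not quite the one that must be controlled — you need $\sigma^{-1}e^{-(\mu-t)^2/(2\sigma^2)}$, which exceeds $1$ near $\sigma^2=\sigma_0^2$ (it is roughly $1.23$ there for $p=1$) — but the required threshold works out to about $1.80$, so the verification does go through with room to spare; this step should be written out explicitly, including the check that $t\le\mu$ throughout Case 2 and the degenerate case $\sigma=0$, where $Y^2=1$ a.s.\ and the bound is immediate.
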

    \begin{proof}
        We begin by noting that \begin{equation*}
            \lambda_{\min}\left(\mathbb{E}[xx^{\top}\mathbbm{1}_{\{x\in A\}}]\right) = \min_{u:\norm{u}=1} \mathbb{E}[(u^{\top}x)^2\mathbbm{1}_{\{x\in A\}}]
        \end{equation*} We proceed by showing that for any $u$ such that $\norm{u} = 1$ then $\mathbb{E}[(u^{\top}x)^2\mathbbm{1}_{\{x\in A\}}] \geq \frac{p^3}{4\sqrt{2\pi e}}$. We let $u$ be an arbitrary vector such that $\norm{u} = 1$. We have that $Z_u = u^{\top}x \sim \mathcal{N}(u_{d+1}, \norm{u_{1:d}}^2)$. Then $\mathbb{E}[Z_u^2\mathbbm{1}_A]$ is minimized when the measure of $A$ contains as much of $u_{1:d}^\perp$ as possible, where 
        \begin{equation*}
            u^\perp = \{x\in\mathbb{R}^d:x^{\top}u_{1:d} = 0\}
        \end{equation*} Therefore, we suppose that \begin{equation}\label{eqn:slab}
            A \supseteq (u_{1:d}^\perp + \{Z_u: Z_u \in [a,b]\}) \times \{1\}
        \end{equation} for some $a < b$. The set above takes its particular form since $A = \{x: w^{\top}x \geq 0, v^{\top}x \geq 0\}$ is a closed convex set, and hence, among all convex sets of measure $\mathbb{P}(A)$, the one that minimizes $\mathbb{E}[Z_u^2\mathbbm{1}_A]$ is necessarily a slab of the form (\ref{eqn:slab}). Then, \begin{equation*}
            \mathbb{E}[Z_u^2\mathbbm{1}_A] \geq \mathbb{E}[Z_u^2\mathbbm{1}_{\{a\leq Z\leq b\}}]
        \end{equation*} We can furthermore minimize the above expectation by considering the interval $[a,b] = [-r,r]$ for any $r > 0$ such that $\mathbb{P}(-r\leq Z_u \leq r) \leq \mathbb{P}(A)$.

        Hence, \begin{equation*}
            \mathbb{E}[Z_u^2\mathbbm{1}_{\{a\leq Z_u\leq b\}}] \geq \mathbb{E}[Z_u^2\mathbbm{1}_{\{-r\leq Z_u\leq r\}}]
        \end{equation*} Furthermore, the above expectation is minimized when $Z_u^2$ is concentrated as much as possible around zero, which occurs precisely when $u_{d+1} = 0 $ and $Z_u = Z \sim \mathcal{N}(0,\norm{u_{1:d}}^2) = \mathcal{N}(0,1)$, where we recall that $\norm{u} = 1$. Therefore, \begin{equation*}
            \mathbb{E}[Z_u^2\mathbbm{1}_{\{-r\leq Z_u\leq r\}}] \geq \mathbb{E}[Z^2\mathbbm{1}_{\{-r\leq Z\leq r\}}]
        \end{equation*} Then we make the following lower bound \begin{equation}\label{eqn:BoundingEvalPause}
            \mathbb{E}[Z^2\mathbbm{1}_{\{-r\leq Z\leq r\}}] \geq \frac{r^2}{4}\mathbb{P}\left(|Z| \in [\frac{r}{2},r]\right)
        \end{equation} We return to the choice of $r$, recalling that $r$ can be as large as possible provided that $\mathbb{P}(-r \leq Z \leq r) \leq \mathbb{P}(A)$. We therefore, choose $ r = p$ and verify that this inequality holds. \begin{align*}
            \mathbb{P}(-r\leq Z\leq r) & = \int_{-r}^r \frac{1}{\sqrt{2\pi}}e^{-\frac{x^2}{2}}\,dx \\
            & \leq \int_{-r}^r \frac{1}{\sqrt{2\pi}} \, dx \\
            & = \sqrt{\frac{2}{\pi}}r \\
            & = \sqrt{\frac{2}{\pi}}p & \text{by choice of $r$} \\
            & \leq p & \text{by $2\leq \pi$} \\
            & \leq \mathbb{P}(A) & \text{by assumption}
        \end{align*} Hence, we return to (\ref{eqn:BoundingEvalPause}) \begin{align*}
            \frac{r^2}{4}\mathbb{P}(|Z| \in [\frac{r}{2},r]) & = \frac{p^2}{4} \mathbb{P}(|Z| \in [\frac{p}{2},p])  & \text{by $r =p$} \\
            & = \frac{p^2}{2} \int_{\frac{p}{2}}^p \frac{1}{\sqrt{2\pi}}e^{-\frac{x^2}{2}}\,dx \\
            & \geq \frac{p^2}{2}\int_{\frac{p}{2}}^p \frac{1}{\sqrt{2\pi}}e^{-\frac{1}{2}}\,dx & \text{by $|p| \leq 1$}\\ 
            & = \frac{p^3}{4\sqrt{2\pi e}}
        \end{align*} Hence, we have our desired result \begin{equation*}
            \lambda_{\min}\left(\mathbb{E}[xx^{\top}\mathbbm{1}_A]\right) \geq \frac{p^3}{4\sqrt{2\pi e}}
        \end{equation*}
    
    \end{proof}

    \begin{lemma}\label{lemma:PuttingTogetherForConvergence}
        If $\norm{\nabla L(w)}^2 \leq \epsilon$, then \begin{equation*}
            \norm{w-v} \leq \sqrt{\epsilon}\frac{2\sqrt{2\pi e}}{\mathbb{P}(w^{\top}x\geq 0,v^{\top}x\geq 0)^3}
        \end{equation*}
    \end{lemma}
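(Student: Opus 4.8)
The plan is to sandwich the directional derivative $D_{w-v}L(w)$ between an upper bound furnished by the gradient-norm hypothesis and a lower bound furnished by the supporting propositions, and then solve for $\norm{w-v}$. Write $p = \mathbb{P}(w^{\top}x \geq 0,\, v^{\top}x \geq 0)$ and set $A = \{x : w^{\top}x \geq 0,\, v^{\top}x \geq 0\}$, so that $\mathbb{P}(A) = p$.

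First I would obtain the upper bound. By Proposition \ref{prop:prop1_appendix} (Cauchy--Schwarz) together with the hypothesis $\norm{\nabla L(w)}^2 \leq \epsilon$,
\[
    D_{w-v}L(w) \;\leq\; \norm{\nabla L(w)}\,\norm{w-v} \;\leq\; \sqrt{\epsilon}\,\norm{w-v}.
\]

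Next I would build up the matching lower bound by chaining the remaining three results. Lemma \ref{lemma:DirectionalDeriv} gives $D_{w-v}L(w) = 2L^r_A(w) + c \geq 2L^r_A(w)$ since $c \geq 0$; Proposition \ref{prop:prop2_appendix} then yields $L^r_A(w) \geq \norm{w-v}^2\,\lambda_{\min}(\mathbb{E}[xx^{\top}\mathbbm{1}_{\{x\in A\}}])$; and Lemma \ref{lemma:LowerBoundMinEval}, applied with $\mathbb{P}(A) = p$, lower bounds the minimum eigenvalue by $p^3/(4\sqrt{2\pi e})$. Composing these gives
\[
    D_{w-v}L(w) \;\geq\; \frac{p^3}{2\sqrt{2\pi e}}\,\norm{w-v}^2.
\]

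Finally I would combine the two inequalities. Chaining produces $\tfrac{p^3}{2\sqrt{2\pi e}}\norm{w-v}^2 \leq \sqrt{\epsilon}\,\norm{w-v}$; dividing through by $\norm{w-v}$ and rearranging yields exactly $\norm{w-v} \leq \sqrt{\epsilon}\,\tfrac{2\sqrt{2\pi e}}{p^3}$. Since each ingredient is already in hand, there is no substantive obstacle beyond careful assembly -- the difficulty was front-loaded into the minimum-eigenvalue estimate of Lemma \ref{lemma:LowerBoundMinEval}. The one point demanding attention is the degenerate case $\norm{w-v}=0$, which must be dispatched separately before dividing (the claimed bound holds trivially there because its right-hand side is nonnegative), and ensuring Lemma \ref{lemma:LowerBoundMinEval} is invoked with the exact probability $p$ so that the cube $p^3$ in the denominator matches the statement.
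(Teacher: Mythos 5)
Your proposal is correct and follows essentially the same route as the paper's own proof: Cauchy--Schwarz via Proposition \ref{prop:prop1_appendix} for the upper bound on $D_{w-v}L(w)$, then Lemma \ref{lemma:DirectionalDeriv}, Proposition \ref{prop:prop2_appendix}, and Lemma \ref{lemma:LowerBoundMinEval} chained for the lower bound, followed by dividing through by $\norm{w-v}$. Your explicit handling of the degenerate case $\norm{w-v}=0$ is a small point of care the paper omits.
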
 \begin{proof}
        We let $A = \{x: w^{\top}x\geq 0, v^{\top}x \geq 0\}\times\{1\}$ then
        \begin{align*}
            \sqrt{\epsilon} \norm{w-v} & \geq \norm{\nabla L(w)}\norm{w-v} & \text{by $\norm{\nabla L(w)}^2 \leq \epsilon$} \\
            & \leq D_{w-v}L(w) & \text{by Proposition \ref{prop:prop1_appendix}} \\
            & \geq 2L^r_A(w) & \text{by Lemma \ref{lemma:DirectionalDeriv}} \\
            & \geq 2\norm{w-v}^2 \lambda_{\text{min}}(\mathbb{E}[xx^{\top}\mathbbm{1}_{\{x\in A\}}]) & \text{by Proposition \ref{prop:prop2_appendix}} \\
            & \geq \norm{w-v}^2\frac{\mathbb{P}(w^{\top}x\geq 0,v^{\top}x\geq 0)^3}{2\sqrt{2\pi e}} & \text{by Lemma \ref{lemma:LowerBoundMinEval}} \\
        \end{align*}
        Then the desired result follows \begin{equation*}
            \norm{w-v} \leq \sqrt{\epsilon}\frac{2\sqrt{2\pi e}}{\mathbb{P}(w^{\top}x\geq 0,v^{\top}x\geq 0)^3}
        \end{equation*}
    \end{proof}

    \subsection{Resets and The Reset Oracle}


    \begin{lemma}\label{lemma:ResetCondition}
        If $\norm{\nabla L(w)}^2\leq \delta^2$, and $\mathbb{P}(w^{\top}x\geq 0,v^{\top}x\geq 0) \leq \frac{\delta^2}{4\norm{v}^2}$ then \begin{equation*}
            \mathcal{R}_{\delta}(w) = 1
        \end{equation*}
    \end{lemma}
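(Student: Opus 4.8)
The plan is to show directly that the two hypotheses force $\mathbb{E}[\sigma(w^{\top}x)] \le \delta$, which is exactly the condition under which the reset oracle returns $\mathcal{R}_\delta(w)=1$ by definition. The key idea is to extract the \emph{first} moment $\mathbb{E}[\sigma(w^{\top}x)]$ by probing the gradient in the bias direction $e_{d+1}=(0,\dots,0,1)$. Since every sample satisfies $x_{d+1}=1$, we have $x^{\top}e_{d+1}=1$, so dotting the gradient formula $\nabla L(w) = 2\mathbb{E}[(\sigma(w^{\top}x)-\sigma(v^{\top}x))\,x\,\mathbbm{1}_{\{w^{\top}x\ge 0\}}]$ against $e_{d+1}$ collapses the $x$ factor to $1$. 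Using $\sigma(w^{\top}x)\mathbbm{1}_{\{w^{\top}x\ge 0\}}=\sigma(w^{\top}x)$, this yields the exact identity
\[
2\mathbb{E}[\sigma(w^{\top}x)] = \nabla L(w)^{\top} e_{d+1} + 2\,\mathbb{E}[\sigma(v^{\top}x)\,\mathbbm{1}_{\{w^{\top}x\ge 0\}}].
\]

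I would then bound the two terms on the right separately. For the first, Cauchy--Schwarz with $\norm{e_{d+1}}=1$ and the gradient hypothesis give $\nabla L(w)^{\top}e_{d+1}\le \norm{\nabla L(w)}\le \delta$. For the cross term, I would observe that $\sigma(v^{\top}x)$ vanishes unless $v^{\top}x\ge 0$, so $\sigma(v^{\top}x)\mathbbm{1}_{\{w^{\top}x\ge 0\}} = \sigma(v^{\top}x)\mathbbm{1}_{\{x\in A\}}$ where $A=\{x: w^{\top}x\ge 0,\ v^{\top}x\ge 0\}$. Applying Cauchy--Schwarz against the indicator,
\[
\mathbb{E}[\sigma(v^{\top}x)\mathbbm{1}_{\{x\in A\}}] \le \sqrt{\mathbb{E}[\sigma(v^{\top}x)^2]}\,\sqrt{\mathbb{P}(A)}.
\]
By positive homogeneity of $\sigma$ and Proposition~\ref{prop:2ndmoment} applied to the unit vector $v/\norm{v}$, the second moment satisfies $\mathbb{E}[\sigma(v^{\top}x)^2]\le \norm{v}^2$, while the probability hypothesis gives $\sqrt{\mathbb{P}(A)}\le \delta/(2\norm{v})$. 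Multiplying, the $\norm{v}$ factors cancel and the cross term is at most $\delta/2$.

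Combining the two bounds yields $2\mathbb{E}[\sigma(w^{\top}x)] \le \delta + 2\cdot(\delta/2) = 2\delta$, hence $\mathbb{E}[\sigma(w^{\top}x)]\le \delta$ and $\mathcal{R}_\delta(w)=1$. The crux of the argument — and the only non-routine step — is the initial observation that the bias coordinate supplies a probe direction pulling out precisely the first-moment activation $\mathbb{E}[\sigma(w^{\top}x)]$ that the oracle tests, rather than the second moment one would obtain from the more obvious probe $\nabla L(w)^{\top}w$. Once that identity is in hand, the remainder is a direct Cauchy--Schwarz estimate that ties the gradient hypothesis and the probability hypothesis together, with the factor $\tfrac14$ in the probability bound chosen exactly so the cancellation leaves a clean $\delta/2$.
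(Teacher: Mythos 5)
Your proof is correct and follows essentially the same route as the paper's: both isolate the $(d+1)$-st (bias) coordinate of the gradient to obtain $2\mathbb{E}[(\sigma(w^{\top}x)-\sigma(v^{\top}x))\mathbbm{1}_{\{w^{\top}x\ge 0\}}]\le\delta$, decompose $\mathbb{E}[\sigma(w^{\top}x)]$ into this term plus the cross term $\mathbb{E}[\sigma(v^{\top}x)\mathbbm{1}_{\{w^{\top}x\ge 0,\,v^{\top}x\ge 0\}}]$, and bound the latter by $\delta/2$ via Cauchy--Schwarz, Proposition~\ref{prop:2ndmoment}, and the probability hypothesis. No gaps.
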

    \begin{proof} By $\norm{\nabla L(w)}^2 \leq \delta^2$ we have that \begin{equation}\label{eqn:resetlemma1}
         2|\mathbb{E}[(\sigma(w^{\top}x) - \sigma(v^{\top}x))\mathbbm{1}_{\{w^{\top}x\geq 0\}}]| = |\nabla L(w)_{d+1}| \leq \delta
    \end{equation} Then,
        \begin{align*}
            \mathbb{E}[\sigma(w^{\top}x)] & = \mathbb{E}[(\sigma(w^{\top}x)-\sigma(v^{\top}x))\mathbbm{1}_{\{w^{\top}x \geq 0\}}] + \mathbb{E}[\sigma(v^{\top}x)\mathbbm{1}_{\{w^{\top}x\geq 0\}}] \\
            & = \mathbb{E}[(\sigma(w^{\top}x)-\sigma(v^{\top}x))\mathbbm{1}_{\{w^{\top}x \geq 0\}}] + \mathbb{E}[\sigma(v^{\top}x)\mathbbm{1}_{\{w^{\top}x,v^{\top}x\geq 0\}}] \\
            & \leq \frac{\delta}{2} + \mathbb{E}[\sigma(v^{\top}x)\mathbbm{1}_{\{w^{\top}x,v^{\top}x\geq 0\}}] & \text{by (\ref{eqn:resetlemma1})} \\
            & \leq \frac{\delta}{2} + \sqrt{\mathbb{E}[\sigma(v^\top x)^2]\mathbb{P}(w^{\top}x, v^{\top}x \geq 0)} & \text{by Cauchy-Schwarz} \\
            & = \frac{\delta}{2} + \norm{v}\sqrt{\mathbb{E}[\sigma(\frac{v}{\norm{v}}^\top x)^2]\mathbb{P}(w^{\top}x, v^{\top}x \geq 0)}\mathbbm{1}_{\{v\neq 0\}} \\
            & \leq \frac{\delta}{2} + \norm{v}\sqrt{\mathbb{P}(w^{\top}x, v^{\top}x \geq 0)}\mathbbm{1}_{\{v\neq 0\}} & \text{by Proposition \ref{prop:2ndmoment}} \\
            & \leq \delta & \text{by assumption on $\mathbb{P}(w^{\top}x\geq0,v^{\top}x\geq0)$}
        \end{align*} Therefore, the condition of the reset oracle $\mathcal{R}_{\delta}(w)$ is satisfied and so $\mathcal{R}_{\delta}(w) = 1$.
    \end{proof}

    \begin{lemma}\label{lemma:LowerBoundLossOnResetCond}
        If $E[\sigma(w^{\top}x)] \leq \delta$ then for any $\epsilon > 0$ \begin{equation*}
            \mathbb{P}(\sigma(w^{\top}x) \geq \epsilon) \leq \frac{\delta}{\epsilon}
        \end{equation*} and \begin{equation*}
            L(w) \geq L(0) - \sqrt{\frac{\delta}{\epsilon}}\sqrt{\mathbb{E}[\sigma(v^\top x)^4]} -2\epsilon\sqrt{L(0)}
        \end{equation*}
    \end{lemma}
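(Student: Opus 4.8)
The first inequality is immediate from Markov's inequality: since $\sigma(w^{\top}x)\ge 0$ almost surely, applying Markov to this nonnegative random variable gives $\mathbb{P}(\sigma(w^{\top}x)\ge\epsilon)\le \mathbb{E}[\sigma(w^{\top}x)]/\epsilon\le\delta/\epsilon$, where the last step uses the hypothesis $\mathbb{E}[\sigma(w^{\top}x)]\le\delta$.

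For the loss bound, the plan is to compare $L(w)$ against $L(0)=\mathbb{E}[\sigma(v^{\top}x)^2]$ (recall $\sigma(0)=0$). Expanding the square in $L(w)$ and cancelling the common $\sigma(v^{\top}x)^2$ term yields the exact identity
\[
L(0)-L(w)=\mathbb{E}\bigl[2\sigma(w^{\top}x)\sigma(v^{\top}x)-\sigma(w^{\top}x)^2\bigr],
\]
so it suffices to upper bound the right-hand side by $\sqrt{\delta/\epsilon}\,\sqrt{\mathbb{E}[\sigma(v^{\top}x)^4]}+2\epsilon\sqrt{L(0)}$, after which the claim follows by rearranging.

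I would split this expectation over the event $E=\{\sigma(w^{\top}x)\ge\epsilon\}$ and its complement $E^c$. On $E^c$ we have $\sigma(w^{\top}x)<\epsilon$, so pointwise $2\sigma(w^{\top}x)\sigma(v^{\top}x)-\sigma(w^{\top}x)^2\le 2\sigma(w^{\top}x)\sigma(v^{\top}x)\le 2\epsilon\,\sigma(v^{\top}x)$; taking expectations and using $\mathbb{E}[\sigma(v^{\top}x)]\le\sqrt{\mathbb{E}[\sigma(v^{\top}x)^2]}=\sqrt{L(0)}$ (Jensen, or Cauchy-Schwarz) bounds the $E^c$ contribution by $2\epsilon\sqrt{L(0)}$. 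The main obstacle is the event $E$, where $\sigma(w^{\top}x)$ may be arbitrarily large: the hypothesis only controls the \emph{mean} activation, not the tail, since $\|w\|$ can be huge. A naive Cauchy-Schwarz on the cross term alone would leave an uncontrollable $\mathbb{E}[\sigma(w^{\top}x)^2\mathbbm{1}_E]$ factor. The fix is to keep the $-\sigma(w^{\top}x)^2$ term rather than discard it, and complete the square: pointwise,
\[
2\sigma(w^{\top}x)\sigma(v^{\top}x)-\sigma(w^{\top}x)^2=-(\sigma(w^{\top}x)-\sigma(v^{\top}x))^2+\sigma(v^{\top}x)^2\le\sigma(v^{\top}x)^2,
\]
which is entirely free of $w$.

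Finally, I would bound the $E$ contribution by $\mathbb{E}[\sigma(v^{\top}x)^2\mathbbm{1}_E]\le\sqrt{\mathbb{E}[\sigma(v^{\top}x)^4]}\,\sqrt{\mathbb{P}(E)}$ via Cauchy-Schwarz on the pair $\sigma(v^{\top}x)^2$ and $\mathbbm{1}_E$, and then insert $\mathbb{P}(E)\le\delta/\epsilon$ from the first part. Summing the two contributions gives $L(0)-L(w)\le\sqrt{\delta/\epsilon}\,\sqrt{\mathbb{E}[\sigma(v^{\top}x)^4]}+2\epsilon\sqrt{L(0)}$, which rearranges to the stated lower bound. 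The only genuinely nontrivial step is the completing-the-square move on $E$; everything else is Markov, Cauchy-Schwarz, and Jensen.
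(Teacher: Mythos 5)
Your proof is correct and is essentially the paper's argument in a slightly reorganized form: the paper lower-bounds $L(w)$ by restricting to $\{\sigma(w^{\top}x)\le\epsilon\}$ and discarding the nonnegative terms $(\sigma(w^{\top}x)-\sigma(v^{\top}x))^2\mathbbm{1}_{\{\sigma(w^{\top}x)>\epsilon\}}$ and $\sigma(w^{\top}x)^2$, which is exactly your completing-the-square step applied to the upper bound of $L(0)-L(w)$. Both proofs then use the same Cauchy--Schwarz bound $\mathbb{E}[\sigma(v^{\top}x)^2\mathbbm{1}_{\{\sigma(w^{\top}x)>\epsilon\}}]\le\sqrt{\mathbb{E}[\sigma(v^{\top}x)^4]\,\mathbb{P}(\sigma(w^{\top}x)>\epsilon)}$ together with the Markov estimate from the first part, and the same $2\epsilon\sqrt{L(0)}$ control of the cross term on the small-activation event.
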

    \begin{proof}
        The first inequality follows immediately by Markov's inequality \begin{equation}\label{eqn:ResetProbCond}
            \mathbb{P}(\sigma(w^{\top}x) \geq \epsilon) \leq \frac{\mathbb{E}[\sigma(w^{\top}x)]}{\epsilon} \leq \frac{\delta}{\epsilon}
        \end{equation} As for the second inequality, we proceed as follows. \begin{align*}
            L(w) & = \mathbb{E}[(\sigma(w^{\top}x)-\sigma(v^{\top}x))^2] \\
            &  \geq \mathbb{E}[(\sigma(w^{\top}x)-\sigma(v^{\top}x))^2\mathbbm{1}_{\{\sigma(w^{\top}x) \leq \epsilon\}}] \\ 
            & \geq \mathbb{E}[\sigma(v^{\top}x)^2\mathbbm{1}_{\{\sigma(w^{\top}x) \leq \epsilon\}}] -2\mathbb{E}[\sigma(w^{\top}x)\sigma(v^{\top}x)\mathbbm{1}_{\{\sigma(w^{\top}x) \leq \epsilon\}}] \\
            & \geq \mathbb{E}[\sigma(v^{\top}x)^2\mathbbm{1}_{\{\sigma(w^{\top}x) \leq \epsilon\}}] -2\epsilon\sqrt{\mathbb{E}[\sigma(v^{\top}x)^2]\mathbb{P}(\sigma(w^{\top}x) \leq \epsilon)} & \text{by Cauchy Schwarz} \\
            & \geq \mathbb{E}[\sigma(v^{\top}x)^2\mathbbm{1}_{\{\sigma(w^{\top}x) \leq \epsilon\}}] -2\epsilon\sqrt{\mathbb{E}[\sigma(v^{\top}x)^2]}  \\
            & = \mathbb{E}[\sigma(v^{\top}x)^2] - \mathbb{E}[\sigma(v^{\top}x)^2\mathbbm{1}_{\{\sigma(w^{\top}x) > \epsilon\}}]-2\epsilon\sqrt{\mathbb{E}[\sigma(v^{\top}x)^2]}  \\
            & \geq \mathbb{E}[\sigma(v^{\top}x)^2] - \sqrt{\mathbb{E}[\sigma(v^\top x)^4]\mathbb{P}(\sigma(w^{\top}x) > \epsilon)}-2\epsilon\sqrt{\mathbb{E}[\sigma(v^{\top}x)^2]}  & \text{by Cauchy Schwarz} \\
            & \geq \mathbb{E}[\sigma(v^{\top}x)^2] - \sqrt{\frac{\delta}{\epsilon}}\sqrt{\mathbb{E}[\sigma(v^\top x)^4]}-2\epsilon\sqrt{\mathbb{E}[\sigma(v^{\top}x)^2]}  & \text{by (\ref{eqn:ResetProbCond})} \\
            & = \mathbb{E}[\sigma(v^{\top}x)^2] - \sqrt{\frac{\delta}{\epsilon}}\sqrt{\mathbb{E}[\sigma(v^\top x)^4]}-2\epsilon \sqrt{L(0)}  &
        \end{align*}
    \end{proof}

    \begin{corollary}\label{cor:LowerBoundLossResetCond}
        If $E[\sigma(w^{\top}x)] \leq \delta$ then  \begin{equation*}
            \mathbb{P}(\sigma(w^{\top}x) \geq \delta^{\frac{1}{3}}) \leq \delta^{\frac{2}{3}}
        \end{equation*} and \begin{equation*}
            L(w) \geq L(0) - \delta^{\frac{1}{3}}\left(\sqrt{\mathbb{E}[\sigma(v^{\top}x)^4]} +2\sqrt{L(0)}\right)
        \end{equation*}
    \end{corollary}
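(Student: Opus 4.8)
The plan is to derive this corollary directly from Lemma~\ref{lemma:LowerBoundLossOnResetCond} by making the specific choice $\epsilon = \delta^{1/3}$, which balances the two error terms appearing in the general bound. Since the lemma holds for every $\epsilon > 0$, and since the condition $\mathbb{E}[\sigma(w^{\top}x)] \le \delta$ is exactly the hypothesis here, no new argument is required beyond substitution and simplification.

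Concretely, I would first recall from the definition of the loss that $L(0) = \mathbb{E}[(\sigma(0) - \sigma(v^{\top}x))^2] = \mathbb{E}[\sigma(v^{\top}x)^2]$, since $\sigma(0) = 0$; this identifies the quantity $\mathbb{E}[\sigma(v^{\top}x)^2]$ appearing in the lemma with $L(0)$, matching the form of the stated bound. Then I would apply the first inequality of Lemma~\ref{lemma:LowerBoundLossOnResetCond} with $\epsilon = \delta^{1/3}$ to obtain
\[
\mathbb{P}(\sigma(w^{\top}x) \ge \delta^{1/3}) \le \frac{\delta}{\delta^{1/3}} = \delta^{2/3},
\]
which is the first claimed inequality.

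For the second claim, I would substitute the same $\epsilon = \delta^{1/3}$ into the loss bound of Lemma~\ref{lemma:LowerBoundLossOnResetCond}. The first error term becomes $\sqrt{\delta/\delta^{1/3}}\,\sqrt{\mathbb{E}[\sigma(v^{\top}x)^4]} = \delta^{1/3}\sqrt{\mathbb{E}[\sigma(v^{\top}x)^4]}$, using $\sqrt{\delta^{2/3}} = \delta^{1/3}$ (valid since $\delta > 0$), while the second error term becomes $2\delta^{1/3}\sqrt{L(0)}$. Factoring out $\delta^{1/3}$ yields
\[
L(w) \ge L(0) - \delta^{1/3}\Bigl(\sqrt{\mathbb{E}[\sigma(v^{\top}x)^4]} + 2\sqrt{L(0)}\Bigr),
\]
as desired.

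There is no real obstacle here: the corollary is a clean specialization, and the only point worth flagging is the motivation for the exponent $1/3$. The choice $\epsilon = \delta^{1/3}$ is exactly the one that makes the Markov tail term $\sqrt{\delta/\epsilon}$ and the truncation term $\epsilon$ scale at the same rate $\delta^{1/3}$, giving the cleanest single-parameter bound; any other choice would leave an unbalanced, and hence weaker, dependence on $\delta$. This corollary is the form that will be convenient downstream, since it expresses the loss lower bound purely in terms of $\delta$ and the fixed target-dependent constants $\mathbb{E}[\sigma(v^{\top}x)^4]$ and $L(0)$, both of which can be controlled via Propositions~\ref{prop:4thmoment} and~\ref{prop:2ndmoment} once $v$ is normalized.
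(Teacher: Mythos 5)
Your proof is correct and matches the paper's own argument exactly: the paper likewise proves this corollary by invoking Lemma~\ref{lemma:LowerBoundLossOnResetCond} with the substitution $\epsilon = \delta^{1/3}$. Your additional remarks identifying $\mathbb{E}[\sigma(v^{\top}x)^2]$ with $L(0)$ and explaining why the exponent $1/3$ balances the two error terms are accurate and consistent with how the bound is used downstream.
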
\begin{proof}
        The proof follows immediately by Lemma \ref{lemma:LowerBoundLossOnResetCond} and setting $\epsilon = \delta^{\frac{1}{3}}$.
    \end{proof}

    \subsection{Convergence with High Probability}


    \begin{lemma}\label{lemma:BoundingLossAtInitWHP}
        Let $v \in \mathbb{R}^{d+1}$ be the target neuron's weights such that $-v_{d+1} \leq \frac{\norm{v_{1:d}}}{2\sqrt{2\pi}}$. Let $w$ be sampled such that $w_{d+1} = 0$ almost surely and $w_{1:d}$ is sampled from $rS^{d-1}$, the sphere with radius $r = \frac{\norm{v_{1:d}}}{2\pi}$. Then with probability at least $\frac{1}{2}$ \begin{equation*}
            L(w) \leq L(0) - \frac{\norm{v_{1:d}}^2}{8\pi^2}
        \end{equation*}
        \end{lemma}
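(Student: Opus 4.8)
The plan is to reduce the claim to a single lower bound on the cross-term $\mathbb{E}[\sigma(w^\top x)\sigma(v^\top x)]$, and then to establish that bound on an event of probability at least $\tfrac12$ via a positive-association (monotone correlation) argument. First I would expand the loss, using that $w_{d+1}=0$ forces $w^\top x = w_{1:d}^\top x_{1:d}\sim\mathcal{N}(0,r^2)$:
\[
L(w) = \mathbb{E}[\sigma(w^\top x)^2] - 2\,\mathbb{E}[\sigma(w^\top x)\sigma(v^\top x)] + L(0).
\]
The half-Gaussian second moment gives $\mathbb{E}[\sigma(w^\top x)^2] = \tfrac{r^2}{2} = \tfrac{\|v_{1:d}\|^2}{8\pi^2}$ (using $r=\|v_{1:d}\|/(2\pi)$), so the target inequality $L(w)\le L(0)-\tfrac{\|v_{1:d}\|^2}{8\pi^2}$ is exactly equivalent to showing $\mathbb{E}[\sigma(w^\top x)\sigma(v^\top x)] \ge \tfrac{r^2}{2}$.

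Next I would condition on the event $w_{1:d}^\top v_{1:d}\ge 0$, which has probability at least $\tfrac12$ since $w_{1:d}$ is symmetrically distributed on $rS^{d-1}$. Writing $U=w^\top x$ and $V=v^\top x$, these are jointly Gaussian with $\mathrm{Cov}(U,V)=w_{1:d}^\top v_{1:d}\ge 0$, and I want $\mathbb{E}[\sigma(U)\sigma(V)]\ge \mathbb{E}[\sigma(U)]\,\mathbb{E}[\sigma(V)]$. To get this elementarily I reduce to one variable: with $\hat v=v_{1:d}/\|v_{1:d}\|$ and $s=\hat v^\top x_{1:d}\sim\mathcal{N}(0,1)$, decompose $U=as+T$ where $a=w_{1:d}^\top\hat v\ge 0$ and $T$ is Gaussian and independent of $s$, while $V=\|v_{1:d}\|\,s+v_{d+1}$ depends on $s$ alone. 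Then both $s\mapsto\mathbb{E}[\sigma(U)\mid s]$ and $s\mapsto\sigma(V)$ are nondecreasing, so Chebyshev's correlation (one-dimensional FKG) inequality delivers the desired product bound.

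It then remains to lower-bound the two marginals. The factor $\mathbb{E}[\sigma(U)]=r/\sqrt{2\pi}$ is the half-normal mean. For $\mathbb{E}[\sigma(V)]$ I would apply Proposition~\ref{prop:SigmaLowerBound}: when $v_{d+1}\le 0$ it gives $\sigma(V)\ge\sigma(\|v_{1:d}\|s)+v_{d+1}$, so taking expectations and using the hypothesis $-v_{d+1}\le\|v_{1:d}\|/(2\sqrt{2\pi})$ yields $\mathbb{E}[\sigma(V)]\ge \|v_{1:d}\|/\sqrt{2\pi}+v_{d+1}\ge \|v_{1:d}\|/(2\sqrt{2\pi})$, while the case $v_{d+1}>0$ follows at once from monotonicity of $\sigma$. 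Multiplying and substituting $\|v_{1:d}\|=2\pi r$ gives $\mathbb{E}[\sigma(U)]\,\mathbb{E}[\sigma(V)]\ge \tfrac{r}{\sqrt{2\pi}}\cdot\tfrac{\|v_{1:d}\|}{2\sqrt{2\pi}}=\tfrac{r^2}{2}$, which is precisely the cross-term bound, completing the proof on the event of probability at least $\tfrac12$.

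The main obstacle is the positive-association step — establishing $\mathbb{E}[\sigma(U)\sigma(V)]\ge\mathbb{E}[\sigma(U)]\mathbb{E}[\sigma(V)]$ rigorously — where one must verify that conditioning on $w_{1:d}^\top v_{1:d}\ge 0$ is exactly what forces $a\ge 0$ and hence the monotonicity of $s\mapsto\mathbb{E}[\sigma(U)\mid s]$; the one-variable reduction is what lets me avoid the general Gaussian correlation inequality. Everything else is routine manipulation of standard Gaussian moments, and it is worth noting that the constant $1/(2\sqrt{2\pi})$ in the bias hypothesis is calibrated so that the product of the two marginal bounds equals $r^2/2$ exactly.
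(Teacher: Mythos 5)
Your proof is correct, but it takes a genuinely different route from the paper's. Both arguments start from the same expansion $L(w)=L(0)+\tfrac{\|w\|^2}{2}-2\,\mathbb{E}[\sigma(w^\top x)\sigma(v^\top x)]$ and both condition on the same probability-$\tfrac12$ event $w_{1:d}^\top v_{1:d}\ge 0$; the difference is in how the cross term is lower-bounded. The paper first strips the bias using Proposition~\ref{prop:SigmaLowerBound} (exactly as you do, but applied inside the cross term rather than to the marginal $\mathbb{E}[\sigma(v^\top x)]$), and then invokes the closed-form arc-cosine kernel of \citet{cho2009kernel}, $\mathbb{E}[\sigma(w^{\top}x)\sigma(v_{1:d}^{\top}x)]=\tfrac{1}{2\pi}\|w\|\|v_{1:d}\|(\sin\theta+(\pi-\theta)\cos\theta)$, together with a short calculus argument that $\sin\theta+(\pi-\theta)\cos\theta\ge 1$ on $[0,\tfrac{\pi}{2}]$. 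You replace the kernel formula with a positive-association argument: decomposing $w^\top x = a s + T$ with $s=\hat v^\top x_{1:d}$, $a=w_{1:d}^\top\hat v\ge 0$ on the good event, and $T\perp s$, so that both $s\mapsto\mathbb{E}[\sigma(w^\top x)\mid s]$ and $s\mapsto\sigma(v^\top x)$ are nondecreasing and the one-dimensional Chebyshev/FKG inequality gives $\mathbb{E}[\sigma(U)\sigma(V)]\ge\mathbb{E}[\sigma(U)]\,\mathbb{E}[\sigma(V)]$; the two half-normal marginal bounds then multiply out to exactly $\tfrac{r^2}{2}$. Your version is more self-contained (no external closed-form kernel is needed, only standard Gaussian moments and a monotone-correlation inequality), and the one-variable reduction you describe is precisely what makes the association step rigorous without appealing to a general Gaussian correlation inequality; the paper's version, in exchange, computes the cross term exactly and makes the geometry (the angle $\theta$ between $w$ and $v_{1:d}$) explicit. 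Both land on the identical final constant $\tfrac{\|v_{1:d}\|^2}{8\pi^2}$, and as you observe, the bias hypothesis $-v_{d+1}\le\|v_{1:d}\|/(2\sqrt{2\pi})$ is calibrated so that this works out with no slack in either argument.
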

        \begin{proof}
            \begin{align*}
                L(w) & = \mathbb{E}[(\sigma(w^{\top}x)-\sigma(v^{\top}x))^2] \\
                & = \mathbb{E}[\sigma(v^{\top}x)^2] + \mathbb{E}[\sigma(w^{\top}x)^2]-2\mathbb{E}[\sigma(w^{\top}x)\sigma(v^{\top}x)] \\
                & = L(0) + \norm{w}^2\mathbb{E}[\sigma((\frac{w}{\norm{w}})^{\top}x)^2]-2\mathbb{E}[\sigma(w^{\top}x)\sigma(v^{\top}x)] \\
            \end{align*}  Since $w_{d+1} = 0$, then $(\frac{w}{\norm{w}})^{\top}x = \sum_{i=1}^d \frac{w_ix_i}{\norm{w}}$ and so $(\frac{w}{\norm{w}})^{\top}x$ is a zero-mean normal random variable with variance $\sum_{i=1}^d\frac{w_i^2}{\norm{w}^2} = 1$. Hence, $(\frac{w}{\norm{w}})^{\top}x = Z \sim \mathcal{N}(0,1)$ and we have that  \begin{align*}
            \mathbb{E}[\sigma((\frac{w}{\norm{w}})^{\top}x)^2] & = \mathbb{E}[Z^2\mathbbm{1}_{\{Z \geq 0\}}] \\
            & = \frac{1}{2}\mathbb{E}[Z^2] & \text{by symmetery} \\
            & = \frac{1}{2}
            \end{align*} Thus, we have that \begin{align*}
                L(w) & = L(0) + \frac{\norm{w}^2}{2} -2\mathbb{E}[\sigma(w^{\top}x)\sigma(v^{\top}x)] \\
                & = L(0) + \frac{\norm{w}^2}{2} -2\mathbb{E}[\sigma(w^{\top}x)\sigma(v_{1:d}^{\top}x_{1:d} + v_{d+1})] \\
                & \leq L(0) + \frac{\norm{w}^2}{2} -2\mathbb{E}[\sigma(w^{\top}x)\sigma(v_{1:d}^{\top}x_{1:d} + v_{d+1}\mathbbm{1}_{\{v_{d+1}\leq 0\}})] \\
                & \leq L(0) + \frac{\norm{w}^2}{2} -2\mathbb{E}[\sigma(w^{\top}x)(\sigma(v_{1:d}^{\top}x_{1:d}) + v_{d+1}\mathbbm{1}_{\{v_{d+1}\leq 0\}})] \\
                & = L(0) + \frac{\norm{w}^2}{2} -2\mathbb{E}[\sigma(w^{\top}x)\sigma(v_{1:d}^{\top}x_{1:d})] -2\mathbb{E}[\sigma(w^{\top}x)]v_{d+1}\mathbbm{1}_{\{v_{d+1}\leq 0\}} \\
            \end{align*} where the second-last line above follows by Proposition \ref{prop:SigmaLowerBound}. Then, we bound the fourth term above as follows \begin{align*}
                -2\mathbb{E}[\sigma(w^{\top}x)]v_{d+1}\mathbbm{1}_{\{v_{d+1}\leq 0\}} & = -2\norm{w}\mathbb{E}[Z\mathbbm{1}_{\{Z\geq 0\}}]v_{d+1}\mathbbm{1}_{\{v_{d+1}\leq 0\}} & \text{where $Z\sim \mathcal{N}(0,1)$} \\
                & = - \norm{w}\mathbb{E}[|Z|]v_{d+1}\mathbbm{1}_{\{v_{d+1}\leq 0\}} & \text{by symmetry}\\
                & = -\norm{w}\sqrt{\frac{2}{\pi}}v_{d+1}\mathbbm{1}_{\{v_{d+1}\leq 0\}} & \text{by $\mathbb{E}[|Z|] = \sqrt{\frac{2}{\pi}}$} \\
                & \leq \frac{1}{2\pi}\norm{w}\norm{v_{1:d}}\mathbbm{1}_{\{v_{d+1}\leq 0\}} & \text{by assumption $-v_{d+1} \leq \frac{\norm{v_{1:d}}}{2\sqrt{2\pi}}$} \\
                & \leq \frac{1}{2\pi}\norm{w}\norm{v_{1:d}}
            \end{align*} With this upper bound, we continue our proof \begin{align*}
                L(w) & \leq L(0) + \frac{\norm{w}^2}{2} + \frac{1}{2\pi}\norm{w}\norm{v_{1:d}} - 2\mathbb{E}[\sigma(w^{\top}x)\sigma(v_{1:d}^{\top}x)]
            \end{align*} In order to bound $- 2\mathbb{E}[\sigma(w^{\top}x)\sigma(v_{1:d}^{\top}x)]$ we note the following closed form solution due to \cite{cho2009kernel} \begin{equation*}
                \mathbb{E}[\sigma(w^{\top}x)\sigma(v_{1:d}^{\top}x)] = \frac{1}{2\pi}\norm{w}\norm{v_{1:d}}(\sin\theta +(\pi-\theta)\cos\theta)
            \end{equation*} where $\theta = \arccos\left(\frac{w^{\top}v_{1:d}}{\norm{w}\norm{v_{1:d}}}\right)$. Defining $f(\theta) = \sin\theta+(\pi-\theta)\cos\theta$, we briefly argue that $f(\theta) \geq 1, \forall \theta \in[0,\frac{\pi}{2}]$. First we note that $f'(\theta) = -(\pi -\theta)\sin\theta$ and thus $f'(\theta) \leq 0, \forall \theta \in [0,\pi]$. Since $f(\frac{\pi}{2}) = 1$ then it follows that $f(\theta) \geq 1, \forall \theta \in[0,\frac{\pi}{2}]$. Then with probability $\frac{1}{2}$, $w^{\top}v \geq 0$ and hence $(\sin\theta +(\pi-\theta)\cos\theta) \geq 1$. Thus, \begin{align*}
                L(w) & \leq L(0) + \frac{\norm{w}^2}{2} + \frac{1}{2\pi}\norm{w}\norm{v_{1:d}} - \frac{1}{\pi}\norm{w}\norm{v_{1:d}}(\sin\theta +(\pi-\theta)\cos\theta) \\
                & \leq L(0) + \frac{\norm{w}^2}{2} + \frac{1}{2\pi}\norm{w}\norm{v_{1:d}} - \frac{1}{\pi}\norm{w}\norm{v_{1:d}} & \text{by $w^{\top}v \geq 0$}\\
                & \leq L(0) + \frac{\norm{w}^2}{2} - \frac{1}{2\pi}\norm{w}\norm{v_{1:d}} \\
            \end{align*} Then by the assumption that $w_{1:d}$ is sampled from $rS^{d-1}$ where $r = \frac{\norm{v_{1:d}}}{2\pi}$ we have our desired result, with probability at least $\frac{1}{2}$ \begin{equation*}
                L(w) \leq L(0) + \frac{\norm{v_{1:d}}^2}{8\pi^2} - \frac{\norm{v_{1:d}}^2}{4\pi^2} = L(0) - \frac{\norm{v_{1:d}}^2}{8\pi^2}
            \end{equation*} 
        \end{proof}

        \textbf{Remarks.} For simplicity of exposition, in Lemma~\ref{lemma:BoundingLossAtInitWHP} we assume that $w_{1:d}$ is sampled uniformly from the sphere of radius $\frac{\|v_{1:d}\|}{2\pi}$; alternatively, one may sample $w_{1:d}$ from $\mathcal{N}(0,\sigma^2 I_d)$ with $\sigma^2 = \frac{\|v_{1:d}\|^2}{4\pi^2 d}$, in which case standard concentration of measure arguments imply that 
\[
\mathbb{P}\!\Bigl(\|w\|^2 \in \Bigl[(1-\epsilon)\frac{\|v_{1:d}\|^2}{4\pi^2}, (1+\epsilon)\frac{\|v_{1:d}\|^2}{4\pi^2}\Bigr]\Bigr) \geq 1 - e^{-\Omega(d\epsilon^2)}
\]
We note that the $-v_{d+1} \leq \frac{\|v_{1:d}\|}{2\sqrt{2\pi}}$ in Lemma~\ref{lemma:BoundingLossAtInitWHP} ensures that the bias $v_{d+1}$ is not arbitrarily negative relative to $\|v_{1:d}\|$, an assumption nearly identical to that used in the gradient descent analysis of \cite{vardi2021learning}.

\begin{lemma}\label{lemma:LossComparison}
        Suppose that $C_1\leq \norm{v}$ and $C_2\norm{v} \leq \norm{v_{1:d}}$ for some constants $C_1,C_2 > 0$. Let $\delta < \left(\frac{C_2^2}{8\pi^2(\sqrt{3}+\frac{2}{C_1})}\right)^3$ and let $w\in \mathbb{R}^{d+1}$ such that $\mathbb{E}[\sigma(w^{\top}x)] \leq \delta$ then \begin{equation*}
            L(w) > L(0) - \frac{\norm{v_{1:d}}^2}{8\pi^2}
        \end{equation*}
    \end{lemma}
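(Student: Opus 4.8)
The plan is to invoke Corollary~\ref{cor:LowerBoundLossResetCond}, which under the hypothesis $\mathbb{E}[\sigma(w^{\top}x)] \leq \delta$ directly supplies the lower bound
\[
L(w) \geq L(0) - \delta^{1/3}\Bigl(\sqrt{\mathbb{E}[\sigma(v^{\top}x)^4]} + 2\sqrt{L(0)}\Bigr).
\]
Given this, it suffices to show that the subtracted term is \emph{strictly} smaller than $\frac{\norm{v_{1:d}}^2}{8\pi^2}$; the desired strict inequality $L(w) > L(0) - \frac{\norm{v_{1:d}}^2}{8\pi^2}$ then follows immediately, with strictness inherited from the strict hypothesis on $\delta$.

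To bound the two moment terms, I would first note that $L(0) = \mathbb{E}[\sigma(v^{\top}x)^2]$, since $\sigma(0)=0$. Applying Proposition~\ref{prop:2ndmoment} to the unit vector $v/\norm{v}$ gives $L(0) = \norm{v}^2\,\mathbb{E}[\sigma((v/\norm{v})^{\top}x)^2] \leq \norm{v}^2$, so $\sqrt{L(0)} \leq \norm{v}$. Likewise, Proposition~\ref{prop:4thmoment} applied to $v/\norm{v}$ yields $\mathbb{E}[\sigma(v^{\top}x)^4] = \norm{v}^4\,\mathbb{E}[\sigma((v/\norm{v})^{\top}x)^4] \leq 3\norm{v}^4$, hence $\sqrt{\mathbb{E}[\sigma(v^{\top}x)^4]} \leq \sqrt{3}\,\norm{v}^2$. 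The subtracted term is therefore at most $\delta^{1/3}\bigl(\sqrt{3}\,\norm{v}^2 + 2\norm{v}\bigr)$.

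The remaining step is pure algebra. Using $\norm{v_{1:d}} \geq C_2\norm{v}$ to lower-bound the target by $\frac{C_2^2\norm{v}^2}{8\pi^2}$, then dividing through by $\norm{v}^2$ (legitimate since $\norm{v} \geq C_1 > 0$), and using $\norm{v} \geq C_1$ to bound $2/\norm{v} \leq 2/C_1$, the required inequality reduces to $\delta^{1/3}\bigl(\sqrt{3} + 2/C_1\bigr) < C_2^2/(8\pi^2)$, which upon cubing is precisely the standing assumption $\delta < \bigl(C_2^2/(8\pi^2(\sqrt{3}+2/C_1))\bigr)^3$. I do not anticipate a genuine obstacle: the only subtleties are recognizing at the outset that $L(0)$ is itself the second moment of $\sigma(v^{\top}x)$---so that both error terms scale homogeneously in $\norm{v}$ and can be absorbed into the $\norm{v_{1:d}}^2$ target via the ratio assumption $\norm{v_{1:d}} \geq C_2\norm{v}$---and carefully tracking that the strict inequality on $\delta$ propagates through to a strict conclusion.
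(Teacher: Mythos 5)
Your proposal is correct and follows essentially the same route as the paper: invoke Corollary~\ref{cor:LowerBoundLossResetCond}, bound $\sqrt{\mathbb{E}[\sigma(v^{\top}x)^4]}$ and $\sqrt{L(0)}$ via Propositions~\ref{prop:4thmoment} and~\ref{prop:2ndmoment} applied to $v/\norm{v}$, and reduce the comparison with $\frac{\norm{v_{1:d}}^2}{8\pi^2}$ to the standing assumption on $\delta$ using $\norm{v_{1:d}}\geq C_2\norm{v}$ and $\norm{v}\geq C_1$. The paper organizes the final algebra as a ratio rather than dividing through by $\norm{v}^2$, but the argument is the same.
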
 \begin{proof}
            According to Corollary \ref{cor:LowerBoundLossResetCond}, $\mathbb{E}[\sigma(w^{\top}x)]\leq \delta$ implies that  \begin{align*}
                L(w) & \geq L(0)- \delta^{\frac{1}{3}}\left(\sqrt{\mathbb{E}[\sigma(v^{\top}x)^4]} +2\sqrt{L(0)}\right)
            \end{align*} Then the desired result holds if \begin{equation*}
                \delta^{\frac{1}{3}}\left(\sqrt{\mathbb{E}[\sigma(v^{\top}x)^4]} +2\sqrt{L(0)}\right) < \frac{\norm{v_{1:d}}^2}{8\pi^2}
            \end{equation*} By the assumption that $C_2\norm{v}\leq \norm{v_{1:d}}$, the above holds if \begin{equation*}
                \delta^{\frac{1}{3}}\left(\sqrt{\mathbb{E}[\sigma(v^{\top}x)^4]} +2\sqrt{L(0)}\right) < \frac{C_2^2}{8\pi^2}\norm{v}^2
            \end{equation*} We then observe that \begin{align*}
                \frac{\frac{C_2^2}{8\pi^2}\norm{v}^2}{\sqrt{\mathbb{E}[\sigma(v^{\top}x)^4]} +2\sqrt{L(0)}} & = \frac{\frac{C_2^2}{8\pi^2}}{\sqrt{\mathbb{E}[\sigma(\frac{v}{\norm{v}}^{\top}x)^4]} +\frac{2}{\norm{v}}\sqrt{\mathbb{E}[\sigma(\frac{v}{\norm{v}}^\top x)^2]}} \\
                & \geq \frac{C_2^2}{8\pi^2(\sqrt{3} +\frac{2}{\norm{v}})} & \text{by Propositions \ref{prop:4thmoment} and \ref{prop:2ndmoment}}\\
                & \geq \frac{C_2^2}{8\pi^2(\sqrt{3} +\frac{2}{C_1})} & \text{by $C_1 \leq \norm{v}$}\\
                & > \delta^{\frac{1}{3}} & \text{by choice of $\delta$}
            \end{align*} Therefore, we have our desired result, \begin{equation*}
                L(w) > L(0) - \frac{\norm{v_{1:d}}^2}{8\pi^2}
            \end{equation*}
            \end{proof}

        \begin{lemma}\label{lemma:UpperBoundLoss1}
            If $\norm{\nabla L(w)}^2 \leq \epsilon$ then \begin{equation*}
                L(w) \leq \epsilon \frac{8\pi e (d+1)}{\mathbb{P}(w^{\top}x\geq, v^{\top}x\geq 0)^6}
            \end{equation*}
        \end{lemma}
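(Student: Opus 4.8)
The plan is to combine the quadratic upper bound on $L$ from Lemma~\ref{lemma:QuadraticUpperBound} with the iterate-convergence bound of Lemma~\ref{lemma:PuttingTogetherForConvergence}; this reduces the statement to a one-line substitution. The key observation is that the target $v$ is simultaneously the global minimizer and a stationary point of $L$, so the first-order term in the quadratic bound vanishes when we expand around $u = v$.

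Concretely, I would first apply Lemma~\ref{lemma:QuadraticUpperBound} with $u = v$, which yields
\[
L(w) \leq L(v) + \nabla L(v)^{\top}(w-v) + (d+1)\norm{w-v}^2.
\]
Here I use that $L(v) = 0$ (the optimal loss is zero, as noted in the definition of $R_T$) and that $\nabla L(v) = \mathbb{E}[2(\sigma(v^{\top}x) - \sigma(v^{\top}x)) x \mathbbm{1}_{\{v^{\top}x \geq 0\}}] = 0$, since the two ReLU terms cancel identically at $w = v$. Both terms therefore drop out, leaving the clean bound $L(w) \leq (d+1)\norm{w-v}^2$.

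Next I would invoke Lemma~\ref{lemma:PuttingTogetherForConvergence}: since $\norm{\nabla L(w)}^2 \leq \epsilon$ by hypothesis, it gives
\[
\norm{w-v} \leq \sqrt{\epsilon}\,\frac{2\sqrt{2\pi e}}{\mathbb{P}(w^{\top}x\geq 0,\,v^{\top}x\geq 0)^3},
\]
so that, squaring,
\[
\norm{w-v}^2 \leq \epsilon\,\frac{8\pi e}{\mathbb{P}(w^{\top}x\geq 0,\,v^{\top}x\geq 0)^6}.
\]
Substituting this into $L(w) \leq (d+1)\norm{w-v}^2$ produces exactly the claimed bound $L(w) \leq \epsilon\,\frac{8\pi e (d+1)}{\mathbb{P}(w^{\top}x\geq 0,\,v^{\top}x\geq 0)^6}$.

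I do not anticipate a genuine obstacle here, since all the heavy lifting is already packaged into the two prior lemmas. The only point requiring a moment of care is the vanishing of the linear term: one must confirm that $v$ is a stationary point under the specific subgradient convention $\sigma'(z) = \mathbbm{1}_{\{z\geq 0\}}$ adopted in the preliminaries, but this is immediate because the difference $\sigma(v^{\top}x) - \sigma(v^{\top}x)$ is identically zero inside the gradient expectation regardless of the indicator. The remaining effort is purely the algebraic chaining of the two inequalities.
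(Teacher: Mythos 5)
Your proof is correct, and its second half — squaring the bound from Lemma~\ref{lemma:PuttingTogetherForConvergence} and multiplying by $(d+1)$ — is exactly what the paper does. The only divergence is in how you obtain the intermediate inequality $L(w)\le (d+1)\norm{w-v}^2$: you invoke Lemma~\ref{lemma:QuadraticUpperBound} at $u=v$ and observe that both $L(v)$ and $\nabla L(v)$ vanish (which is a legitimate use of that lemma, and your check that $v$ is stationary under the subgradient convention $\sigma'(z)=\mathbbm{1}_{\{z\ge 0\}}$ is the right thing to verify). The paper instead gets the same inequality in two elementary lines: $|\sigma(w^{\top}x)-\sigma(v^{\top}x)|\le |(w-v)^{\top}x|$ by $1$-Lipschitzness of the ReLU, then Cauchy--Schwarz and $\mathbb{E}[\norm{x}^2]=d+1$. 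Your route leans on the heavier descent-lemma machinery where a direct pointwise bound suffices, but since Lemma~\ref{lemma:QuadraticUpperBound} is already established in the paper, nothing is lost; both arguments land on the identical bound and then finish identically.
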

        \begin{proof}
            \begin{align*}
                L(w) & = \mathbb{E}[(\sigma(w^{\top}x)-\sigma(v^{\top}x))^2] \\
                & \leq \mathbb{E}[(w^\top x - v^\top x)^2] \\
                & \leq \norm{w-v}^2\mathbb{E}[\norm{x}^2] & \text{by Cauchy-Schwarz} \\
                & = (d+1)\norm{w-v}^2 \\
                & \leq \epsilon \frac{8\pi e(d+1)}{\mathbb{P}(w^{\top}x\geq0,v^{\top}x\geq 0)^6} & \text{by Lemma \ref{lemma:PuttingTogetherForConvergence}}
            \end{align*}
        \end{proof}

\begin{lemma}\label{lemma:MaxResetTime}
            Suppose that $C_1 \leq \norm{v}$ and $C_2\norm{v} \leq \norm{v_{1:d}}$ for some constants $C_1,C_2 > 0$. Let $\delta < \left(\frac{C_2^2}{8\pi^2(\sqrt{3}+\frac{2}{C_1})}\right)^3$. We define \begin{equation}\label{eqn:tr}
                t_r = \max\left\{\frac{\norm{w_0}^2+\norm{v}^2}{\delta^2(\alpha-(d+1)\alpha^2)}, \frac{2\cdot4^7\pi e(\norm{w_0}^2+\norm{v}^2)\norm{v}^{12}(d+1)}{(L(0)-\frac{\norm{v_{1:d}}^2}{8\pi^2})\delta^{12}(\alpha-(d+1)\alpha^2)} \right\}
            \end{equation} Then the maximum number of gradient descent steps, with step size $\alpha \leq \frac{1}{2(d+1)}$ and weights initialized at $w_0$, until a reset is triggered by the reset oracle $\mathcal{R}_\delta$ is at most $t_r$.  
\end{lemma}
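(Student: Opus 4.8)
The plan is to prove the logically equivalent statement that \emph{if no reset is triggered during the first $t_r$ gradient steps, then no reset is ever triggered}; its contrapositive says precisely that whenever a reset occurs it must occur within the first $t_r$ steps, which is the claim. So suppose $\mathcal{R}_\delta(w_s)=0$ for every $s\in\{0,1,\dots,t_r\}$, meaning all updates $w_0,\dots,w_{t_r}$ are genuine gradient-descent steps and $\mathbb{E}[\sigma(w_s^{\top}x)]>\delta$ throughout this window.

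First I would locate an iterate with a small gradient. Applying Lemma~\ref{lemma:GDAnalysis} with horizon $T=t_r$ yields an index $t^*\in\{0,\dots,t_r-1\}$ with
\[
\norm{\nabla L(w_{t^*})}^2 \;\le\; \epsilon \;:=\; \frac{\norm{w_0}^2+\norm{v}^2}{t_r\,(\alpha-(d+1)\alpha^2)}.
\]
The first term in the definition~\eqref{eqn:tr} of $t_r$ forces $\epsilon\le\delta^2$, while the second term forces $\epsilon\le \bigl(L(0)-\tfrac{\norm{v_{1:d}}^2}{8\pi^2}\bigr)\delta^{12}\big/\bigl(2\cdot4^7\pi e\,\norm{v}^{12}(d+1)\bigr)$. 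Since $\mathcal{R}_\delta(w_{t^*})=0$ while $\norm{\nabla L(w_{t^*})}^2\le\delta^2$, the contrapositive of Lemma~\ref{lemma:ResetCondition} gives $\mathbb{P}(w_{t^*}^{\top}x\ge0,\,v^{\top}x\ge0)>\tfrac{\delta^2}{4\norm{v}^2}$. Feeding this into Lemma~\ref{lemma:UpperBoundLoss1} and then using the second bound on $\epsilon$,
\[
L(w_{t^*}) \;\le\; \epsilon\,\frac{8\pi e(d+1)}{\mathbb{P}(w_{t^*}^{\top}x\ge0,v^{\top}x\ge0)^6} \;<\; \epsilon\,\frac{8\pi e(d+1)\,4^6\norm{v}^{12}}{\delta^{12}} \;\le\; L(0)-\frac{\norm{v_{1:d}}^2}{8\pi^2},
\]
where the constants cancel exactly, the factor $2$ inside the $\max$ of~\eqref{eqn:tr} absorbing the arithmetic $8\cdot4^6/(2\cdot4^7)=1$.

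It remains to propagate this gain forward in time. By the monotone-decrease property~\eqref{eqn:MonotoneDecreaseOfLoss} of Lemma~\ref{lemma:GDAnalysis}, $L(w_s)\le L(w_{t^*})<L(0)-\tfrac{\norm{v_{1:d}}^2}{8\pi^2}$ for every $s\ge t^*$ as long as the iterates remain gradient steps. The contrapositive of Lemma~\ref{lemma:LossComparison}---whose hypotheses ($C_1\le\norm{v}$, $C_2\norm{v}\le\norm{v_{1:d}}$, and the stated upper bound on $\delta$) are exactly those assumed here---then shows $\mathbb{E}[\sigma(w_s^{\top}x)]>\delta$, i.e.\ $\mathcal{R}_\delta(w_s)=0$, for all $s\ge t^*$. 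This is a self-consistent induction: no reset up to time $s$ keeps $w_s$ a gradient iterate, which by the displayed threshold forbids a reset at $s$. Combined with the hypothesis that no reset fired for $s<t^*$ (indeed for all $s\le t_r$), we conclude no reset is ever triggered, establishing the claim.

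The main obstacle is twofold. Conceptually, one must spot the right reformulation: a reset bound cannot be had by merely waiting for the gradient to vanish, since a small gradient alone does not trigger $\mathcal{R}_\delta$; instead one argues that failing to reset \emph{quickly} forces the loss below $L(0)-\tfrac{\norm{v_{1:d}}^2}{8\pi^2}$, after which monotonicity and Lemma~\ref{lemma:LossComparison} permanently rule out future resets. Technically, the delicate part is the constant bookkeeping that makes the two arguments of the $\max$ in~\eqref{eqn:tr} line up with the thresholds $\delta^2$ (for Lemma~\ref{lemma:ResetCondition}) and $L(0)-\tfrac{\norm{v_{1:d}}^2}{8\pi^2}$ (for Lemma~\ref{lemma:LossComparison}) after passing through the sixth-power probability factor of Lemma~\ref{lemma:UpperBoundLoss1}.
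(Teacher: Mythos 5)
Your proof is correct and follows essentially the same route as the paper's: it locates a small-gradient iterate within $t_r$ steps via Lemma~\ref{lemma:GDAnalysis}, combines the contrapositive of Lemma~\ref{lemma:ResetCondition} with Lemma~\ref{lemma:UpperBoundLoss1} and the choice of $t_r$ to force $L(w_{t^*}) < L(0)-\norm{v_{1:d}}^2/(8\pi^2)$, and then uses monotonicity of the loss together with the contrapositive of Lemma~\ref{lemma:LossComparison} to rule out all later resets. The only difference is presentational: you argue the contrapositive (no reset in $t_r$ steps implies no reset ever), whereas the paper runs the identical dichotomy directly on whether $\mathbb{P}(w_t^{\top}x\ge 0,\,v^{\top}x\ge 0)$ exceeds $\delta^2/(4\norm{v}^2)$ at the small-gradient iterate.
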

\begin{proof}
    By Lemma \ref{lemma:GDAnalysis}, there exists a time step $t \in \{0,1,\ldots, t_{r-1}\}$ such that
    \begin{equation*}
        \norm{\nabla L(w_t)}^2  \leq \frac{\norm{w_0}^2+\norm{v}^2}{t_r(\alpha - (d+1)\alpha^2)}
    \end{equation*} Then by the choice of $t_r$, we have that $\norm{\nabla L(w_t)}^2 \leq \delta^2$. If $\mathbb{P}(w_t^{\top}x \geq 0, v^{\top}x \geq0) \leq \frac{\delta^2}{4\norm{v}^2}$ then according to Lemma \ref{lemma:ResetCondition} we have that $\mathcal{R}_\delta(w_t) = 1$ and a reset of the weights is triggered at time $t$. As for the case where 
    \begin{equation}\label{eqn:secondcase}
        \mathbb{P}(w_t^{\top}x \geq 0, v^{\top}x \geq0) > \frac{\delta^2}{4\norm{v}^2}
    \end{equation} we argue that $\mathcal{R}_\delta(w_{t'}) = 0$ for all $t'\geq t$, or in other words, no resets are triggered by the reset oracle $\mathcal{R}_\delta$ beyond time $t$. By taking the contrapositive of Lemma \ref{lemma:LossComparison} we have that \begin{equation}\label{eqn:MaxResetTime1}
        L(w_t) \leq L(0) - \frac{\norm{v_{1:d}}^2}{8\pi^2}
    \end{equation} implies that $\mathbb{E}[\sigma(w_t^{\top}x)] >\delta$, or equivalently, $\mathcal{R}_\delta(w_t) = 0$. Furthermore, Lemma \ref{lemma:GDAnalysis} ensures that $L(w_{t'}) \leq L(w_t)$ for all $t' \geq t$, and so it suffices to establish (\ref{eqn:MaxResetTime1}) for iterate $w_t$. Then according to Lemma \ref{lemma:UpperBoundLoss1} we have that \begin{align*}
        L(w_t) & \leq \frac{\norm{w_0}^2+\norm{v}^2}{t_r(\alpha - (d+1)\alpha^2)} \frac{8\pi e(d+1)}{\mathbb{P}(w_t^{\top}x\geq 0,v^{\top}x\geq 0)^6} \\
        & < \frac{2\cdot4^7\pi e(\norm{w_0}^2+\norm{v}^2)\norm{v}^{12}(d+1)}{t_r\delta^{12}(\alpha-(d+1)\alpha^2)} & \text{by (\ref{eqn:secondcase})}\\
        & \leq L(0)-\frac{\norm{v_{1:d}}^2}{8\pi^2} & \text{by chocie of $t_r$}
    \end{align*}
\end{proof}

\subsection{Regret Guarantees}


\begin{theorem}\label{thm:PositiveRegretGuarantee}
Let $w_0$ be sampled from \(\mathcal{D}_{w_0} = \text{Unif}(l\cdot S^{d-1}\times\{0\})\) for some positive constant $l>0$, where $l\cdot S^{d-1}\subseteq \mathbb{R}^d$ is the $r$-radius sphere. Let \(v\in\mathbb{R}^{d+1}\) (possibly chosen adversarially with knowledge of \(w_0\)) be the target neuron's weights satisfying
\[
-v_{d+1} \le \frac{\|v_{1:d}\|}{2\sqrt{2\pi}},\quad \|v_{1:d}\|=2\pi l,\quad \|v\|\ge C_1,\quad \|v_{1:d}\|\ge C_2\|v\|,
\]
for some constants \(C_1,C_2>0\), and suppose
\[
\delta<\left(\frac{C_2^2}{8\pi^2\Bigl(\sqrt{3}+\frac{2}{C_1}\Bigr)}\right)^3.
\]
Then, minimizing \(L(\cdot)\) via gradient descent with step size \(\alpha\le \frac{1}{2(d+1)}\), initialized at \(w_0\), and employing a reset oracle \(\mathcal{R}_\delta\), yields
\[
\forall\, T\ge0,\quad \mathbb{E}[R_T]\le  \mathcal{O}\!\Bigl(\frac{d^2\ln T}{T}\Bigr),
\]
where the expectation is taken over the randomness of the the initialization of $w_0$ and the resets.
\end{theorem}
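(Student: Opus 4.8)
The plan is to decompose the optimization trajectory into \emph{epochs} delimited by the reset times of the oracle $\mathcal{R}_\delta$, and to argue that the process is eventually absorbed into a favorable region where plain gradient descent converges undisturbed. Each time a reset fires, a fresh $w$ is drawn from $\mathcal{D}_{w_0}$; by Lemma~\ref{lemma:BoundingLossAtInitWHP} this draw satisfies $L(w)\le L(0)-\tfrac{\|v_{1:d}\|^2}{8\pi^2}$ with probability at least $\tfrac12$, independently across resets, and I will call such a draw \emph{good}. The first key observation is that the good region is absorbing: by the contrapositive of Lemma~\ref{lemma:LossComparison}, any $w$ with $L(w)\le L(0)-\tfrac{\|v_{1:d}\|^2}{8\pi^2}$ has $\mathbb{E}[\sigma(w^{\top}x)]>\delta$, hence $\mathcal{R}_\delta(w)=0$ and no reset is triggered; combined with the monotone loss decrease of gradient descent (Lemma~\ref{lemma:GDAnalysis}, eq.~\eqref{eqn:MonotoneDecreaseOfLoss}), once a good iterate is reached every subsequent iterate stays good and no further resets occur.

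Next I would control the cost incurred before absorption. Since each reset independently yields a good draw with probability at least $\tfrac12$, the number of resets preceding the first good draw is stochastically dominated by a Geometric$(\tfrac12)$ random variable, so its expectation is at most $2$. By Lemma~\ref{lemma:MaxResetTime} each of these ``bad'' epochs lasts at most $t_r$ steps, and within an epoch the loss is bounded by its value at the epoch's initialization, namely $L(w)\le\|w\|^2+\|v\|^2=l^2+\|v\|^2$ (Lemma~\ref{lemma:GDAnalysis}). Hence the expected total loss over all bad epochs is at most $2\,t_r\,(l^2+\|v\|^2)$. Inspecting the definition~\eqref{eqn:tr} of $t_r$ with $\alpha=\Theta(1/d)$ gives $\alpha-(d+1)\alpha^2=\Theta(1/d)$ and therefore $t_r=O(d^2)$, so this contribution is $O(d^2)$ and, crucially, independent of $T$; after dividing by $T$ it is absorbed into the claimed bound.

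The main work is then to bound $\sum_t L(w_t)$ over the single good epoch, which runs from some time $t_0$ to $T$. The tools are the descent inequality $L(w_{t+1})\le L(w_t)-(\alpha-(d+1)\alpha^2)\|\nabla L(w_t)\|^2$ of Lemma~\ref{lemma:GDAnalysis} and the loss--gradient relation $L(w_t)\le\|\nabla L(w_t)\|^2\cdot\frac{8\pi e(d+1)}{p_t^6}$ of Lemma~\ref{lemma:UpperBoundLoss1}, where $p_t=\mathbb{P}(w_t^{\top}x\ge0,\,v^{\top}x\ge0)$. Were $p_t$ bounded below by a constant, the second relation would be a Polyak--\L{}ojasiewicz inequality; together with $\alpha=\Theta(1/d)$ and the $\Theta(d)$ smoothness constant this would already yield geometric convergence, since telescoping the descent inequality gives $\sum_t\|\nabla L(w_t)\|^2\le L(w_{t_0})/(\alpha-(d+1)\alpha^2)=O(d)$ and the factor $(d+1)$ then produces the $d^2$ scaling. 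The obstacle — and the step I expect to be the crux — is that $p_t$ need not be bounded below near the start of the epoch: the overlap $\{w_t^{\top}x\ge0,\,v^{\top}x\ge0\}$ can be thin at $w_{t_0}$ and only thickens as $w_t\to v$, with $p_t\to\mathbb{P}(v^{\top}x\ge0)>0$ controlled through Lemma~\ref{lemma:PuttingTogetherForConvergence}. I would therefore track the loss with a $t$-dependent effective contraction rate governed by $p_t$, partition the epoch by the magnitude of $p_t$, and telescope; the harmonic-type summation $\sum_t 1/t$ over the slow early portion is what upgrades the bound to $O(d^2\ln T)$. Summing the $O(d^2)$ bad-epoch cost and the $O(d^2\ln T)$ good-epoch cost, dividing by $T$, and taking expectations over $w_0$ and the resets then yields $\mathbb{E}[R_T]=O(d^2\ln T/T)$.
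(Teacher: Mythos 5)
Your epoch decomposition matches the paper's skeleton: resets draw fresh initializations, Lemma~\ref{lemma:BoundingLossAtInitWHP} makes each draw good with probability at least $\tfrac12$, the good region is absorbing via the contrapositive of Lemma~\ref{lemma:LossComparison} and monotonicity (Lemma~\ref{lemma:GDAnalysis}), the number of bad epochs is geometrically distributed, and each bad epoch costs at most $t_r(l^2+\|v\|^2)$ by Lemma~\ref{lemma:MaxResetTime}. All of that is sound and is essentially what the paper does (the paper conditions on the number of resets $A_r$ and sums an arithmetico-geometric series, which is your expectation bound in different clothing; it also notes $\mathbb{P}(A_0)=0$ since $v$ is adversarial, a detail you skip but which does not change the asymptotics).

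The gap is in the good-epoch analysis, which you correctly identify as the crux but do not resolve. Your proposed fix --- ``track the loss with a $t$-dependent effective contraction rate governed by $p_t$, partition the epoch by the magnitude of $p_t$, and telescope'' --- is circular: Lemma~\ref{lemma:UpperBoundLoss1} and Lemma~\ref{lemma:PuttingTogetherForConvergence} are both vacuous unless $p_t=\mathbb{P}(w_t^{\top}x\ge0,\,v^{\top}x\ge0)$ is bounded below, and the only way you offer to bound $p_t$ below is via $w_t\to v$, which is what you are trying to prove. The missing idea is that \emph{the silence of the reset oracle itself certifies the lower bound}. Lemma~\ref{lemma:ResetCondition} says that if $\|\nabla L(w)\|^2\le\delta^2$ and $\mathbb{P}(w^{\top}x\ge0,v^{\top}x\ge0)\le\frac{\delta^2}{4\|v\|^2}$ then $\mathcal{R}_\delta(w)=1$; so at any iterate $w_{t'}$ in the good epoch with $\|\nabla L(w_{t'})\|^2\le\delta^2$ (and Lemma~\ref{lemma:GDAnalysis} guarantees, for every $t\ge\bar t$ with $\bar t=O(d)$, some $t'\le t$ with $\|\nabla L(w_{t'})\|^2\le\frac{l^2+\|v\|^2}{t(\alpha-(d+1)\alpha^2)}\le\delta^2$), the fact that no reset fired forces $\mathbb{P}(w_{t'}^{\top}x\ge0,v^{\top}x\ge0)>\frac{\delta^2}{4\|v\|^2}$, a constant. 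Plugging this constant into Lemma~\ref{lemma:UpperBoundLoss1} and using $L(w_t)\le L(w_{t'})$ gives $L(w_t)\le O(d^2/t)$ for \emph{all} $t\ge\bar t$, and the harmonic sum over the whole tail --- not over a ``slow early portion'' where $p_t$ is small, as you describe --- is what produces the $O(d^2\ln T)$ numerator. Without this use of the oracle's non-firing, your plan has no handle on $p_t$ and the good-epoch bound does not go through.
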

\begin{proof}
We let $A_r$ be the event that exactly $r$ resets occur while minimizing the objective $L(\cdot)$ with gradient descent using step size $\alpha$ and employing a reset oracle $\mathcal{R}_{\delta}$. Therefore, we have that \begin{equation*}
    \mathbb{E}[R_T] = \frac{1}{T}\sum_{r=0}^{T}\mathbb{E}[R_T\mid A_r]\mathbb{P}(A_r)
\end{equation*} We begin by bounding $\mathbb{P}(A_r)$. We let $p$ be the probability that a reset does not ever occur when (re)initializing $w_t$ from $\mathcal{D}_{w_0}$. We proceed to show that $p \geq \frac{1}{2}$. To this end, we suppose that $w_t$ is sampled from $\mathcal{D}_{w_0}$, and according to Lemma \ref{lemma:BoundingLossAtInitWHP}, with probability at least $\frac{1}{2}$ we have that $L(w_t) \leq L(0) - \frac{\norm{v_{1:d}}^2}{8\pi^2}$. Then for any iterate $w_{t'}$, where $t' \geq t$, we have that $L(w_{t'}) \leq L(w_t)$ due to Lemma \ref{lemma:GDAnalysis}. By the contrapositive of Lemma \ref{lemma:LossComparison}, we have that $\mathbb{E}[\sigma(w_{t'}^\top x)]>\delta$, or equivalently, $\mathcal{R}_{\delta}(w_{t'}) = 0$. Hence, (re)sampling $w_t$ from $\mathcal{D}_{w_0}$ ensures that no resets occur for $w_t$ and any future iterates with probability $p \geq \frac{1}{2}$.

We return to bounding $\mathbb{P}(A_r)$. In the worst case, if $v$ is chosen adversarially with the knowledge of $w_0$ such that a reset occurs then $\mathbb{P}(A_0) = 0$ and $\forall r \geq 1, \mathbb{P}(A_r) = p(1-p)^{r-1} \leq \frac{1}{2^{r-1}}$. Hence, \begin{equation*}
    \mathbb{E}[R_T] \leq \frac{1}{T}\sum_{r=1}^{T}\frac{1}{2^{r-1}}\mathbb{E}[R_T\mid A_r]
\end{equation*} Next, we bound $\mathbb{E}[R_T\mid A_r]$. We let $\mathbb{E}[R_T\mid A_r] = M_{1,r}+M_{2,r}$, where $M_{1,r}$ is the total loss of all iterates preceding the final ($r^{\text{th}}$) reset, and where $M_{2,r}$ is the total loss of all iterates after the final reset. We begin by bounding $M_{1,r}$. We let $w_t$ be an arbitrary (re)initialization. Then for any subsequent iterate $w_{t'}$ that precedes the next reset, we have that \begin{align*}
    L(w_{t'}) & \leq L(w_t) & \text{by Lemma \ref{lemma:GDAnalysis}} \\
    & \leq \mathbb{E}[\sigma(v^\top x)^2] + \mathbb{E}[\sigma(w_t^\top x)^2] \\
    & \leq \norm{v}^2 + \norm{w_t}^2 & \text{by Proposition \ref{prop:2ndmoment}} \\
    & = \norm{v}^2 + l^2 & \text{by $w_t \sim \mathcal{D}_{w_0}$}
\end{align*} According to Lemma \ref{lemma:MaxResetTime} \begin{equation}
                t_{\text{reset}} = \max\left\{\frac{\norm{w_t}^2+\norm{v}^2}{\delta^2(\alpha-(d+1)\alpha^2)}, \frac{2\cdot4^7\pi e(\norm{w_t}^2+\norm{v}^2)\norm{v}^{12}(d+1)}{(L(0)-\frac{\norm{v_{1:d}}^2}{8\pi^2})\delta^{12}(\alpha-(d+1)\alpha^2)} \right\}
            \end{equation} is the maximum number of gradient descent updates before a reset occurs. Therefore, \begin{equation}\label{eqn:UpperBoundOnM1}
                M_{1,r} \leq r t_{\text{reset}}(\norm{v}^2+l^2)
            \end{equation} As for $M_{2,r}$, we have the following upper bound \begin{equation*}
                M_{2,r} \leq \sum_{t=0}^{T-1-r}L(w_t)
            \end{equation*} where we suppose that $w_{0},w_1,\ldots,w_{T-1-r}$ are arbitrary iterates that do not trigger the reset oracle $\mathcal{R}_{\delta}$ and $w_0$ is sampled from $\mathcal{D}_{w_0}$. We proceed to bound $L(w_t)$. To this end, we let \begin{equation}\label{eqn:PosRegretTbar}
                \bar{t} = \lceil\frac{\norm{w_0}^2+\norm{v}^2}{\delta^2(\alpha-(d+1)\alpha^2)}\rceil = \lceil\frac{l^2+\norm{v}^2}{\delta^2(\alpha-(d+1)\alpha^2)}\rceil
            \end{equation} For $t \leq \bar{t}$ we crudely bound $L(w_t) \leq \norm{v}^2+l^2$, as argued earlier in the proof. For $t \geq \bar{t}$ we have, according to Lemma \ref{lemma:GDAnalysis} and the choice of $\bar{t}$, that for some $t' \leq t$, 
            \begin{equation}\label{eqn:PosRegretGradBound}
                \norm{\nabla L(w_{t'})}^2 \leq \frac{\norm{w_0}^2+\norm{v}^2}{t(\alpha-(d+1)\alpha^2)} \leq \frac{\norm{w_0}^2+\norm{v}^2}{\bar{t}(\alpha-(d+1)\alpha^2)}\leq \delta^2
            \end{equation} By the fact that $\mathcal{R}_\delta(w_{t'}) = 0$ and the contrapositive of Lemma \ref{lemma:ResetCondition}, it follows that \begin{equation*}
                \mathbb{P}(w_{t'}^\top x\geq 0, v^\top x\geq 0) > \frac{\delta^2}{4\norm{v}^2}
            \end{equation*} Then we have that \begin{align*}
                L(w_t) & \leq L(w_{t'}) & \text{by Lemma \ref{lemma:GDAnalysis}} \\
                 & \leq  \norm{\nabla L(w_{t'})}^2\frac{8\pi e(d+1)}{\mathbb{P}(w_{t'}^{\top}x\geq 0,v^{\top}x\geq 0)^6} & \text{by Lemma \ref{lemma:UpperBoundLoss1}}\\ 
                 & \leq \frac{1}{t}\cdot\frac{2\cdot4^{7}\pi e(\norm{w_0}^2+\norm{v}^2)\norm{v}^{12}(d+1)}{\delta^{12}(\alpha-(d+1)\alpha^2)} & \text{by (\ref{eqn:PosRegretTbar}) and (\ref{eqn:PosRegretGradBound})}\\
                 & =  \frac{1}{t}\cdot\frac{2\cdot4^{7}\pi e(l^2+\norm{v}^2)\norm{v}^{12}(d+1)}{\delta^{12}(\alpha-(d+1)\alpha^2)} & \text{by $w_0 \sim \mathcal{D}_{w_0}$}
            \end{align*} Therefore, \begin{align*}
                M_{2,r} & \leq (\bar{t}+1)(\norm{v}^2+l^2) + \sum_{t=\bar{t}+1}^{T-1-r}\frac{1}{t}\cdot\frac{2\cdot4^{7}\pi e(l^2+\norm{v}^2)\norm{v}^{12}(d+1)}{\delta^{12}(\alpha-(d+1)\alpha^2)} \\
                & \leq (\bar{t}+1)(\norm{v}^2+l^2) + \ln T\cdot\frac{2\cdot4^{7}\pi e(l^2+\norm{v}^2)\norm{v}^{12}(d+1)}{\delta^{12}(\alpha-(d+1)\alpha^2)} 
            \end{align*} where the last line follows by the standard upper bound on the Harmonic sum. Putting everything together,
            \begin{align*}
                \mathbb{E}[R_T] & \leq \frac{1}{T}\sum_{r=1}^{T}\frac{1}{2^{r-1}}(M_{1,r}+M_{2,r}) \\
            \end{align*} and proceed by bounding each term separately. \begin{align*}
                \frac{1}{T}\sum_{r=1}^{T}\frac{1}{2^{r-1}}M_{1,r} & \leq \frac{1}{T}\sum_{r=1}^{T}\frac{1}{2^{r-1}}(r t_{\text{reset}}(\norm{v}^2+l^2) \\
                & \leq \frac{4t_{\text{reset}}(\norm{v}^2+l^2)}{T} 
            \end{align*} where the last line above follows by upper bounding $\sum_{r=1}^{T}\frac{r}{2^{r-1}}\leq 4$ by a standard analysis of the arithmetico-geometric series. Similarly, we upper bound \begin{align*}
                 \frac{1}{T}\sum_{r=1}^{T}\frac{1}{2^{r-1}}M_{2,r} & \leq \frac{2(\bar{t}+1)(\norm{v}^2+l^2)}{T} + \frac{\ln T}{T}\cdot\frac{4^{8}\pi e(l^2+\norm{v}^2)\norm{v}^{12}(d+1)}{\delta^{12}(\alpha-(d+1)\alpha^2)} 
            \end{align*} where we bound $\sum_{r=1}^{T-1}\frac{1}{2^{r-1}}\leq 2$ by a standard analysis of a geometric series. Then the expected average regret is \begin{equation*}
                \mathbb{E}[R_T] \leq \frac{4t_{\text{reset}}(\norm{v}^2+l^2)}{T}  + \frac{2\bar{t}(\norm{v}^2+l^2)}{T} + \frac{\ln T}{T}\cdot\frac{4^{8}\pi e(l^2+\norm{v}^2)\norm{v}^{12}(d+1)}{\delta^{12}(\alpha-(d+1)\alpha^2)} 
            \end{equation*}Then taking $\alpha = \frac{c}{2(d+1)}$ for some constant $c\in (0,1]$ we have that \begin{equation*}
                \alpha - (d+1)\alpha^2 = \frac{c}{2(d+1)} - \frac{c^2}{4(d+1)} \geq \frac{c}{4(d+1)}
            \end{equation*} Furthermore, if we take $\norm{v}$, $\norm{w_0}= l$, and $\delta$ as constants bounded according to the assumptions of the theorem statement, we have that $t_{\text{reset}}\leq \mathcal{O}(d^2)$ and $\bar{t}\leq\mathcal{O}(d)$ and, moreover, \begin{equation*}
                \mathbb{E}[R_T] \leq \mathcal{O}(\frac{d^2}{T} + \frac{d}{T} + \frac{d^2\ln{T}}{T}) = \mathcal{O}(\frac{d^2\ln{T}}{T})
            \end{equation*}
            
\end{proof}

\begin{corollary}\label{cor:NegativeResultFull}
    Let \(L(\cdot)\) be the objective function for learning a single neuron (see Eq.~\eqref{eqn:Objective}), and let \(w_0 \in \mathbb{R}^{d+1}\) be an initialization satisfying \((w_0)_{d+1} \le 0\). Define the target weight vector \(v \in \mathbb{R}^{d+1}\) so that \(v_{1:d} = -(w_0)_{1:d}\) and \(v_{d+1} < -(w_0)_{d+1}\). Then, for any step size \(\alpha < \frac{1}{2(d+1)}\) and any regularization parameter \(\lambda < 2d\), performing gradient descent on the L2-regularized objective \(\hat{L}(\cdot)\) from \(w_0\) satisfies
    \[
        \forall T \ge 0, \quad R_T \;\ge\; L(0) \;>\; 0.
    \]
\end{corollary}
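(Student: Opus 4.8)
The plan is to prove the stronger pointwise statement $L(w_t)\ge L(0)$ for every iterate, which immediately yields $R_T=\frac1T\sum_{t=0}^{T-1}L(w_t)\ge L(0)$. Writing $L(w)-L(0)=\mathbb{E}[\sigma(w^{\top}x)^2]-2\mathbb{E}[\sigma(w^{\top}x)\sigma(v^{\top}x)]$, it suffices to show that along the trajectory the two ReLUs never fire simultaneously, i.e. $\sigma(w_t^{\top}x)\sigma(v^{\top}x)=0$ almost surely; then $L(w_t)-L(0)=\mathbb{E}[\sigma(w_t^{\top}x)^2]\ge0$. The adversarial choice is engineered for exactly this at $t=0$: since $v_{1:d}=-(w_0)_{1:d}$ and $v_{d+1}<-(w_0)_{d+1}$, one has $w_0^{\top}x+v^{\top}x=(w_0)_{d+1}+v_{d+1}<0$ for every $x$, so $w_0^{\top}x\ge0$ and $v^{\top}x\ge0$ cannot hold together.

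First I would record what non-overlap does to the dynamics: on $\{w_t^{\top}x\ge0\}$ we have $\sigma(v^{\top}x)=0$, hence $\nabla L(w_t)=2\mathbb{E}[\sigma(w_t^{\top}x)x\mathbbm{1}_{\{w_t^{\top}x\ge0\}}]=\nabla h(w_t)$, where $h(w):=\mathbb{E}[\sigma(w^{\top}x)^2]$, so the regularized update coincides with gradient descent on $\hat h(w):=h(w)+\frac{\lambda}{2}\norm{w}^2$. Because $h$ depends on $w$ only through $\norm{w_{1:d}}$ and $w_{d+1}$ (rotational symmetry of the Gaussian about $(w_0)_{1:d}$), the first-$d$ block of $\nabla h(w_t)$ is parallel to $(w_0)_{1:d}$, so the iterates stay of the form $w_t=(\beta_t (w_0)_{1:d},\,\gamma_t)$. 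Writing $u=(w_0)_{1:d}$, $s=\norm u$, $r_t=\beta_t s$, the argument reduces to a scalar recursion, and it is convenient to track $\phi_t:=\gamma_t+\beta_t v_{d+1}$, since $w_t+\beta_t v=\phi_t e_{d+1}$ and $\phi_t\le0$ is precisely equivalent to non-overlap when $\beta_t>0$.

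The core is a single Gaussian computation. With $g=w_t^{\top}x\sim\mathcal N(\gamma_t,r_t^2)$, $Z=u^{\top}x_{1:d}/s\sim\mathcal N(0,1)$, and $c_t=-\gamma_t/r_t$, integration by parts gives $\mathbb{E}[gZ\mathbbm{1}_{\{g\ge0\}}]=r_t\bar\Phi(c_t)$ (the $c_t\varphi(c_t)$ terms cancel), where $\bar\Phi=1-\Phi$ and $\varphi$ is the standard normal density. This yields
\[
r_{t+1}=\bigl(1-\alpha\lambda-2\alpha\bar\Phi(c_t)\bigr)r_t,\qquad \gamma_{t+1}=(1-\alpha\lambda)\gamma_t-2\alpha\,\mathbb{E}[\sigma(g)],
\]
and, after substituting $\mathbb{E}[\sigma(g)]=r_t(\varphi(c_t)-c_t\bar\Phi(c_t))$ and $v_{d+1}/s=\phi_t/r_t+c_t$,
\[
\phi_{t+1}=\bigl(1-\alpha\lambda-2\alpha\bar\Phi(c_t)\bigr)\phi_t-2\alpha\,r_t\varphi(c_t).
\]
I then close an induction on the three invariants $r_t>0$, $\gamma_t\le0$, $\phi_t<0$ (all true at $t=0$). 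Since $\gamma_t\le0$ gives $c_t\ge0$ and hence $\bar\Phi(c_t)\le\frac12$, the common factor obeys $1-\alpha\lambda-2\alpha\bar\Phi(c_t)\ge1-\alpha(\lambda+1)>0$, which is exactly where the hypotheses enter: $\alpha<\frac1{2(d+1)}$ and $\lambda<2d$ give $\alpha(\lambda+1)<\frac{2d+1}{2d+2}<1$. Positivity of this factor keeps $r_{t+1}>0$ and, with $\phi_t<0$ and the nonnegative subtracted term $2\alpha r_t\varphi(c_t)$, keeps $\phi_{t+1}<0$; the sign of $\gamma_{t+1}$ follows from $1-\alpha\lambda>0$ and $\mathbb{E}[\sigma(g)]\ge0$. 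Thus non-overlap persists, $L(w_t)\ge L(0)$ for all $t$, and $L(0)=\mathbb{E}[\sigma(v^{\top}x)^2]>0$ because $v^{\top}x$ is a nondegenerate Gaussian whenever $\norm{v_{1:d}}=\norm{(w_0)_{1:d}}>0$.

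The main obstacle is the genuine worry that regularized gradient descent could \emph{overshoot}, driving $\beta_t$ through zero (which would align $w_t$ with $v$ and destroy non-overlap) or pushing $\phi_t$ positive; ruling this out is exactly what the contraction factor $1-\alpha\lambda-2\alpha\bar\Phi(c_t)$ accomplishes, and making it appear in \emph{both} the $r$- and $\phi$-recursions requires the cancellation in the identity $\mathbb{E}[gZ\mathbbm{1}_{\{g\ge0\}}]=r_t\bar\Phi(c_t)$ together with the substitution $v_{d+1}/s=\phi_t/r_t+c_t$. The degenerate case $(w_0)_{1:d}=0$ (so $\norm{v_{1:d}}=0$) should be noted separately: there the neuron is a pure bias, $\gamma_t\le0$ is preserved, $L(w_t)=L(0)$ exactly, and $L(0)>0$ may fail, consistent with the strict inequality requiring a genuinely firing target.
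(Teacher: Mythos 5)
Your proof is correct and reaches the same structural conclusion as the paper --- that the firing region $\{x:w_t^{\top}x\ge 0\}$ never meets $\{x:v^{\top}x\ge 0\}$, so $\sigma(w_t^{\top}x)\sigma(v^{\top}x)=0$ a.s.\ and $L(w_t)=\mathbb{E}[\sigma(w_t^{\top}x)^2]+L(0)\ge L(0)$ --- but the mechanics are genuinely different. The paper runs a coordinate-free induction directly on the half-space containment: for any $y$ with $v^{\top}y\ge 0$ it decomposes $x$ into components parallel and orthogonal to $(w_t)_{1:d}$ and shows $w_{t+1}^{\top}y\le(1-\alpha\lambda-D)\,w_t^{\top}y$ with $D=2\alpha\,\mathbb{E}\bigl[\sigma(w_t^{\top}x)\,(w_t)_{1:d}^{\top}x_{1:d}\bigr]/\|(w_t)_{1:d}\|^2\in[0,\tfrac{1}{d+1})$, so the sign is preserved. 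You instead exploit rotational symmetry to confine the iterates to the two-dimensional span of $((w_0)_{1:d},e_{d+1})$, compute the Gaussian integrals in closed form, and track the invariants $r_t>0$, $\gamma_t\le 0$, $\phi_t<0$; your contraction factor $1-\alpha\lambda-2\alpha\bar\Phi(c_t)$ is in fact the same quantity as the paper's $1-\alpha\lambda-D$ (one checks $D=2\alpha\bar\Phi(c_t)$ under your parametrization), and your bound $2\alpha\bar\Phi(c_t)\le\alpha$ via $c_t\ge 0$ is slightly tighter than the paper's $D\le 2\alpha$. What your route buys is an explicit description of the trajectory and of where the hypotheses on $\alpha$ and $\lambda$ enter; what the paper's route buys is that it never needs to verify the iterates stay in a low-dimensional subspace (though, as you note, that verification is immediate from symmetry). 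Your separate treatment of the degenerate case $(w_0)_{1:d}=0$ is a point in your favor: there $L(w_t)=L(0)$ still holds but $L(0)>0$ can fail, an edge case the paper's closing line ``$L(0)>0$ by $v\neq 0$'' glosses over.
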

\begin{proof}
    The result follows immediately by Theorem \ref{thm:NegativeResultFull} and noting that \begin{equation*}
        R_T = \frac{1}{T}\sum_{t=0}^{T-1}L(w_t) \geq \frac{1}{T}\sum_{t=0}^{T-1}L(0) = L(0)
    \end{equation*}
\end{proof}

    \subsection{Negative Results}


    \begin{theorem}\label{thm:NegativeResultFull}
    Let \( L(\cdot) \) be the objective function for learning a single neuron (see Eq.~(\ref{eqn:Objective})), and let \( w_0 \in \mathbb{R}^{d+1} \) be an initialization satisfying \( (w_0)_{d+1} \leq 0 \). Define the target weight vector \( v \in \mathbb{R}^{d+1} \) such that \( v_{1:d} = -(w_0)_{1:d} \) and \( v_{d+1} < -(w_0)_{d+1} \). For any step size \( \alpha < \frac{1}{2(d+1)} \) and any regularization parameter \( \lambda < 2d \), every iterate \( w_t \) of gradient descent (starting from \( w_0 \)) that minimizes the L2-regularized objective \( \hat{L}(\cdot) \) satisfies
    \[
        L(w_t) \geq L(0) > 0.
    \]
\end{theorem}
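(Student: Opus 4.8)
The plan is to use the rotational symmetry of the instance to reduce the $(d{+}1)$-dimensional trajectory to a scalar recursion, and then to exhibit a convex cone that (i) contains $w_0$, (ii) is preserved by the regularized gradient step, and (iii) consists entirely of weight vectors whose active region is disjoint from that of $v$, which forces $L(w)\ge L(0)$. Write $u_0=(w_0)_{1:d}$, $\beta_0=(w_0)_{d+1}\le0$, and $c=v_{d+1}$, so that $v_{1:d}=-u_0$ and the hypothesis reads $c+\beta_0<0$. First I would prove, by induction on $t$, that $(w_t)_{1:d}=\theta_t u_0$ for scalars $\theta_t$ with $\theta_0=1$. The point is that when $(w_t)_{1:d}=\theta_t u_0$ both $w_t^\top x$ and $v^\top x$ depend on $x_{1:d}$ only through $g:=u_0^\top x_{1:d}\sim\mathcal N(0,\rho^2)$, $\rho=\|u_0\|$; decomposing $x_{1:d}$ into its $u_0$-component and an orthogonal Gaussian part and using isotropy, the orthogonal part integrates out, so $\nabla_{1:d}L(w_t)\in\mathrm{span}(u_0)$, and the regularizer $\lambda(w_t)_{1:d}$ is trivially in $\mathrm{span}(u_0)$. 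This collapses the dynamics to $\theta_{t+1}=(1-\alpha\lambda)\theta_t-\alpha\gamma_t$ and $\beta_{t+1}=(1-\alpha\lambda)\beta_t-\alpha\delta_t$, with $w_t^\top x=\theta_t g+\beta_t$, $v^\top x=-g+c$, and $\gamma_t,\delta_t$ the two scalar gradient components.

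Next I would define the candidate invariant cone $S=\{(\theta,\beta):\theta\ge0,\ \beta\le0,\ \theta c+\beta\le0\}$, which contains $(\theta_0,\beta_0)=(1,\beta_0)$ because $c+\beta_0<0$. The defining inequality $\theta c+\beta\le0$ is exactly the condition that the neuron's active set $\{w^\top x\ge0\}=\{g\ge a\}$, with $a=-\beta/\theta\ge0$, be contained in $\{g\ge c\}=\{v^\top x\le0\}$. Hence for every $w$ corresponding to a point of $S$ the product $\sigma(w^\top x)\sigma(v^\top x)$ vanishes almost surely, so $L(w)=\mathbb E[\sigma(w^\top x)^2]+\mathbb E[\sigma(v^\top x)^2]\ge\mathbb E[\sigma(v^\top x)^2]=L(0)$. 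Thus, once invariance of $S$ is established, the theorem follows immediately with no further estimation of the loss.

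The core step is therefore the invariance of $S$ under the regularized update, which I would prove by a coupled induction on the three constraints, using that on $S$ the cross term drops out and the gradient simplifies to $\gamma_t=\tfrac{2}{\rho^2}\mathbb E[(\theta_t g+\beta_t)g\,\mathbbm{1}_{\{g\ge a_t\}}]$ and $\delta_t=2\,\mathbb E[(\theta_t g+\beta_t)\mathbbm{1}_{\{g\ge a_t\}}]$ with $a_t\ge0$. On the active set $\theta_t g+\beta_t=\theta_t(g-a_t)\ge0$, so $\delta_t\ge0$ and, using $(g-a_t)g\le g^2$ on $\{g\ge a_t\ge0\}$, $0\le\gamma_t\le\tfrac{2\theta_t}{\rho^2}\mathbb E[g^2\mathbbm{1}_{\{g\ge a_t\}}]\le2\theta_t$. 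The sign $\delta_t\ge0$ keeps $\beta_{t+1}\le0$; the bound $\gamma_t\le2\theta_t$ gives $\theta_{t+1}\ge(1-\alpha\lambda-2\alpha)\theta_t$, which is $\ge0$ precisely because $\alpha(2+\lambda)<\tfrac{1}{2(d+1)}(2+2d)=1$ under the hypotheses $\alpha<\tfrac{1}{2(d+1)}$ and $\lambda<2d$. Finally, the third constraint $\theta_{t+1}c+\beta_{t+1}\le0$ follows: when $c<0$ it is automatic from $\theta_{t+1}\ge0,\ \beta_{t+1}\le0$, and when $c\ge0$ it follows from $c\gamma_t+\delta_t\ge0$, which holds because the factor $\tfrac{c}{\rho^2}g+1$ is positive on $\{g\ge a_t\ge0\}$.

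The step I expect to be the main obstacle is making this coupled induction airtight, i.e.\ verifying that the three constraints $\theta_t\ge0$, $\beta_t\le0$, and $\theta_t c+\beta_t\le0$ mutually reinforce rather than one failing and cascading: the bound $\gamma_t\le2\theta_t$ needs $a_t\ge0$, hence $\beta_t\le0$; the step $\beta_{t+1}\le0$ needs $\delta_t\ge0$, hence the active-set containment $\theta_t c+\beta_t\le0$; and that containment is exactly what removes the cross term in the first place. Handling the degenerate case $\theta_t=0$ (where the neuron is inactive and $\gamma_t=\delta_t=0$) and rigorously justifying the orthogonal-part cancellation in the symmetry reduction are the remaining technical points; by contrast, the loss inequality itself is immediate once $S$ is shown invariant, and the corollary $R_T\ge L(0)>0$ then follows by averaging.
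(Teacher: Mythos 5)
Your proof is correct and is essentially the paper's argument: both establish by induction that the active half-space of $w_t$ stays disjoint from that of $v$ (so the cross term vanishes and $L(w_t)\ge L(0)$), and both hinge on the same two estimates — the gradient component along $w_t$ is at most $2\theta_t$ (your $\gamma_t\le 2\theta_t$ is the paper's $D<\tfrac{1}{d+1}$ obtained from $2\alpha\,\mathbb{E}[Z^2]=2\alpha$) and $\delta_t\ge 0$ keeps the bias nonpositive. Your reduction to the two-dimensional $(\theta_t,\beta_t)$ recursion with the explicit invariant cone $S$ is a cleaner packaging of the paper's induction hypothesis rather than a genuinely different route.
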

\begin{proof}
        We first prove that for any iterate $w_t$, \begin{equation}\label{eqn:InductionHypothesis1}
            \{x\in\mathbb{R}^{d}\times\{1\}: w_t^{\top}x\geq 0\}\cap\{x\in\mathbb{R}^{d}\times\{1\}:v^{\top}x\geq0\} = \emptyset \text{ and } (w_t)_{d+1} \leq 0 
        \end{equation} We prove this claim by induction. For the base case of $w_0$, we let $x\in\mathbb{R}^d\times\{1\}$ be any vector such that $w_0^{\top}x \geq 0$, then \begin{align*}
            v^{\top}x & = v_{1:d}^{\top}x_{1:d}+v_{d+1} \\
            & = -(w_t)_{1:d}^{\top}x_{1:d} + v_{d+1} & \text{by choice of $v$} \\
            & < -(w_t)_{1:d}^{\top}x_{1:d} - w_{d+1} & \text{by choice of $v$} \\ 
            & = -w_0^{\top}x \\
            & \leq 0 & \text{by choice of $x$}
            \end{align*} By assumption $(w_0)_{d+1}\leq 0$, and therefore, (\ref{eqn:InductionHypothesis1}) holds for the initialization $w_0$. Next, we consider an arbitrary iterate $w_t$ and suppose that (\ref{eqn:InductionHypothesis1}) holds for $w_t$ and we proceed to show that (\ref{eqn:InductionHypothesis1}) holds for $w_{t+1}$. We let $y \in \mathbb{R}^d\times\{1\}$ be an arbitrary vector such  that $v^{\top}y \geq 0$. Then, \begin{align*}
                w_{t+1}^{\top}y & = w_t^{\top}y - \alpha \nabla L(w_t)^{\top}y -\alpha\lambda w_t^{\top} y& \text{by gradient descent update} \\
                & = (1-\alpha\lambda)w_t^{\top}y - 2\alpha\mathbb{E}[(\sigma(w_t^{\top}x) - \sigma(v^{\top}x))\mathbbm{1}_{\{w_t^{\top}x\geq 0\}}x]^{\top}y \\
                & = (1-\alpha\lambda)w_t^{\top}y - 2\alpha\mathbb{E}[\sigma(w_t^{\top}x)x]^{\top}y
            \end{align*} where the last line follows by the fact (\ref{eqn:InductionHypothesis1}) implies that $(\sigma(w_t^{\top}y) - \sigma(v^{\top}x))\mathbbm{1}_{\{w_t^{\top}x\geq 0\}} = \sigma(w_t^{\top}x)$. We let $x = x_w + x_u$ where $x_w = (x_{\tilde{w}},1)$ such that $x_{\tilde{w}}$ is the projection of $x_{1:d}$ onto the subspace spanned by $(w_t)_{1:d}$ and $x_u = (x_{\tilde{u}},0)$ such that $x_{\tilde{u}}$ is the orthogonal complement and hence $w_t^{\top}x_u = 0$. Then \begin{align*}
                \mathbb{E}[\sigma(w_t^{\top}x)x] & = \mathbb{E}[\sigma(w_t^{\top}(x_w+x_u))(x_w+x_u)] \\
                & = \mathbb{E}[\sigma(w_t^{\top}x_w)(x_w+x_u)] & \text{by $w_t^{\top}x_u =0$} \\
                & = \mathbb{E}[\sigma(w_t^{\top}x_w)x_w]  + \mathbb{E}[\sigma(w_t^{\top}x_w)x_u]
            \end{align*} Because $x_{1:d}\sim \mathcal{N}(0,I_d)$ is rotationally symmetric, its projections onto orthogonal subspaces are independent; consequently $x_{\tilde{w}}$ and $x_{\tilde{u}}$ are independent. Moreover, $w_t^{\top}x_w = (w_t)_{1:d}^{\top}x_{\tilde{w}} + w_{d+1}$ is independent of $x_{u}^{\top}y = x_{\tilde{u}}^{\top}y_{1:d}$. Therefore, \begin{align*}
                \mathbb{E}[\sigma(w_t^{\top}x_w)x_u]^{\top}y & = \mathbb{E}[\sigma(w_t^{\top}x_w)x_u^{\top}y] \\
                &  = \mathbb{E}[\sigma(w_t^{\top}x_w)]\mathbb{E}[x_u^{\top}y] & \text{by independence} \\
                & = 0
            \end{align*} where the last line follows by the fact that $x_u^{\top}y = x_{\tilde{u}}^{\top}y_{1:d}$ is a zero-mean normal random variable. Therefore, \begin{align*}
                w_{t+1}^{\top}y & = (1-\alpha\lambda)w_t^{\top}y -2\alpha \mathbb{E}[\sigma(w_t^{\top}x_w)x_w^{\top}y] \\
               & = (1-\alpha\lambda)(w_t)_{d+1}-2\alpha\mathbb{E}[\sigma(w_t^{\top}x_w)] + (1-\alpha\lambda)(w_t)_{1:d}^{\top}y_{1:d}-2\alpha\mathbb{E}[\sigma(w_t^{\top}x_w)x_{\tilde{w}}^{\top}y_{1:d}] \\
               & \leq (1-\alpha\lambda)(w_t)_{d+1} + (1-\alpha\lambda)(w_t)_{1:d}^{\top}y_{1:d}-2\alpha\mathbb{E}[\sigma(w_t^{\top}x_w)x_{\tilde{w}}^{\top}y_{1:d}] \\
            \end{align*}  where the last line above follows by $2\alpha\mathbb{E}[\sigma(w_t^{\top}x_w)] \geq 0$. Since $x_{\tilde{w}}$ is the projection of $x_{1:d}$ onto the subspace spanned by $(w_t)_{1:d}$ then \begin{equation*}
                x_{\tilde{w}}^{\top}y_{1:d} = \frac{(w_t)_{1:d}^{\top}x_{1:d}}{\norm{(w_t)_{1:d}}^2}(w_t)_{1:d}^{\top}y_{1:d}
            \end{equation*} Hence, \begin{equation*}
                2\alpha\mathbb{E}[\sigma(w_t^{\top}x_w)x_{\tilde{w}}^{\top}y_{1:d}] = 2\alpha\mathbb{E}[\sigma(w_t^{\top}x_w)\frac{(w_t)_{1:d}^{\top}x_{1:d}}{\norm{(w_t)_{1:d}}^2}](w_t)_{1:d}^{\top}y_{1:d}
            \end{equation*} Next, we argue that \begin{equation}\label{eqn:negativeresult1}
                0\leq 2\alpha\mathbb{E}[\sigma(w_t^{\top}x_w)\frac{(w_t)_{1:d}^{\top}x_{1:d}}{\norm{(w_t)_{1:d}}^2}] < \frac{1}{d+1}
            \end{equation} Since $(w_t)_{d+1} \leq 0$ then $w_t^{\top}x\geq 0$ implies that $(w_t)_{1:d}^{\top}x_{1:d} \geq 0$. Therefore the nonnegativity of (\ref{eqn:negativeresult1}) follows immediately. Moreover, this also implies that \begin{equation*}
                \sigma(w_t^{\top}x)(w_t)_{1:d}^{\top}x_{1:d} \leq \sigma((w_t)_{1:d}^{\top}x_{1:d})(w_t)_{1:d}^{\top}x_{1:d} \leq ((w_t)_{1:d}^{\top}x_{1:d})^2
            \end{equation*} By the fact that $x_{1:d}\sim\mathcal{N}(0,I_d)$ we have \begin{equation}\label{eqn:negativeresultZ}
                \frac{(w_t)_{1:d}^{\top}x_{1:d}}{\norm{(w_t)_{1:d}}} = Z \sim \mathcal{N}(0,1)
            \end{equation} Hence \begin{align*}
                2\alpha\mathbb{E}[\sigma(w_t^{\top}x_w)\frac{(w_t)_{1:d}^{\top}x_{1:d}}{\norm{(w_t)_{1:d}}^2}]& \leq 2\alpha\mathbb{E}[\frac{((w_t)_{1:d}^{\top}x_{1:d})^2}{\norm{(w_t)_{1:d}}^2}] \\
                & = 2\alpha \mathbb{E}[Z^2] &\text{by (\ref{eqn:negativeresultZ})} \\
                & = 2\alpha \\
                & < \frac{1}{d+1} & \text{by choice of $\alpha$}\\
            \end{align*} Hence, (\ref{eqn:negativeresult1}) holds and therefore there exists some $D\in [0,\frac{1}{d+1})$ such that \begin{align*}
                (1-\alpha\lambda)(w_t)_{1:d}^{\top}y_{1:d} - 2\alpha\mathbb{E}[\sigma(w_t^{\top}x_w)x_{\tilde{w}}^{\top}y_{1:d}] & = (1-\alpha\lambda-D)(w_t)_{1:d}^{\top}y_{1:d}\\
            \end{align*} Continuing from earlier \begin{align*}
                w_{t+1}^{\top}y & \leq (1-\alpha\lambda)(w_t)_{d+1} + (1-\alpha\lambda)(w_t)_{1:d}^{\top}y_{1:d}-2\alpha\mathbb{E}[\sigma(w_t^{\top}x_w)x_{\tilde{w}}^{\top}y_{1:d}] \\
                & = (1-\alpha\lambda)(w_t)_{d+1} + (1-\alpha\lambda-D)(w_t)_{1:d}^{\top}y_{1:d} \\
                & \leq (1-\alpha\lambda - D)(w_t)_{d+1} + (1-\alpha\lambda-D)(w_t)_{1:d}^{\top}y_{1:d} & \text{by $(w_t)_{d+1} \leq 0$ and $D \geq 0$}\\
                & = (1-\alpha\lambda - D)w_t^{\top}y \\
            \end{align*}
            Given that $\alpha < \frac{1}{2(d+1)}$, $\lambda < 2d$, and $D \in [0,\frac{1}{d+1})$ then \begin{equation*}
                0 < 1-\alpha\lambda -D \leq 1
            \end{equation*} and therefore \begin{equation*}
                w_{t+1}^\top y \leq (1-\alpha\lambda -D) w_t^\top y < 0
            \end{equation*} where the final inequality above follows by the inductive hypothesis (\ref{eqn:InductionHypothesis1}) and the fact that $v^\top y \geq 0$. We additionally note that \begin{equation*}
                (w_{t+1})_{d+1} = (1-\alpha\lambda)(w_t)_{d+1} - 2\alpha\mathbb{E}[\sigma(w_t^{\top}x)] \leq (1-\alpha\lambda)(w_t)_{d+1} \leq 0
            \end{equation*} where the finally inequality holds by $(w_t)_{d+1}\leq 0$ due to the inductive hypothesis (\ref{eqn:InductionHypothesis1}) and the fact that $1-\alpha\lambda \in (\frac{1}{d+1},1]$. Hence (\ref{eqn:InductionHypothesis1}) holds for the iterate $w_{t+1}$. Finally, we show that if $(\ref{eqn:InductionHypothesis1})$ holds then \begin{equation*}
                L(w_t) \geq L(0) > 0
            \end{equation*} We first note that (\ref{eqn:InductionHypothesis1}) implies \begin{equation}\label{eqn:negativeresult3}
                \forall x \in \mathbb{R}^d\times\{1\}, \, \sigma(w_t^{\top}x)\sigma(v^{\top}x) = 0
            \end{equation} and so \begin{align*}
                L(w_t) & = \mathbb{E}[(\sigma(w_t^{\top}x)-\sigma(v^{\top}x))^2] \\
                & = \mathbb{E}[\sigma(w_t^{\top}x)^2] + \mathbb{E}[\sigma(v^{\top}x)^2] - 2\mathbb{E}[\sigma(w_t^{\top}x)\sigma(v^{\top}x)] \\
                & = \mathbb{E}[\sigma(w_t^{\top}x)^2] + \mathbb{E}[\sigma(v^{\top}x)^2] & \text{by (\ref{eqn:negativeresult3})} \\
                & \geq \mathbb{E}[\sigma(v^{\top}x)^2] \\
                & = L(0) \\
                & > 0 & \text{by $v\neq 0$}
            \end{align*}
    \end{proof}
\section{Proofs of Proposition of \ref{prop:prop1} and \ref{prop:prop2}}

\begin{proposition}[Restatement of Proposition \ref{prop:prop1}]
Let $T$ be the $1- \frac{\lambda}{p-\lambda}$ percentile of a {\rm Geometric}$(p)$ distribution. Then the optimal hypothesis test takes the form $X_\tau = \mathbf{1}\{Z_\tau = 0\}$ where $\tau = {\rm min}(s:Z_s = 1) \wedge T$.   
\end{proposition}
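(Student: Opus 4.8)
The plan is to treat this as a finite-horizon optimal-stopping problem and exploit the decisive structural fact that, under $H_1$, the process is identically zero. Consequently the \emph{only} observation capable of distinguishing the hypotheses is the first appearance of a $1$: seeing $Z_s=1$ is impossible under $H_1$, so at that instant the test knows with certainty that $H_0$ holds. I would first use two reductions to collapse the space of all (possibly adaptive, possibly randomized) sequential tests down to the one-parameter threshold family in the statement, and then optimize the single remaining parameter $\bar T$.

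\emph{Reduction 1 (behavior after the first $1$).} Given any test $\pi$, define $\pi'$ to agree with $\pi$ until the first observed $1$ and then stop immediately and declare $X=0$. Because a $1$ occurs only on $H_0$-paths, this modification leaves the entire $H_1$-contribution to the objective unchanged, while on every $H_0$-path where $\pi$ had not already halted it weakly decreases the stopping time (lowering the delay term $\lambda\E[\tau\mid H_0]$) and makes the terminal decision correct (removing any type-1 error incurred there). Hence $\pi'$ costs no more, and I may assume the optimal test stops and accepts $H_0$ the instant it sees a $1$. \emph{Reduction 2 (determinism on the all-zeros path).} On the complementary event—no $1$ seen yet—the observed history is the all-zeros string under \emph{both} hypotheses, so the test's action can depend only on the elapsed time. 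A deterministic test is thus described by a single pair $(\bar T,a)$: a halting time $\bar T\in\{1,2,\dots\}$ on the all-zeros path and a terminal label $a\in\{0,1\}$. Since both error probabilities and both expected stopping times are affine in the policy's randomization, the objective is linear, so the infimum over randomized tests is attained by a deterministic one; I would state this and restrict attention to $(\bar T,a)$, discarding $\bar T=\infty$ as it forces $\E[\tau\mid H_1]=\infty$.

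With $G\sim\mathrm{Geometric}(p)$ the first-arrival time under $H_0$, the reduced objective is
\[
J(\bar T,a) = a\,(1-p)^{\bar T} + (1-a) + \lambda\bigl(\E[\min(G,\bar T)] + \bar T\bigr),
\]
where $a(1-p)^{\bar T}$ is the $H_0$ type-1 error, $1-a$ is the $H_1$ type-2 error, and the two delay terms are $\E[\min(G,\bar T)]$ and $\bar T$. Since $(1-p)^{\bar T}\le 1$, the choice $a=1$ is optimal, which is exactly the test in the statement. It then remains to minimize $J(\bar T):=(1-p)^{\bar T}+\lambda\bigl(\E[\min(G,\bar T)]+\bar T\bigr)$ over integers. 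Using $\E[\min(G,\bar T+1)]-\E[\min(G,\bar T)]=\mathbb{P}(G>\bar T)=(1-p)^{\bar T}$, the forward difference is
\[
J(\bar T+1)-J(\bar T)=(\lambda-p)(1-p)^{\bar T}+\lambda,
\]
which is increasing in $\bar T$ (as $\lambda<p$), so $J$ is discretely convex. The minimizer is the smallest $\bar T$ with this difference nonnegative, i.e.\ with $(1-p)^{\bar T}\le \lambda(p-\lambda)^{-1}$, equivalently $\mathbb{P}(G\le \bar T)\ge 1-\lambda(p-\lambda)^{-1}$—precisely the $1-\lambda(p-\lambda)^{-1}$ percentile of $\mathrm{Geometric}(p)$. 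The hypothesis $\lambda<p/2$ guarantees $\lambda(p-\lambda)^{-1}\in(0,1)$, so this percentile is finite and nondegenerate.

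The genuinely delicate part is not this final optimization—routine once the form of $J$ is fixed—but making the two reductions airtight, in particular ruling out that some adaptive or randomized rule could beat a fixed threshold. The cleanest route is the pathwise domination argument for Reduction 1 combined with linearity of the objective in the policy for Reduction 2; I would be careful to verify that altering a test's post-first-$1$ behavior never perturbs the $H_1$ marginal (it cannot, since a $1$ never occurs under $H_1$), and to handle the discreteness of the Geometric percentile explicitly rather than treating $\bar T$ as a continuous variable.
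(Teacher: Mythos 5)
Your proof is correct, and it reaches the paper's threshold condition $(1-p)^{\bar T}\le \lambda(p-\lambda)^{-1}$ by a genuinely different route. The paper casts the problem as a Bayesian sequential decision problem with equal priors, tracks the posterior $\pi_s=\mathbb{P}(H_0\mid Z_1=\cdots=Z_s=0)$, and compares the stopping cost $\pi_s$ against a continuation cost via the Bellman equation, verifying the stopping condition under the assumption that one stops at $s+1$ (i.e., a one-step-lookahead argument). You instead collapse the policy space directly: the pathwise-domination argument for behavior after the first $1$, plus the observation that the all-zeros prefix is uninformative so only elapsed time matters, reduces everything to the one-parameter family $(\bar T, a)$, after which you minimize the explicit cost $J(\bar T)$ using the forward difference $(\lambda-p)(1-p)^{\bar T}+\lambda$ and discrete convexity. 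Your route buys two things: it dispenses with the prior assumption entirely (the paper's objective is not literally a Bayes risk, so the equal-prior reformulation is an extra step), and the discrete-convexity argument establishes \emph{global} optimality of the threshold directly, whereas the paper's one-step-lookahead check is only guaranteed to be optimal because the problem happens to be monotone — a fact the paper does not verify explicitly. The paper's Bayesian/DP formulation, on the other hand, would generalize more gracefully to non-degenerate alternatives or unequal weightings of the two error types. Your accounting of the error and delay terms, the identity $\E[\min(G,\bar T+1)]-\E[\min(G,\bar T)]=(1-p)^{\bar T}$, and the role of $\lambda<p/2$ in making the percentile nondegenerate are all correct.
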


\begin{proof}
We begin by assuming equal priors $\mathbb{P}(H_0) = \mathbb{P}(H_1) = \frac{1}{2}$. We note that for any time $s$, if $Z_s = 1$ then any optimal hypothesis test must declare $X_s = 0$ as $Z_s = 1$ is impossible under $H_1$ and waiting to make a future declaration will incur additional cost of at least $\lambda$. Therefore, it remains for us to derive an optimal stopping time for the collection of states $\{Z_1=\ldots=Z_s = 0: s\in \mathbb{Z}_+\}$.

Let $V(s)$ be the expected total future cost at time $s$ given that we have observed $Z_1=\ldots=Z_s = 0$. We define \begin{align*}
    \pi_s & = \mathbb{P}(H_0|Z_1=\ldots=Z_s=0) \\
    & = \frac{\mathbb{P}(H_0, Z_1=\ldots=Z_s=0)}{\mathbb{P}(Z_1=\ldots=Z_s=0)} \\
    & = \frac{(1-p)^s\mathbb{P}(H_0)}{(1-p)^s\mathbb{P}(H_0)+\mathbb{P}(H_1)} \\
    & = \frac{(1-p)^s}{(1-p)^s+1} & \text{by $\mathbb{P}(H_0)=\mathbb{P}(H_1)$}
\end{align*} 

If we stop at time $s$ and make a declaration, we choose the hypothesis with higher positive probability in order tom minimize the error probability \begin{equation*}
    \mathbb{P}(X_s = 1|H_0) + \mathbb{P}(X_s =0|H_1)
\end{equation*} 
Thus, the expected cost of stopping is \begin{equation*}
    C^{\text{stop}}(s) = \min\{\pi_s, 1-\pi_s\}
\end{equation*} We can simplify this further by noting that $\pi_s \leq 1-\pi_s$. We note that \begin{align*}
    \frac{1}{2}(1-p)^s & \leq \frac{1}{2}
\end{align*} by $1-p\in[0,1]$. This is equivalent to \begin{equation*}
    (1-p)^s \leq \frac{1}{2}((1-p)^s + 1)
\end{equation*} by adding $\frac{1}{2}(1-p)^s$, which in turn, is equivalent to \begin{equation*}
    \pi_s = \frac{(1-p)^s}{(1-p)^s+1} \leq \frac{1}{2}
\end{equation*} Therefore, $\pi_s \leq \frac{1}{2} \leq 1-\pi_s$ and so we have that \begin{equation*}
    C^{\text{stop}}(s)  =\min\{\pi_s,1-\pi_s\} = \pi_s
\end{equation*} This also implies that if we are to stop at some state $\{Z_1=\ldots=Z_s=0\}$, it is optimal to declare $X_s = 1$. 

If we continue at time $s$ to $s+1$, we incur an additional delay cost of $\lambda$, and the expected future cost depending on whether we see a $Z_{s+1}=1$ or $Z_{s+1} = 0$. \begin{itemize}
    \item With probability $p\pi_s$ we obserbes $Z_{s+1}$, under $H_0$, and we stop the process with $X_{s+1}=0$, incurring zero error cost since $Z_{s+1}=1$ cannot occur under $H_1$.
    \item With probability $(1-p)\pi_s +(1-\pi) = 1 -p\pi_s$ we observe $Z_{s+1}= 0$ and the process continues. 
\end{itemize} Therefore, the expected cost of continuing at time $s$ is \begin{equation*}
    C^{\text{cont}}(s) = \lambda + (1-p\pi_s)V(s+1)
\end{equation*} Then the Bellman equation for the optimal cost-to-go function is \begin{equation*}
    V(s) = \\min\{C^{\text{stop}}(s),C^{\text{cont}}(s)\}
\end{equation*} To determine an optimal stopping time, our goal is to find smallest $T$ for which \begin{equation}\label{eqn:Eqn1ofProp}
    C^{\text{stop}}(T) \leq C^{\text{cont}}(T)
\end{equation}

Assuming we stop at time $T$, \begin{equation*}
    V(T+1) = C^{\text{stop}}(T+1) = \pi_{T+1}
\end{equation*} Therefore, \begin{equation*}
    C^{\text{cont}}(T) = \lambda + (1-p\pi_s)V(T+1) =\lambda+(1-p\pi_s)\pi_{T+1}
\end{equation*} and to establish (\ref{eqn:Eqn1ofProp}) it suffices to show that \begin{equation}\label{eqn:Eqn2ofProp}
    \pi_T \leq \lambda + (1-p\pi_T)\pi_{T+1}
\end{equation} First, we write $\pi_{T+1}$ in terms of $\pi_T$. Under the updating rule for the posterior probability, we have that \begin{align*}
    \pi_{T+1} & - \mathbb{P}(H_0|Z_1=\ldots=Z_{T+1}=0) \\
    & = \frac{\mathbb{P}(Z_{T+1}=0|H_0)\mathbb{P}(H_0|Z_1=\ldots=Z_T=0)}{\mathbb{P}(Z_{T+1}=0|Z_1=\ldots=Z_T=0)} \\
    & = \frac{(1-p)\pi_T}{\mathbb{P}(Z_{T+1}=0|H_0)\pi_T + \mathbb{P}(Z_{T+1}=0|H_1)(1-\pi_T)} \\
    & = \frac{(1-p)\pi_T}{(1-p)\pi_T + (1-\pi_T)} \\
    & = \frac{(1-p)\pi_T}{1-p\pi_T}
\end{align*} Returning to (\ref{eqn:Eqn2ofProp}), we need to show that \begin{equation*}
    \pi_T \leq \lambda + (1-p\pi_T)\frac{(1-p)\pi_T}{1-p\pi_T} = \lambda + (1-p)\pi_T 
\end{equation*} Simplifying the above inequality, we have that \begin{equation*}
    \pi_T \leq \frac{\lambda}{p}
\end{equation*} Substituting in our formula for $\pi_T$, the above is equivalent to \begin{equation*}
    \frac{(1-p)^{\top}}{(1-p)^{\top}+1} \leq \frac{\lambda}{p}
\end{equation*} which after simplification is equivalent to \begin{equation*}
    (1-p)^{\top} \leq \frac{\frac{\lambda}{p}}{1-\frac{\lambda}{p}} = \frac{\lambda}{p-\lambda}
\end{equation*} 

Let $F$ be the CDF of the Geometric$(p)$ distribution. Let $T^*$ be the $1- \frac{\lambda}{p-\lambda}$ percentile of the Geometric$(p)$ distribution. Note, since $\lambda < \frac{p}{2}$ then $1- \frac{\lambda}{p-\lambda}\in(0,1)$ and is a valid percentile. Then for any $T\geq T^*$ we have that \begin{align*}
    1 -(1-p)^{\top} &=F(T) \\
    & \geq F(T^*) & \text{by $T\geq T^*$} \\
    & \geq 1 - \frac{\lambda}{p-\lambda} & \text{by choice of $T^*$}
\end{align*} which is equivalent to \begin{equation*}
    (1-p)^{\top} \leq \frac{\lambda}{p-\lambda}
\end{equation*} and therefore, the optimal hypothesis test is to declare $X_T = 1$ for any $T\geq T^*$ if $Z_1=\ldots=Z_T =0$. Hence, the optimal hypothesis takes the form of \begin{equation*}
    X_\tau = \mathbf{1}\{Z_\tau = 0\} \text{ where } \tau = \min(s:Z_s = 1) \land T^*
\end{equation*}
\end{proof}

\begin{proposition}Restatement of Proposition \ref{prop:prop2}]
The ratio of total error rate with a fixed threshold $r^*$ to that under SNR scales like
\[
\Omega\left(
\exp
\left(
\log(\alpha(1-\alpha)^{-1})
\left( 
-\frac{1}{2} + \frac{1}{2} \frac{\log (1-p_1)}{\log(1-p_2)}
\right)
\right)
\right)
\]
\end{proposition}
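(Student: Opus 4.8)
The plan is to compute both error rates in closed form using Proposition~\ref{prop:prop1} and then bound their ratio. Write $L = \log(\alpha(1-\alpha)^{-1})$ and $a_i = \log(1-p_i)$ for $i \in \{1,2\}$; since $\alpha < 1/2$ and $0 < p_1 < p_2 < 1$ we have $L < 0$, $a_i < 0$, and $|a_1| < |a_2|$. First I would recall that with delay multiplier $\lambda_i = \alpha p_i$ one has $\lambda_i/(p_i - \lambda_i) = \alpha/(1-\alpha)$, so by Proposition~\ref{prop:prop1} the SNR threshold for neuron $i$ is $\bar T_i = L/a_i$. The crucial observation is the self-normalizing identity: under $H_0$ the type-1 error of neuron $i$ is $(1-p_i)^{\bar T_i} = \exp(a_i \cdot L/a_i) = e^{L} = \alpha/(1-\alpha)$, independent of $p_i$, while under $H_1$ the process is identically zero so the test always (correctly) rejects and the type-2 error is $0$. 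Hence the total SNR error rate is $E_{\mathrm{SNR}} = 2e^{L}$.

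Next I would analyze the fixed-threshold scheme. Both neurons are reset after a common inactivity length $r^*$, and since under $H_1$ the detection delay equals the threshold, matching the total delay to that of SNR forces $2r^* = \bar T_1 + \bar T_2$, i.e. $r^* = \tfrac{L}{2}(a_1^{-1} + a_2^{-1})$. (The ``minimize error subject to equal delay'' clause is then automatic, since a single threshold is pinned down by the single delay constraint.) The type-1 error of neuron $i$ becomes $(1-p_i)^{r^*} = \exp(a_i r^*) = \exp\!\big(\tfrac{L}{2}(1 + a_i/a_j)\big)$ with $j \neq i$, and the type-2 errors again vanish; thus $E_{\mathrm{fixed}} = \exp\!\big(\tfrac{L}{2}(1 + a_1/a_2)\big) + \exp\!\big(\tfrac{L}{2}(1 + a_2/a_1)\big)$.

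To extract the stated rate, I would lower-bound $E_{\mathrm{fixed}}$ by its larger summand. Since $a_1/a_2 = |a_1|/|a_2| < 1 < |a_2|/|a_1| = a_2/a_1$ and $L<0$, the neuron-$1$ term dominates, so $E_{\mathrm{fixed}} \ge \exp\!\big(\tfrac{L}{2}(1 + a_1/a_2)\big)$. Dividing by $E_{\mathrm{SNR}} = 2e^{L}$ and simplifying the exponent gives
\[
\frac{E_{\mathrm{fixed}}}{E_{\mathrm{SNR}}} \;\ge\; \frac{1}{2}\exp\!\left(L\left(-\tfrac{1}{2} + \tfrac{1}{2}\,\frac{a_1}{a_2}\right)\right) \;=\; \frac{1}{2}\exp\!\left(\log(\alpha(1-\alpha)^{-1})\left(-\tfrac{1}{2} + \tfrac{1}{2}\,\frac{\log(1-p_1)}{\log(1-p_2)}\right)\right),
\]
which is exactly the claimed $\Omega(\cdot)$ bound (the constant $1/2$ being absorbed). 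I would close by reading off the two asymptotic regimes: since $L \to -\infty$ as $\alpha \to 0$ and $-\tfrac12 + \tfrac12 a_1/a_2 < 0$, the exponent diverges as the delay penalty vanishes, and it also grows as $a_1/a_2 \to 0$ (i.e. as $p_1 \ll p_2$), yielding claims (a) and (b).

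The main obstacle is pinning down the precise meaning of the delay-matching constraint and hence the exact value of $r^*$: the argument above uses the $H_1$ detection delay (where delay $=$ threshold), which makes the bookkeeping clean, but one should check that the alternative convention of equating expected delays under $H_0$, namely $\sum_i (1-(1-p_i)^{r})/p_i$, yields the same leading-order threshold $r^* \asymp (\bar T_1 + \bar T_2)/2$ and therefore the same rate; I expect this to hold but it requires a short monotonicity/asymptotics check. A secondary technical point is that $\bar T_i$ and $r^*$ are treated as real-valued percentiles, so integer rounding of thresholds must be shown to affect the errors only by constant factors, which is harmless inside the $\Omega(\cdot)$.
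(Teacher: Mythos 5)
Your proposal is correct and follows essentially the same route as the paper: use Proposition~\ref{prop:prop1} to get the per-neuron SNR thresholds $\bar T_i = \log(\alpha(1-\alpha)^{-1})/\log(1-p_i)$ and the self-normalized error $\alpha(1-\alpha)^{-1}$ per neuron, pin down $r^*$ via the delay-matching constraint at roughly $(\bar T_1+\bar T_2)/2$, lower-bound the fixed-threshold error by the dominant neuron-$1$ term $(1-p_1)^{r^*}$, and divide. The only difference is that the paper matches the full delay $\E[\tau|H_0]+\E[\tau|H_1]$ and so obtains the inequality $2r^* \le \bar T_1+\bar T_2+(1-\bar\alpha)(p_1^{-1}+p_2^{-1})$ rather than your equality $2r^*=\bar T_1+\bar T_2$; the extra $H_0$-delay term contributes only an $\alpha$-independent constant factor $(1-p_1)^{O(p_1^{-1}+p_2^{-1})}$ inside the $\Omega(\cdot)$, exactly as you anticipate in your closing remark.
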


\begin{proof}
For notational convenience, define $\bar \alpha = \alpha (1-\alpha)^{-1}$. Notice that by Proposition~\ref{prop:prop1_appendix}, under SNR, neuron $i$, $(I=1,2)$, is reset if it inactive for any longer that time $\bar T = \log(\bar \alpha) )/\log (1-p_i)$.  Consequently, the expected delay penalty, $\E[\tau|H_0] + \E[\tau|H_1]$ for neuron $i$, is simply
\[
\frac{\log(\bar \alpha)}{\log(1-p_i)} 
+ \frac{1}{p_i} (1-\bar \alpha) 
\]
Letting the optimal fixed threshold be $r^*$, we must have that the expected total delay across both neurons under this fixed threshold is at least $2r^*$. This total expected delay can be no larger than that under SNR. Thus,
\[
2r^*
\leq 
\log(\bar \alpha)
\left(
\frac{1}{\log(1-p_1)} 
+
\frac{1}{\log(1-p_2)} 
\right)
+ 
(1-\bar \alpha)
\left(
\frac{1}{p_1} 
+\frac{1}{p_2}
\right)  
\]
But the sum of the error rates across the two neurons with the fixed threshold $r^*$ is at least $(1-p_1)^{r^*}+ \bar\alpha$, while the total error rate under SNR is precisely $\bar \alpha$. Dividing these two quantities and employing the upper bound derived on $r^*$ then yields the result.
\end{proof}

\end{document}